\documentclass{article}

\usepackage{microtype}
\usepackage{graphicx}
\usepackage{subcaption}
\usepackage{booktabs} 

\usepackage{hyperref}

\usepackage[preprint]{icml2026}

\usepackage{amsmath}
\usepackage{amssymb}
\usepackage{mathtools}
\usepackage{amsthm}
\usepackage{algorithm}
\usepackage{algorithmic}
\usepackage{abbe}
\usepackage{subcaption}
\usepackage{thmtools}

\usepackage[capitalize,noabbrev]{cleveref}

\theoremstyle{plain}
\newtheorem{theorem}{Theorem}

\newtheorem{lemma}[theorem]{Lemma}

\theoremstyle{definition}
\newtheorem{definition}[theorem]{Definition}

\theoremstyle{remark}

\usepackage[textsize=tiny]{todonotes}

\icmltitlerunning{Dominant Arm Identification with Mixing and Recycling Observed Samples}

\begin{document}

\twocolumn[
  \icmltitle{Dominant Arm Identification with Mixing and Recycling Observed Samples}



\icmlsetsymbol{equal}{*}
\begin{icmlauthorlist}
    \icmlauthor{Jonghyun Sim}{cau}
    \icmlauthor{Wonyoung Kim}{cau}
\end{icmlauthorlist}

\icmlaffiliation{cau}{Chung-Ang University, Seoul, Republic of Korea}

\icmlcorrespondingauthor{Wonyoung Kim}{wyk7@cau.ac.kr}

  \icmlkeywords{Machine Learning, ICML}

  \vskip 0.3in
]



\printAffiliationsAndNotice{}  

\begin{abstract}
We study the problem of identifying the dominant arm in multi-armed bandits, where the objective is to find the action with the highest probability of exceeding the realized rewards of all other actions.
Conventional mean-based and pairwise comparison-based algorithms often fail to identify the arm with the highest realized reward.
To address this challenge, we introduce a novel dominant arm criterion and an efficient estimator with theoretical guarantees. 
Our approach relies on two key technical innovations: (i) a dominance score criterion that an arm beats the locally dominant over the partitioned reward space and (ii) a joint mixing and recycling mechanism coupled with a doubly robust estimator that guarantees simultaneous convergence of the empirical distribution functions for all arms.
These key innovations pave a way to efficient computation of global arm dominance.
Our proposed elimination algorithm identifies the best dominant arm with nearly optimal rate of sample complexity.
Numerical experiments demonstrate that our algorithm consistently achieves exact recovery of the true dominant arm, outperforming existing baselines.
\end{abstract}

\section{Introduction}
Comparing the best option under uncertainty is a fundamental problem in statistical decision-making. 
It has broad applications in online experimentation
\citep{scott2015multiarmed}, personalized recommendation
\citep{li2010contextual}, adaptive treatment allocation
\citep{tomkins2021intelligent}, and portfolio selection
\citep{huo2017risk}.
In these settings, a decision-maker observes noisy outcomes from multiple arms and aims to identify the most preferable arm according to a suitable comparison criterion. 
However, the notion of the ``best'' arm is not always obvious, especially when reward distributions differ not only in their means but also in their tail behaviors.

Existing comparison criteria include mean rewards, risk-sensitive objectives, quantile-based criteria, and pairwise winning probabilities. 
While these criteria provide useful notions of optimality, they either reduce each reward distribution to a scalar summary or rely on pairwise comparisons that may not yield a globally coherent ordering among multiple arms. 
These limitations become particularly important when reward distributions have crossing cumulative distribution functions (CDFs) or exhibit heterogeneous tail behavior.

To identify the dominant arm based on $K$ reward random variables
$R_1,\ldots,R_K$ of which distributions are unknown, we may consider the
maximizer of the joint probability that one arm dominates the other arms
$\PP(R_a >\max_{a'\neq a} R_{a'})$.
This quantity is useful when finding the action that has the highest realized
reward at each round rather than the highest cumulative reward, or equivalently
the highest expected reward.
For example, in real-time bidding, each advertising impression is allocated
through a separate auction, and an advertiser displays its advertisement only
when its bid exceeds all competing bids \citep{ghosh2020scalable}. Similarly, when a consumer chooses a product or a traveler chooses a
destination, only one alternative is selected. Random-utility models represent
this choice by the probability that the perceived utility of one alternative
exceeds that of every other alternative \citep{cascetta2009dominance}. In algorithm selection, only one solver may be deployed for each incoming
problem instance, and performance profiles report how often a solver attains
the best realized performance among all competing solvers
\citep{dolan2002benchmarking}. Thus, when an advertising impression, a consumer
choice, or a problem instance is viewed as a round, the relevant objective is to identify
the candidate most likely to produce the largest realized outcome in that round rather than
the candidate with the largest mean or expected cumulative reward.
However, computing the joint probability of winning all other arms requires the estimation of $K$ dimensional distribution and a sample size exponential in $K$.

In order to compare the random outcomes, stochastic dominance (defined as a random variable $R_a$ dominates $R_b$ if $\mathbb{P}(R_a>r) \ge \mathbb{P}(R_b>r)$ for all $r\in \mathbb{R}$) provides a natural distributional ordering. 
However, in many practical scenarios, no single arm stochastically dominates all others.
Specifically, the reward distributions often have crossing CDFs and one arm may have a larger dominating probability $\PP(R_a>r)$ in one region of the reward space, while another arm may be preferable in a different region.
As a result, stochastic dominance can be too stringent as a practical criterion for best-arm identification, even though it provides a natural distributional ordering when it holds. 
This motivates a more flexible and tractable criterion that preserves the distributional intuition of stochastic dominance while allowing meaningful comparison when no single arm uniformly dominates the others.

In this work, we develop a computationally efficient dominance score that measures how much an arm dominates the others.
The novel dominance score is compatible with any kind of reward distributions, including heavy-tailed distributions and discrete random variables.
Instead of requiring one arm to stochastically dominate all other arms over the entire real line, our approach partitions the reward space $\RR$ into disjoint regions where different arms are locally dominant in terms of survival probabilities $\Gcal_a:=\{r\in\RR:\PP(R_a >r)\ge\max_{a^\prime\neq a}\PP(R_{a^{\prime}}>r)\}$. 
We then define a dominance score that evaluates how well each arm performs relative to the locally dominant arm across these regions. This construction allows the proposed criterion to capture distributional features such as crossing CDFs, tail behavior, and heterogeneous reward profiles that may be overlooked by mean-based comparisons.

The proposed dominance score is designed to identify arms that are not merely superior on average, but are competitive across different realized values of the reward space.
For efficient estimation, we adopt recycling the explored observation samples and mixing it with the current reward sample.
While reusing the previous sample incurs dependency among samples which we resolve by employing the auxiliary model to improve efficiency.
The auxiliary model is based on the arm-selection probability model (known to the learner) and the imputation model (estimable with the observed samples). 
With a sufficient number of burn-in samples, the proposed estimator converges faster than the conventional estimators over all arms.
Although one arm is sampled less than the other arms, the estimator uses the previously sampled rewards to impute the unselected (and thus missing) rewards to obtain simultaneous convergence over all arms.

Our main contributions are summarized as follows:
\begin{itemize}
    \item We propose a novel dominant arm criterion based on local distributional dominance (Definition~\ref{def:dominance}). 
    The proposed dominance score generalizes the intuition of stochastic dominance by partitioning the reward space and comparing the performance of the locally dominant arm.
    We provide a lower bound for the sample complexity to identify the best dominance score arm (Theorem~\ref{thm:sample_complexity_lower_bound}).
    \item We develop a novel estimator for the distribution function that converges simultaneously over all candidate arms for each round (Corollary~\ref{cor:uniform_estimator_rate}).
    Our estimator gains information of all arms by mixing the recycled sample  and overcomes the dependency among samples by constructing the doubly robust estimator (Theorem~\ref{thm:dr_estimator_error_bound}).
    \item We propose doubly robust dominance score elimination (\texttt{DRDSE}) algorithm (Algorithm~\ref{alg:DRDSE}) that correctly identifies the best dominance score arm with guaranteed nearly optimal sample complexity bound (Theorem~\ref{thm:sample_complexity}).
    \item We provide experimental results demonstrating that the proposed algorithm correctly identifies the best dominance score arm with fewer samples.
\end{itemize}

\section{Problem Formulation}

We consider a stochastic multi-armed bandit problem with $K$ arms. 
At each round $t$, a decision-maker selects an arm $a_t \in [K]:=\{1, \ldots, K\}$ and observes a reward $R_{a_t, t}$, drawn independently from an unknown 
distribution $\mathbb{P}_a$.
The goal is to identify the best arm according to a suitable optimality criterion.

A natural criterion for the best arm is the highest mean reward which leads to the classical best-arm identification problem. 
However, mean-based criteria may fail to capture important distributional features and the highest realized reward.
Alternative criteria include quantile-based objectives, risk-sensitive measures, and pairwise winning probabilities $\mathbb{P}(R_a > R_{a'})$ for $a \neq a'$.
While pairwise winning probabilities provide a richer notion of comparison 
than the mean, estimating $\mathbb{P}(R_a > R_{a'})$ for all pairs 
requires sample complexity that scales proportionally to $K^2$, as each of 
the $\binom{K}{2}$ pairs must be estimated separately. 
Furthermore, pairwise comparisons may not yield a globally coherent ordering among arms. 
This motivates a criterion that aggregates distributional information across 
all arms simultaneously, without requiring pairwise estimation.

We propose the following dominance score for the dominant arm identification.
\begin{definition}
\label{def:dominance}
For each $k\in[K]$, let $\mathcal{G}_{k} = \{r \in \mathbb{R} : \mathbb{P}(R_{k} > r) \ge \max_{k^{\prime} \neq k} \mathbb{P}(R_{k^{\prime}} > r) \}$ denote the region where the distribution of action $k$ dominates all other arms. Ties are broken arbitrarily such that $\mathcal{G}_{k} \cap \mathcal{G}_{k^{\prime}} = \emptyset$ for $k \neq k^{\prime}$, ensuring that $\{ \mathcal{G}_{k} \}_{k=1}^K$ forms a partition of $\mathbb{R}$. 
We define the dominance score of arm $a$ as:
\begin{equation*}
\begin{aligned}
    D_{a} &:= \sum_{k=1}^{K} \mathbb{P}(R_{k} \le R_{a}, R_{a} \in \mathcal{G}_{k}) \\
    &= \sum_{k=1}^{K} \int_{\mathcal{G}_{k}} \mathbb{P}(R_{k} \le r) \, d\mathbb{P}_{a}(r).
    \label{eq:dominance_score}
\end{aligned}
\end{equation*}
\end{definition}
The dominance score represents the performance of an action that transcends simple mean-based comparisons by evaluating the dominance based on the distributional properties across the entire reward space. 
Our approach begins by partitioning the real line into disjoint regions where each specific arm is stochastically dominant, meaning that in a given region, a particular arm has the highest probability of exceeding a threshold compared to its peers. 
By defining the dominance score as the sum of the probabilities that arm $a$ outperforms the locally dominant arm within each partition, the metric captures nuanced behavior in the tails and segments of the reward distribution that traditional expected values might overlook. 
This formulation is particularly advantageous in settings with crossing CDFs, as it relies on the relative ranks of rewards rather than their absolute magnitudes, providing a scale-invariant objective.
Ultimately, the dominance score provides a comprehensive framework for multi-armed bandit problems where the goal is to identify arms that are not just superior on average, but are consistently competitive across the diverse realized values of the reward space.

\subsubsection{Example}
\label{subsec:four_arm_example}

We provide a simple example with four arms where the Condorcet winner,
the Borda winner, and the dominance-score-maximizing arm are all
distinct. The reward distributions and detailed calculations are
provided in Appendix~\ref{app:example_calculations}.

The Condorcet winner is an arm $a$ that beats every other arm
pairwise:
\[
\mathbb{P}(R_a>R_b)>0.5
\qquad
\text{for all } b\neq a.
\]
The Borda score is defined as
\[
B_a=\sum_{b\neq a}\mathbb{P}(R_a>R_b),
\]
and the Borda winner is an arm maximizing $B_a$. The best mean arm is
an arm maximizing $\mathbb{E}[R_a]$. The dominance-score-maximizing
arm is an arm maximizing the proposed dominance score $D_a$.

Table~\ref{tab:example-winfreq} illustrates the joint winning
probabilities of the best arms under the Condorcet, Borda, best-mean,
and proposed dominance-score criteria.

\begin{table}[ht]
\centering
\begin{tabular}{lc}
\toprule
Arm $a$ & $\mathbb{P}\big(R_a > R_b\ \forall b\neq a\big)$ \\
\midrule
1 (Condorcet Winner) & $0.377$ \\
2 (Borda Winner \& Best mean) & $0.203$ \\
3 (Proposed Dominance Score) & $\mathbf{0.420}$ \\
\bottomrule
\end{tabular}
\caption{Probability that each arm $a$ yields the largest reward when all arms are drawn simultaneously in Example.}
\label{tab:example-winfreq}
\end{table}

This example illustrates that the proposed dominance score can select
a different arm from both the Condorcet criterion and the Borda
criterion.

We provide a problem-dependent lower bound for the identification of the arm that maximizes the dominance score.

\begin{restatable}[Sample Complexity Lower Bound]{theorem}{thmSampleComplexityLowerBound}
\label{thm:sample_complexity_lower_bound}
For any given dominance score $\{D_a\}_{a=1}^{K}$, there exist $K$ distributions for the rewards such that any algorithm requires at least
\[
\sum_{a \neq a_{\star}} \frac{D_{a}^{2}}{12 \Delta_{a}^{2}} \log\frac{1}{4\delta}
\]
number of samples to correctly identify the optimal dominance score arm $a_{\star}$ with probability at least $1 - \delta$.
\end{restatable}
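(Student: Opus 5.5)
The plan is a change-of-measure argument in the style of Kaufmann, Cappé and Garivier, applied to a hard instance built so that the dominance scores take the prescribed values $\{D_a\}$. Here $\Delta_a := D_{a_\star} - D_a$ is the gap.

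\textbf{Step 1: the instance.} Choose distributions whose dominance scores reduce to simple Bernoulli-type parameters. Concretely, let each arm $a$ be supported on a common two-point or small finite set. The arms are nested so that one arm, or a fixed partition structure, determines the regions $\mathcal{G}_k$. Then $D_a$ is a single parameter of arm $a$'s law, for instance $D_a = \mathbb{P}(R_a \ge c)$ for a threshold $c$ lying in the locally dominant region of a reference arm. This makes arm $a$ effectively a Bernoulli with mean $D_a$. Small perturbations of one arm then move only that arm's score, provided the perturbation is small enough that the partition $\{\mathcal{G}_k\}$ is unchanged or changes in a controlled way.

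\textbf{Step 2: the alternatives.} For each $a \neq a_\star$, build an alternative instance $\nu^{(a)}$ that changes only arm $a$'s law. The new score becomes $D_a' = D_a + (1+\epsilon)\Delta_a$, so that $a$ becomes the unique optimal arm. Apply the transportation lemma, $\sum_b \mathbb{E}_\nu[N_b]\,\mathrm{KL}(\nu_b,\nu_b') \ge \mathrm{kl}(\delta, 1-\delta) \ge \log\frac{1}{4\delta}$. Since only arm $a$ changes, this gives
\[
\mathbb{E}_\nu[N_a] \ge \frac{\log\frac{1}{4\delta}}{\mathrm{KL}(\nu_a,\nu_a^{(a)})}.
\]

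\textbf{Step 3: bounding the divergence.} Bound the KL by the $\chi^2$ divergence. For Bernoulli-type perturbations this gives
\[
\mathrm{KL} \le \frac{(2\Delta_a)^2}{D_a(1-D_a)} \lesssim \frac{c\,\Delta_a^2}{D_a^2}.
\]
The target has $D_a^2$ in the numerator rather than $D_a$. To match it, I would use a multiplicative perturbation: scale arm $a$'s mass on the relevant atom by a factor $1+\eta$ with $\eta \approx \Delta_a/D_a$. This gives $\mathrm{KL} \le \eta^2/(1-\eta) \cdot$const. Tracking the constants should then yield the $12\Delta_a^2/D_a^2$ bound, with the factor $12$ absorbing the $(1+\epsilon)$ slack and the restriction $\eta \le 1/2$. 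Summing over $a \neq a_\star$ gives
\[
\mathbb{E}[\tau] \ge \sum_{a \neq a_\star} \frac{D_a^2}{12\Delta_a^2}\log\frac{1}{4\delta}.
\]

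\textbf{Main obstacle.} The delicate step is Step 1 combined with the perturbation. The score $D_a$ depends on all arms through the regions $\mathcal{G}_k$, so perturbing arm $a$ can shift the partition and alter the other arms' scores. First I would design the support so that $\mathcal{G}_{a_\star}$ covers the whole support region where arms $a$ place mass. I would also keep arm $a$ dominated pointwise by arm $a_\star$ even after the perturbation. Then $\mathcal{G}_a$ stays empty and only $D_a$ moves. If this rigidity fails, the fallback is to accept a worse constant by bounding the induced shift of the other scores.
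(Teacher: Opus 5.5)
Your framework (one alternative per suboptimal arm that changes only that arm, the transportation lemma, then summing) is a legitimate route, and it differs from the paper's. The paper reduces identification to estimating each $D_a$ to accuracy $\Delta_a/2$, then lower-bounds this per arm by a two-point test (Bretagnolle--Huber) whose noise has scale $\sigma_a=D_a/2$; that is where its $D_a^2$ comes from.

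\textbf{The gap.} The genuine gap is the step you call the main obstacle, and your proposed fix is self-defeating. On $\mathcal{G}_k$ one has $\mathbb{P}(R_k\le r)=\min_{k'}\mathbb{P}(R_{k'}\le r)$. Hence $D_b=\int G\,d\mathbb{P}_b$ for every arm $b$, with one common nondecreasing envelope $G(r)=\min_k\mathbb{P}(R_k\le r)$. So $D$ is monotone under first-order stochastic dominance. If arm $a$ is still pointwise dominated by $a_\star$ in the alternative, the quantile coupling $R_a'\le R_{a_\star}$ gives $D_a'=\mathbb{E}[G'(R_a')]\le\mathbb{E}[G'(R_{a_\star})]=D_{a_\star}'$. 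Then $a$ can never be the unique optimal arm, which contradicts Step 2.

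Any admissible alternative must therefore give arm $a$ a larger survival probability than $a_\star$ somewhere. Where $a_\star$ is locally dominant (in your design, everywhere arm $a$ lives), arm $a$ then takes over part of the partition and $G$ changes. In general this moves $D_{a_\star}$ by an amount comparable to the perturbation itself, so it is not an error you can absorb into the constant. Keeping $G$ fixed would require $a$ to cross $a_\star$ only underneath a third arm's envelope, which is a different design.

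What is missing is an explicit instance realizing the prescribed scores, together with an explicit crossing perturbation whose KL is computed with the new envelope. This can be done in special cases. Take $R_{a_\star}\sim\mathrm{Unif}(0,1)$ and every other arm on $\{0,1\}$ with $\mathbb{P}(R_a=1)=p_a<\sqrt{2}-1$. Then $D_a=p_a$ and $D_{a_\star}=(1-\max_b p_b^2)/2$. Raising a single $p_a$ just above $\sqrt{2}-1$ makes $a$ optimal: it takes over the envelope near $1$ and $D_{a_\star}$ drops. Since $\sqrt{2}-1-p_a<\Delta_a$, this costs KL at most $\Delta_a^2/(3\sqrt{2}-4)\le 12\Delta_a^2/D_a^2$. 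But such designs only realize score vectors of a special form. Realizing an arbitrary prescribed $\{D_a\}$ needs a construction your Step 1 does not supply.

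\textbf{Step 3.} Even granting a Bernoulli picture, Step 3 is misdirected. A smaller KL gives a larger lower bound, and $\frac{4\Delta_a^2}{D_a(1-D_a)}\le\frac{12\Delta_a^2}{D_a^2}$ already holds whenever $D_a\le 3/4$. So no multiplicative perturbation is needed there. When $D_a$ is near $1$ the trick cannot help. Scaling the top atom by $1+\eta$ forces removing mass $\eta D_a$ from the remaining mass $1-D_a$, and that atom's $\chi^2$ contribution $\eta^2D_a^2/(1-D_a-\eta D_a)$ dominates. More generally, with $G$ held fixed, Cauchy--Schwarz shows that any reweighting moving $\mathbb{E}[G(R_a)]$ by $\Delta_a$ has $\chi^2\ge\Delta_a^2/\mathrm{Var}(G(R_a))\ge\Delta_a^2/(D_a(1-D_a))$.

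In the extreme case $D_{a_\star}=1$ the instance is forced: $a_\star$ must be a point mass at the top of every arm's support. Any alternative in which $a$ overtakes must put some mass $\eta$ above that point and shrink $a$'s mass below it to less than $\eta^2/(1-\eta)$. This costs KL at least of order $\Delta_a$. So change of measure certifies only order $\Delta_a^{-1}\log\frac{1}{\delta}$ pulls of arm $a$ there, and the argument has to be restricted to scores bounded away from $1$.
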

The lower bound of the sample complexity has the same rate in best arm identification problem in terms of the sub-optimality gap $\Delta_a$.
The $D_a^2$ aligns with the intuition that the arm with low dominance score is more easily eliminated than the high dominance score arm.
In Theorem~\ref{thm:sample_complexity} we show that our proposed algorithm matches the upper sample complexity bound up to logarithmic factors.

\section{Proposed Estimation Procedure}
\label{sec:semi_est}
In this section, we provide an estimation procedure for the proposed dominance score. 
Estimating $D_a$ requires a bound holding uniformly over $k\in[K]$ because both the partition $\{\Gcal_k\}_{k=1}^{K}$ and the value of each dominance score depend on the entire collection of CDFs $\{F_k\}_{k=1}^{K}$. 
To achieve the simultaneous convergence, we first construct an exploration-mixed estimator by combining the reward observed in the current exploitation round with reward samples recycled from the exploration phase. This mixing and recycling procedure allows each exploitation round to inform every coordinate of $(F_1(x),\ldots,F_K(x))^\top$, although the repeated use of the exploration samples introduces dependence among the mixed observations. 
Next, we construct a doubly robust estimator that is robust to the error in the exploration mixed estimator incurred by the dependency among recycled samples.
Finally, based on the doubly robust empirical distribution functions, we construct the empirical partition and the corresponding dominance score estimator. 

\subsection{Distribution Estimation with Mixing and Recycling}
\label{subsec:mixing}
To estimate the cumulative distribution functions $F_k(x):=\PP(R_k<x)$ for $k\in[K]$, the empirical cumulative distribution is known as the standard choice.
Given a sequence of selected actions $a_1,\ldots,a_t$ and rewards $R_{a_1},\ldots,R_{a_t}$, the empirical cumulative distribution function $\hat{F}_{k,\mathrm{emp}}(x)=N_{k,t}^{-1}\sum_{s=1}^{t}\II(a_s=k)\II(R_{a_s}\le x)$, where $N_k(t) = \sum_{s=1}^t \mathbb{I}(a_s = k)$.
The convergence rate is $\Vert\hat{F}_{k,\mathrm{emp}} - F_k\Vert_{\infty} = \tilde{O}(1/\sqrt{N_{k,t}})$.
In the worst case, this leads to a total sample complexity that scales linearly with $K$ to ensure all arms are accurately estimated.

When we view this problem as estimating a $K$-dimensional vector $(F_1(x),\ldots,F_{K}(x))^\top$ with $K>3$, \citet{james1961estimation} supports that there exists a super efficient estimator that has better convergence rate than the average estimator.
We show in Corollary~\ref{cor:uniform_estimator_rate} that our proposed estimator $\widehat{F}_k$ achieves a \textit{simultaneous uniform rate} of:
\begin{equation*}
    \max_{k \in [K]} \Vert \widehat{F}_k - F_k\Vert_{\infty} = \tilde{O}\left(\frac{1}{\sqrt{t}}\right),
\label{eq:simulataneous_rate}
\end{equation*}
with $t$ number of samples.
Instead of relying only on the reward samples, we induce auxiliary models to improve efficiency of the estimator.
Our estimator incorporates two auxiliary models --- a reward model and a propensity model---to estimate the $K-1$ unselected (and thus missing) outcomes in each round. 
By exploiting the auxiliary models, the data-sharing mechanism in DR estimator ensures that the effective sample size for every arm scales with the total number of rounds $t$, rather than being limited by its individual selection frequency.
Combining these two auxiliary models we construct a doubly robust estimator that achieves better rate when estimating the $K$ distribution functions simultaneously.

We introduce the mixing and recycling scheme that enables the estimator to gain information over all $K$ arms for each round.
Given a failure probability $\delta\in(0,1)$, let $\Ecal$ denote the exploration round that satisfies $|\Ecal\cap[t]|=\lceil 8K\log\frac{Kt^2}{\delta}\rceil$
for $t\ge T_{e}$ and $[T_{e}]\subset\Ecal$, where $T_{e}=\inf\{t\ge1:t\ge 8K\log \frac{Kt^2}{\delta}\}=O(K\log K)$.
For the exploration round $t\in\Ecal$, we sample all $K$ arms for
the round $t,t+1,\ldots,t+K-1$. For the exploitation round $t\in\Ecal^{c}$,
we sample an undetermined arm $a_{t}$. 
Conventional estimators do not gain any information on the rewards of arms in $[K]\setminus\{a_{t}\}$.
In contrast, our estimator gain information by recycling the data in the exploration phase $\Ecal\cap[t]$ for the imputation of the unselected rewards. 
Define $\tau_{t}$ as the minimally re-used round for the exploitation phase $t\in\Ecal^{c}$, i.e., $\tau_{t}=\min_{\tau}\{\argmin{\tau\in\Ecal\cap[t]}\sum_{s\in[t-1]\setminus\Ecal}\II(\tau=\tau_{s})\}$.
By the minimization and the construction of $\Ecal$, the action $a_{\tau_{t}+k-1}=k$ for $k\in[K]$. 
Then for mean zero and unit variance IID $m_{1,t},\ldots,m_{K,t}\sim U[-\sqrt{3},\sqrt{3}]$ mixing variables, we define the mixed contexts and rewards,
\begin{equation*}
\begin{split}
\tilde{X}_{a_t,t}
&:=
\frac{1}{\sqrt{K}}
(m_{1,t},\ldots,m_{K,t})^{\top}
+\eb_{a_t},\\
\tilde{R}_{a_t,t}(x)
&:=
\frac{1}{\sqrt{K}}
\sum_{k=1}^{K}
m_{k,t}
\II\!\left(
R_{a_{\tau_t+k-1},\,\tau_t+k-1}\le x
\right)\\
&\quad+
\II(R_{a_t,t}\le x).
\end{split}
\end{equation*}

At round $t$, we sample the pseudo-actions $\tilde{a}_t(1),\ldots,\tilde{a}_t(\rho_t)$ from the distribution $\PP(\tilde{a}_{t}=a_{t})=1/2$ and $\PP(\tilde{a}_{t}=a)=1/(2K-2)$ for all $a\in[K]\setminus\{a_{t}\}$.
The probability $\PP(\tilde{a}_{t}=a_{t})$ can be set in $(0,1)$ other than $1/2$ to balance between the burn-in period and the error bound for the estimator. 
However, for simplicity we set as 1/2.
Let $\rho_t:=\min\{n\in\NN: \tilde{a}_t(n)=a_t\}$ denote the first trial that the pseudo action matches the candidate action $a_t$.
$\Psi:=\{t\in\Ecal^c: \rho_t \le \lceil \frac{\log(t+1)}{\log 2} \rceil \}$
denote the rounds that the matching happens within $\lceil \frac{\log(t+1)}{\log 2} \rceil$ trials, where the number is set to satisfy $\PP(t\in\Psi) \ge 1/(t+1)$.

Let $w_{s}=\PP(\tilde{a}_t=a_t)^{-1}\II(s\in\Psi)+\II(s\in\Psi^{c})$ denote the inverse probability weight for the matching rounds $\Psi$.
We define the exploration-mixed Gram matrix, $\tilde{V}_{t}:=\sum_{s\in[t]\cap\Ecal}w_{s}\eb_{a_{s}}\eb_{a_{s}}^{\top}+\sum_{s\in[t]\cap\Ecal^{c}}w_{s}\tilde{X}_{a_{s},s}\tilde{X}_{a_{s},s}^{\top}+\Ib_K$.
The Gram matrix is equivalent to $t\Ib_K$ up to constants.

\begin{restatable}[Spectrum Bounds for the Empirical Gram Matrix]{lemma}{lemMixedGramMatrix}
\label{lem:mixed_gram_matrix_spectrum}
Given failure probability $\delta$, for any large $t$ that satisfies $t\ge8K\log\frac{Kt^2}{\delta}$, with probability at least $1-10\delta/t^2$ the empirical Gram matrix $\tilde{V}_t \in \mathbb{R}^{K \times K}$ satisfies the positive semi-definite ordering:
\[
\frac{t}{6} \Ib_{K} \preceq \tilde{V}_{t} \preceq \frac{13t}{6} \Ib_{K}.
\]
\end{restatable}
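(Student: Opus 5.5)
We split $\tilde V_t-\Ib_K$ into its expectation plus a martingale fluctuation and apply a matrix concentration inequality. The exploration part $\sum_{s\in[t]\cap\Ecal} w_s\eb_{a_s}\eb_{a_s}^\top$ is deterministic up to the cyclic structure: each exploration block samples all $K$ arms, and $w_s=1$ there. It therefore contributes a nonnegative, nearly balanced multiple of $\Ib_K$ of size about $|\Ecal\cap[t]|/K$. For the upper bound this is at most $|\Ecal\cap[t]|\le t$, and for the lower bound it can simply be dropped. The main work is the exploitation part.

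For $s\in\Ecal^c$, condition on the history $\mathcal{F}_{s-1}$ and on $a_s$. The mixing variables are independent of everything else, mean zero and unit variance. Hence
\[
\EE\bigl[\tilde X_{a_s,s}\tilde X_{a_s,s}^\top\mid\mathcal F_{s-1},a_s\bigr]=\tfrac1K\Ib_K+\eb_{a_s}\eb_{a_s}^\top .
\]
The weight $w_s$ depends only on the pseudo-action draws, which are independent of $m_{\cdot,s}$. Since $w_s\ge1$, one has $\EE[w_s\mid\cdot]\ge 1$, and also $w_s\le 2$. So the conditional mean of each summand lies between $\tfrac1K\Ib_K+\eb_{a_s}\eb_{a_s}^\top$ and twice that.

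This does not directly give a $t\Ib_K$ lower bound, because the $\tfrac1K\Ib_K$ term summed over $t$ rounds yields only $t/K$. I therefore expect the intended route to exploit the cross terms together with the $\eb_{a_s}$ direction. In any unit direction $u$,
\[
u^\top\tilde X\tilde X^\top u=\Bigl(u_{a_s}+\tfrac{1}{\sqrt K}m^\top u\Bigr)^2 .
\]
Its conditional expectation is $u_{a_s}^2+\tfrac1K$. Combined with the exploration rounds, each of which sees every arm, this has to be argued to give order $t$ in every direction. This is the step I expect to be the main obstacle. If the normalization does not work out, I would fall back to showing the claim with the deterministic $\Ib_K$ and exploration counts, using $|\Ecal\cap[t]|\ge 8K\log(Kt^2/\delta)$ and $t\ge T_e$.

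For the concentration step I would use matrix Freedman or matrix Bernstein for the martingale differences $Z_s=w_s\tilde X\tilde X^\top-\EE[\cdot\mid\mathcal F_{s-1}]$. Since $|m_{k,s}|\le\sqrt3$, we have $\|\tilde X\|^2\le(1+\sqrt3)^2$, so $\|Z_s\|\le c$ for an absolute constant $c$. The predictable variance is at most $c't$. Matrix Freedman then gives, with probability $1-2K\exp(-\epsilon^2/(2(c't+c\epsilon/3)))$,
\[
\Bigl\|\sum_s Z_s\Bigr\|\le \epsilon .
\]
Taking $\epsilon\approx t/6$ and using $t\ge 8K\log(Kt^2/\delta)$ makes the failure probability at most a constant multiple of $\delta/t^2$. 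A union bound over the small number of events (the upper and lower tails of the exploitation sum, and the count of matching rounds in $\Psi$ used to control $\sum_s w_s$ via a scalar Bernstein bound) accounts for the constant $10$.

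Finally, I would assemble the pieces. For the lower bound, the expectation bound minus $t/6$-scale deviations leaves at least $t/6$. For the upper bound, the exploration count plus twice the exploitation conditional mean, plus the identity and the deviations, gives at most $13t/6$.
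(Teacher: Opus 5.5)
\paragraph{The gap: the lower bound.} Your proposal never establishes $\tilde V_t\succeq\frac t6\Ib_K$, and neither of the routes you sketch can supply it.

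You correctly computed $\EE[(u^\top\tilde X_{a_s,s})^2\mid\cdot]=u_{a_s}^2+\frac1K$ for unit $u$. The cross terms are exactly the part with conditional mean zero, so they cannot raise this. Hence along $\eb_j$, for an arm rarely chosen during exploitation, the summed conditional mean is only of order $t/K$.

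The fallback is no better. $\Ib_K$ contributes $1$, and the exploration blocks contribute about $|\Ecal\cap[t]|/K\approx 8\log\frac{Kt^2}{\delta}$ per arm, which is logarithmic in $t$.

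Your upper-bound assembly does not close either. Twice the exploitation conditional mean is about $2(1+\frac1K)t$ along an arm chosen in every exploitation round, and this exceeds $\frac{13t}{6}$ for $K\le 11$.

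\paragraph{How the paper's proof differs.} You treat $w_s$ only as a scalar in $[1,2]$. The paper instead gets order $t$ in every direction from Lemma~\ref{lem:importance_weight}. Write $\tilde a_s:=\tilde a_s(\rho_s)$ for the matched pseudo-action. On $\Psi$ the paper rewrites $w_s\eb_{a_s}\eb_{a_s}^\top$ as $\sum_k\frac{\II(\tilde a_s=k)}{\PP(\tilde a_s=k)}\eb_k\eb_k^\top$ and concentrates this around $\Ib_K$. It treats the $w_s\Ib_K/K$ terms the same way, using $\Xb_{k,s}=\eb_k\eb_k^\top+\Ib_K/K$ with $b=2$. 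This puts $\tilde V_t$ within $t/6$ of $\bigl(|[t]\cap\Psi|+|([t]\setminus\Ecal)\cap\Psi|\bigr)\Ib_K$. A scalar Bernstein bound with $\PP(s\in\Psi)\ge 1-\frac1{s+1}$ then gives $|[t]\cap\Psi|\ge t/3$, which yields both constants.

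\paragraph{Your obstacle is real.} Your diagnosis is correct, and you should not expect to close this gap as the statement stands. Suppose arm $j$ is never chosen in exploitation; this is the sampling rule used in the experiment of Figure~\ref{fig:cdf_error}. Using $w_s\le 2$, $m_{j,s}^2\le 3$ and $|\Ecal\cap[t]|\le t$, one gets surely
\[
\eb_j^\top\tilde V_t\eb_j\;\le\;\frac{|\Ecal\cap[t]|}{K}+2+\frac{6t}{K}\;\le\;\frac{7t}{K}+2 .
\]
This is below $t/6$ for every admissible $t$ once $K\ge 84$.

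The paper's importance-weighting step breaks exactly here. On $\Psi$ the matched pseudo-action coincides with $a_s$, whose conditional law is set by the sampling rule, not by the pseudo-action distribution. So $\II(\tilde a_s=k)/\PP(\tilde a_s=k)$ is not a mean-one weight. Indeed, applied with $\Xb_{k,s}=\Ib_K/K$, Lemma~\ref{lem:importance_weight} would assert that $\sum_s w_s\Ib_K/K$ is within $o(t)$ of $|[t]\cap\Psi|\,\Ib_K$. That is impossible: $w_s\le 2$ caps the left side at $\frac{2t}{K}\Ib_K$, while the right side grows like $t\,\Ib_K$.

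The upper bound has the mirror-image problem. Along an arm exploited in every round, $\eb_a^\top\tilde V_t\eb_a$ concentrates near $2(1+\frac1K)t$, so the $\frac{13t}{6}$ bound also fails for $K\le 11$ and large $t$.

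What your martingale argument can honestly deliver, with high probability, is $\tilde V_t\succeq c\,\frac tK\Ib_K$ once $t\gtrsim K^2\log\frac{Kt^2}{\delta}$, together with an upper bound of order $t$.
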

Lemma~\ref{lem:mixed_gram_matrix_spectrum} shows the growth rate of the mixed-exploration Gram matrix is $t$.
This growth rate is directly related to the convergence rate of the estimator.

We define the exploration mixed CDF estimator:
\begin{equation}
\label{eq:exploration_mixed_estimator}
\begin{split}
\tilde{\Fb}_{t}(x):=\tilde{V}_{t}^{-1}\Big(
&\sum_{s\in[t]\cap\Ecal}w_{s}\II(R_{a_{s},s}\le x)\eb_{a_{s}} \\
&+\sum_{s\in[t]\cap\Ecal^{c}}w_{s}\tilde{R}_{a_{t},t}(x)\tilde{X}_{a_{s},s}\Big)
\end{split}
\end{equation}
The exploration-mixed estimator gains the information of all arms at each round while estimating $K$ values of the distribution separately does not share the data across arms.
The mixed estimator has the following self-normalized bound

\begin{restatable}[Error Bound for the Mixed Estimator]{theorem}{lemMixedEstimator}
\label{thm:mixed_estimator_error_bound}
For large $t$ such that $t\ge K\log\frac{Kt^2}{\delta}$, an arm $k$, and failure probability parameter $\delta \in (0, 1)$, the mixed estimator $\tilde{\Fb}_{t}:=(\tilde{F}_{1,t},\ldots,\tilde{F}_{K,t})$ satisfies:
\[
\|\tilde{F}_{k,t} - F_{k}\|_{\infty} \le 102 \sqrt{\frac{3}{t} \log\frac{Kt^{2}}{\delta}} + 294 \sqrt{\frac{\log\frac{2Kt^{2}}{\delta}}{|[t] \cap \mathcal{E}|}},
\]
for all $k\in[K]$, with probability at least $1 - \frac{14\delta}{t^2}$.
\end{restatable}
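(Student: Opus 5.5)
We fix $x$ and write $\Fb(x):=(F_1(x),\ldots,F_K(x))^\top$, then split the estimation error into a noise term and a bias term. For an exploration round $s\in\Ecal$ we have $\II(R_{a_s,s}\le x)=\eb_{a_s}^\top\Fb(x)+\eta_s(x)$, where $\eta_s(x)$ is a bounded martingale-difference noise. For an exploitation round $s\in\Ecal^c$, the mixed reward satisfies
\[
\tilde{R}_{a_s,s}(x)=\tilde X_{a_s,s}^\top\Fb(x)+\eta_s(x)+\frac{1}{\sqrt K}\sum_{k=1}^K m_{k,s}\,\xi_{k,\tau_s}(x),
\]
with $\xi_{k,\tau}(x):=\II(R_{k,\tau+k-1}\le x)-F_k(x)$. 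This identity uses $a_{\tau_s+k-1}=k$. Substituting into \eqref{eq:exploration_mixed_estimator} gives
\[
\tilde\Fb_t(x)-\Fb(x)=\tilde V_t^{-1}\Big(-\Fb(x)+\sum_{s\le t} w_s\eta_s(x)X_s+\sum_{s\in[t]\cap\Ecal^c}\frac{w_s}{\sqrt K}\Big(\sum_k m_{k,s}\xi_{k,\tau_s}(x)\Big)\tilde X_{a_s,s}\Big),
\]
where $X_s$ denotes $\eb_{a_s}$ or $\tilde X_{a_s,s}$ as appropriate. On the event of Lemma~\ref{lem:mixed_gram_matrix_spectrum} we have $\tilde V_t\succeq \frac t6\Ib_K$. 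The regularization term is therefore $O(\sqrt K/t)$ and is absorbed into the constants.

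\textbf{Martingale term.} The sequence $w_s\eta_s(x)X_s$ is a vector martingale difference sequence. The weights satisfy $w_s\le 2$, $\|X_s\|\le 2$ (since $\|m\|^2/K\le 3$), and $\eta_s$ is bounded by $1$. A self-normalized bound (Abbasi-Yadkori style) or a vector Freedman inequality gives $\|\sum_s w_s\eta_sX_s\|_{\tilde V_t^{-1}}\lesssim\sqrt{K\log(t)+\log(1/\delta)}$. Combined with $\tilde V_t\succeq \frac t6 \Ib_K$ and projection onto coordinate $k$, this yields the $\sqrt{\frac{3}{t}\log\frac{Kt^2}{\delta}}$ term. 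A cleaner route is to bound each coordinate $\eb_k^\top\tilde V_t^{-1}\sum_s(\cdot)$ directly with Azuma–Hoeffding and take a union bound over $k$, since $\|\tilde V_t^{-1}\eb_k\|\le 6/t$. Uniformity in $x$ requires an additional argument. Both $F_k$ and the indicators are monotone in $x$, so we take a grid of $O(t)$ quantile points per arm, union bound over the grid, and interpolate by monotonicity at an $O(1/t)$ cost. This grid is the source of the $t^2$ inside the logarithm.

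\textbf{Recycling term (main obstacle).} The recycled errors $\xi_{k,\tau}$ are reused across many exploitation rounds, so the terms are not independent. We condition on the exploration data. The mixing variables $m_{k,s}$ are fresh and independent of everything else, but the term $\sum_k m_{k,s}\xi_{k,\tau_s}\tilde X_{a_s,s}$ also has a nonzero mean part:
\[
\frac1{K}\sum_k m_{k,s}^2\,\xi_{k,\tau_s}\eb_k+\ldots\;\approx\;\frac1K\,\boldsymbol\xi_{\tau_s}.
\]
We decompose it into a zero-mean part in $m$, handled by the same martingale and grid argument, plus the mean part $\frac{1}{K}\sum_{s\in\Ecal^c}w_s\boldsymbol\xi_{\tau_s}(x)$. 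By the minimal-reuse rule defining $\tau_s$, each exploration block is reused roughly $|[t]\cap\Ecal^c|/|[t]\cap\Ecal|$ times. The mean part is therefore approximately $\frac{t}{K|\Ecal\cap[t]|}\sum_{\tau\in\Ecal\cap[t]}\boldsymbol\xi_\tau(x)$.

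For the weights we would argue that $w_s$ is independent of the exploration data, so it can be replaced by its expectation up to a further martingale term. For the average, the DKW inequality applied per arm over the $|[t]\cap\Ecal|$ independent exploration samples gives $\sup_x|\frac1{|\Ecal\cap[t]|}\sum_\tau\xi_{k,\tau}(x)|\le\sqrt{\log(2Kt^2/\delta)/(2|\Ecal\cap[t]|)}$. Multiplying by $\|\tilde V_t^{-1}\|\le 6/t$ and the $O(t)$ prefactor yields the second term $294\sqrt{\log(2Kt^2/\delta)/|[t]\cap\Ecal|}$.

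Collecting the failure probabilities (Lemma~\ref{lem:mixed_gram_matrix_spectrum} contributes $10\delta/t^2$, and the martingale and DKW bounds contribute $4\delta/t^2$) gives total probability $1-14\delta/t^2$.
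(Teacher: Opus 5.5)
\textbf{The fresh-noise term, as you wrote it, does not yield the stated bound.} Write $S(x):=\sum_{s\le t}w_s\eta_s(x)X_s$. The self-normalized inequality only gives $\|S(x)\|_{\tilde V_t^{-1}}\lesssim\sqrt{K\log t+\log(1/\delta)}$. Then $|\eb_k^\top\tilde V_t^{-1}S(x)|\le\|\eb_k\|_{\tilde V_t^{-1}}\|S(x)\|_{\tilde V_t^{-1}}\lesssim\sqrt{K\log t/t}$, which is a factor $\sqrt K$ above the first term of the statement.

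Your ``cleaner route'' is invalid. The vector $\tilde V_t^{-1}\eb_k$ depends on $(a_{s'},m_{s'},w_{s'})$ for every $s'\le t$, and later exploitation actions depend on $R_{a_s,s}$ through the elimination rule. So $\sum_s w_s\eta_s(x)\,\eb_k^\top\tilde V_t^{-1}X_s$ is not a martingale, and Azuma--Hoeffding cannot be applied. It would only work under non-adaptive sampling, after conditioning on $\tilde V_t$.

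The fix is what the paper does: bound the Euclidean norm instead.
\begin{itemize}
\item Use $|\eb_k^\top\tilde V_t^{-1}S(x)|\le\|\tilde V_t^{-1}\|_2\|S(x)\|_2\le\frac6t\|S(x)\|_2$.
\item Control $\|S(x)\|_2=O(\sqrt{t\log(Kt^2/\delta)})$ with a dimension-free (Pinelis-type) vector martingale inequality, as in Lemma~\ref{lem:dimension_free_martingale}. This is your ``vector Freedman'', but applied to $\|S\|_2$, not to $\|S\|_{\tilde V_t^{-1}}$.
\item Pass to $\sup_x$ with Lemma~\ref{lem:vector_pointwise_to_uniform_vcdf}.
\end{itemize}
A minor slip: $\|\tilde X_{a_s,s}\|_2\le 1+\sqrt3$, not $2$. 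Your handling of the ridge bias $\tilde V_t^{-1}\Fb(x)$, of norm at most $6\sqrt K/t$, is correct and is absorbed because $t\ge K\log\frac{Kt^2}{\delta}$; the paper's proof simply drops the $\Ib_K$.

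\textbf{For the recycling term you take a genuinely different route from the paper.} The paper does not split off an $m$-mean. It regroups the recycled sum by exploration round as $\sum_{u\in\Ecal\cap[t]}\sum_{k}\eta_{u+k-1}(x)\sum_{s\in\Tcal_u\cap[t]}m_{k,s}w_s\tilde X_{a_s,s}$. It then treats the exploration noise as the martingale noise and the aggregated vectors as fixed coefficients. It bounds their norms crudely via $|\Tcal_u\cap[t]|\le Kt/|[t]\cap\Ecal|$, using no cancellation in $m$, and applies the same two lemmas. That argument is shorter.

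Your decomposition buys two things.
\begin{itemize}
\item It isolates the only piece that does not average out, $\frac1K\sum_s w_s\boldsymbol{\xi}_{\tau_s}(x)$, which comes from $\EE[m_{j,s}m_{k,s}]=\II(j=k)$. This explains the $|[t]\cap\Ecal|^{-1/2}$ rate and gives a much smaller constant.
\item This piece involves only the reuse schedule and the independent weights, not the adaptively chosen $a_s$. By contrast, the paper's aggregated coefficients contain $\eb_{a_s}$ for rounds $s$ later than $u$, whose actions may depend on the very exploration rewards they multiply.
\end{itemize}

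Two steps of your argument need tightening.
\begin{itemize}
\item Plain DKW does not apply, because the weights $c_u=\sum_{s\in\Tcal_u\cap[t]}w_s$ (or their expectations) are not uniform: reuse counts differ across blocks, and $\PP(s\in\Psi)$ varies with $s$. Instead use Hoeffding conditionally on the $c_u$, with $0\le c_u\le 2\max_u|\Tcal_u\cap[t]|$, together with a monotone grid. The grid argument is legitimate because $c_u\ge 0$.
\item The $m$-centred remainder has increments of norm up to order $\sqrt K$, since $K^{-1/2}|\sum_k m_{k,s}\xi_{k,\tau_s}(x)|\le\sqrt{3K}$, but its conditional second moment is $O(1)$. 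You therefore need a Bernstein/Freedman-type vector bound. Its range term $O(\sqrt K\log(Kt^2/\delta)/t)$ is absorbed using $t\ge K\log\frac{Kt^2}{\delta}$, whereas an Azuma-type bound would leave $\sqrt{K\log(Kt^2/\delta)/t}$.
\end{itemize}
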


The bound has $\sqrt{1/|\Ecal\cap[t]|}$ convergence rate.
This loose bound is due to the dependency of the explored sample and mixed sample because of the recycling procedure, i.e., the duplicated usage of the explored sample.
Thus we introduce the doubly robust estimation to obtain an estimator that gains the simultaneous data across all arms while robust to the error cause by the dependency among samples.

\subsection{Doubly Robust Estimation}
\label{subsec:dr_est}

For $t\in\Psi$ and an imputation CDF $\tilde{\Fb}_t(x)$, we use a matching sample $\tilde{a}_t := \tilde{a}_t(\rho_t)$ and define
\[
\check{F}_{k,t}(x)=(1-\frac{\II(\tilde{a}_t=k)}{\PP(\tilde{a}_t=k)})\tilde{F}_{k,t}(x)+\frac{\II(\tilde{a}_t=k)}{\PP(\tilde{a}_t=k)}\II(R_{\tilde{a}_t,t} \le x).
\]
The imputed estimator $\tilde{F}_{k,t}$ is multiplied to the mean-zero random variable $1-\frac{\II(\tilde{a}_t=k)}{\PP(\tilde{a}_t=k)}$ to offset the error incurred by the imputation estimator.
This pseudo distribution is computable for $t\in\Psi$ where the pseudo-action $\tilde{a}_t$ and the chosen action $a_t$ matches, which yields,
\[
\check{F}_{k,t}(x)=\tilde{F}_{k,t}(x)+2\II(a_t=k)\II(R_{a_t}\le x).
\]
The DR estimator is defined as,
\[
\widehat{F}_{k,t}(x)
:=\frac{\underset{s\in[t]\cap\Psi}{\sum}\check{F}_{k,s}(x)
+\underset{s\in[t]\setminus\Psi}{\sum} \II(a_s=k,R_{a_s,s}\le x)}{|[t]\cap\Psi|+\underset{s\in[t]\setminus\Psi}{\sum}\II(a_s=k)}.
\]

We provide the error bound of the doubly robust estimator in terms of the error of the imputed estimator.
\begin{restatable}[Error Bound for the Doubly Robust Estimator]{theorem}{thmDREstimatorErrorBound}
\label{thm:dr_estimator_error_bound}
For large $t$ such that $t\ge K\log\frac{Kt^2}{\delta}$ and an arm $k \in [K]$, let $\widehat{F}_{k,t}$ denote the doubly robust (DR) estimator constructed using $\tilde{F}_{k,t}$ as the outcome imputation model. Then, for any failure probability parameter $\delta \in (0, 1)$, the uniform error satisfies:
\begin{equation*}
\begin{split}
\left\|\widehat{F}_{k,t}-F_k\right\|_{\infty}
&\le
9\sqrt{
\frac{1}{t}
\log\left(\frac{Kt^3}{\delta}\right)
}\\
&\quad+
12\sqrt{
\frac{2K}{t}
\log\left(\frac{Kt^2}{\delta}\right)
}
\left\|\tilde{F}_{k,t}-F_k\right\|_{\infty},
\end{split}
\end{equation*}
for all $k\in[K]$ with probability at least $1 - 7\delta/t^{2}$.
\end{restatable}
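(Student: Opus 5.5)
The plan is to write the error of the DR estimator as a ratio and split the numerator into a martingale term, which gives the first summand, and an imputation-error term, which gives the second. Let $N_{k,t}:=|[t]\cap\Psi|+\sum_{s\in[t]\setminus\Psi}\II(a_s=k)$ be the denominator of $\widehat{F}_{k,t}$. Fix $x$. Then
\[
N_{k,t}\bigl(\widehat{F}_{k,t}(x)-F_k(x)\bigr)=\sum_{s\in[t]\cap\Psi}\bigl(\check{F}_{k,s}(x)-F_k(x)\bigr)+\sum_{s\in[t]\setminus\Psi}\II(a_s=k)\bigl(\II(R_{a_s,s}\le x)-F_k(x)\bigr).
\]
For $s\in\Psi$, write $\eta_{k,s}:=1-\II(\tilde{a}_s=k)/\PP(\tilde{a}_s=k)$. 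This gives the decomposition
\[
\check{F}_{k,s}(x)-F_k(x)=\eta_{k,s}\bigl(\tilde{F}_{k,s}(x)-F_k(x)\bigr)+\frac{\II(\tilde{a}_s=k)}{\PP(\tilde{a}_s=k)}\bigl(\II(R_{k,s}\le x)-F_k(x)\bigr).
\]
The second piece, together with the non-$\Psi$ sum, is a sum of bounded martingale differences with respect to the filtration generated by the history and the pseudo-action draws. This holds because $R_{k,s}$ is a fresh draw independent of the past and of $\tilde{a}_s$.

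\textbf{Step 1 (pure noise term).} Apply the DKW/Massart inequality, or a Freedman-type martingale inequality combined with a union bound over a grid of at most $t$ quantile points of $F_k$ (monotonicity handles the points between grid values). Then take a further union bound over $k\in[K]$. The increments are bounded by $1/\PP(\tilde{a}_s=k)\le 2K$, but their conditional variance is $O(1)$ per round on average, since arm $k$ is matched only with its own probability. Freedman therefore gives a deviation of order $\sqrt{t\log(Kt^3/\delta)}$ plus a lower-order term. The extra factor $t$ inside the logarithm comes from the grid union bound.

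\textbf{Step 2 (denominator).} Show $N_{k,t}\ge |[t]\cap\Psi|\gtrsim t$ with high probability. This lower bound is uniform in $k$, which is why the rate is $1/\sqrt{t}$ and not $1/\sqrt{N_k}$. I would use the defining property of $\Psi$ (matching within $\lceil\log_2(t+1)\rceil$ trials has probability at least $1-1/(t+1)$) together with a Bernstein bound on $\sum_s\II(s\in\Psi)$. The condition $t\ge K\log(Kt^2/\delta)$ absorbs the lower-order terms. Dividing the Step 1 bound by $N_{k,t}$ yields the $9\sqrt{t^{-1}\log(Kt^3/\delta)}$ term.

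\textbf{Step 3 (imputation term), the main obstacle.} Here I must bound $\sum_{s\in\Psi}\eta_{k,s}(\tilde{F}_{k,s}-F_k)$. The multiplier $\eta_{k,s}$ has conditional mean zero, and $\tilde{F}_{k,s}$ is measurable with respect to the history before $\tilde{a}_s$ is drawn. The sum is therefore again a martingale, now with increments $|\eta_{k,s}|\,\|\tilde{F}_{k,s}-F_k\|_\infty$, where $|\eta_{k,s}|\le 2K$ and $\mathrm{Var}(\eta_{k,s})\le 2K$. Applying Freedman with variance proxy $2K\sum_s\|\tilde{F}_{k,s}-F_k\|_\infty^2$ and dividing by $N_{k,t}\asymp t$ gives a bound of order $\sqrt{(2K/t)\log(Kt^2/\delta)}\cdot\|\tilde{F}_{k,\cdot}-F_k\|_\infty$, matching the second term of the statement.

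The delicate point is that the statement uses the time-$t$ imputation error, whereas the sum involves the errors at times $s\le t$. I would handle this in one of two ways. The first is to interpret $\tilde{F}_{k,t}$ as the fixed imputation model used across rounds, which the statement's phrasing suggests. Failing that, I would bound $\max_{s\le t}\|\tilde{F}_{k,s}-F_k\|_\infty$ by a peeling/stitching argument and absorb the resulting constant into the factor 12. Uniformity in $x$ for this term comes for free, since the sup norm factors out before the martingale inequality is applied. Finally, I would collect the failure probabilities from the grid-and-$k$ union bounds in Steps 1 and 3 and the event in Step 2 into the total $7\delta/t^2$.
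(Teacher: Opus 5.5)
Your skeleton is the paper's: the ratio decomposition, a martingale bound with a grid union bound for the pure-noise part, $N_{k,t}\ge|[t]\cap\Psi|\ge t/3$ for the denominator, and a Bernstein bound driven by the mean-zero multipliers for the imputation part. The real issue is Step 3. The mechanics you wrote down are for a round-dependent imputation $c_s(x):=\tilde F_{k,s}(x)-F_k(x)$ (your option (b)), and that route does not give the stated inequality, for three reasons.
\begin{enumerate}
\item It controls the imputation term by the past realized errors $\|\tilde F_{k,s}-F_k\|_\infty$, $s\le t$. No peeling or stitching argument bounds these by the realized time-$t$ error $\|\tilde F_{k,t}-F_k\|_\infty$ in the statement, since the time-$t$ error can be small while earlier ones are not.
\item The supremum over $x$ does not factor out when the multiplier varies with $s$. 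Replacing the increments $\eta_{k,s}c_s(x)$ by $|\eta_{k,s}|\,\|c_s\|_\infty$ destroys the martingale cancellation, and $c_s$ is not monotone, so the grid lemma does not apply directly.
\item $\tilde F_{k,s}$ involves $w_s$, hence the round-$s$ pseudo-draws, so it is not predictable unless lagged.
\end{enumerate}

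Your option (a) is what the paper does, but the argument there differs from the Freedman step you describe. The single time-$t$ model is used in every round, so the imputation part equals $\bigl(\sum_{s\in\Psi\cap[t]}U_{k,s}\bigr)\bigl(\tilde F_{k,t}(x)-F_k(x)\bigr)$, where $U_{k,s}$ is your $\eta_{k,s}$. The factor $|\tilde F_{k,t}(x)-F_k(x)|\le\|\tilde F_{k,t}-F_k\|_\infty$ is pulled out. Lemma~\ref{lem:matrix_hoeffding}, with $1/\mathbb{P}(\tilde a_s=k)\le 2K$, is then applied only to the scalar sum, split as $\sum_{s\le t}U_{k,s}-\sum_{s\in[t]\setminus\Psi}U_{k,s}$. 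Each piece divided by $t$ is at most $2\sqrt{(2K/t)\log(Kt^2/\delta)}$, and with $N_{k,t}\ge t/3$ this yields the factor $12$. No predictability of the imputation model is needed, which matters because $\tilde F_{k,t}$ depends on all rounds up to $t$.

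Two smaller points. First, in Step 1 the matched pseudo-action on $\Psi$ is $a_s$ itself, so the inverse-propensity weight is just $2\mathbb{I}(a_s=k)$. The increments are therefore bounded by $2$, and Hoeffding plus Lemma~\ref{lem:pointwise_to_uniform_cdf} with $M=t$ gives $3\sqrt{t^{-1}\log(Kt^3/\delta)}$, hence the $9$. With your increment bound $2K$, a Freedman remainder of order $K\log(Kt^2/\delta)/t$ could exceed $\sqrt{t^{-1}\log(Kt^2/\delta)}$ by a factor $\sqrt{K}$ when only $t\ge K\log(Kt^2/\delta)$ is assumed.

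Second, for the same reason, on $\Psi$ you have $\eta_{k,s}=1-2\mathbb{I}(a_s=k)$, a function of $a_s$. Meanwhile $\mathbb{I}(s\in\Psi)$ is determined by the same pseudo-draws. So the conditional mean-zero property you assert in Step 3 does not follow from the pseudo-action law. The paper obtains it from its stopping-time $\sigma$-algebra $\mathcal{F}^{-}_{\tau_s}$, asserting $\mathbb{P}(\tilde a_s(\tau_s)=a_s\mid\mathcal{F}^{-}_{\tau_s})=1/2$, together with the split above. Your argument would have to rest on the same device, and that step deserves scrutiny, since $\tilde a_s(\tau_s)=a_s$ holds surely.
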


Theorem~\ref{thm:dr_estimator_error_bound} gives the error bound of the DR estimator in terms of the error of the imputation estimator normalized by the exploration-mixed Gram matrix $\tilde{V}_t$.
The term 'doubly robust' refers to the robustness to the misspecification error of the imputation estimator $\tilde{F}_t(x)$ and the probability selection model $\PP(\tilde{a}_t=a)$.
The robustness takes effect when either of the two auxiliary model is correctly specified.
In our case the probability model $\PP(\tilde{a}_t=a)$ is known and the imputation model $\tilde{F}_t(x)$ has a bias due to the regularization term $\Ib_K$ in the Gram matrix $\tilde{V}_t$ used in~\eqref{eq:exploration_mixed_estimator}.
The correctness of the probability selection model improves the efficiency of the estimator in that the error of the imputation estimator $\Vert\tilde{F}_{k,t}-F_k\Vert_{\infty}$ is multiplied by vanishing $\sqrt{K/t}$ term.

\begin{restatable}[Uniform Rate of the Proposed Estimator]{corollary}{corUniformEstimatorRate}
\label{cor:uniform_estimator_rate}
For any round $t \ge T_e$ with sufficient exploration and an arm $k \in [K]$, the proposed estimator $\widehat{F}_{k,t}$ satisfies the uniform concentration bound:
\[
\left\|\widehat{F}_{k,t} - F_{k}\right\|_{\infty} \le \gamma_{t}:=1926 \sqrt{\frac{1}{t} \log\left(\frac{K t^{3}}{\delta}\right)},
\]
for all $k\in[K]$, with probability at least $1 - 21\delta/t^{2}$.
\end{restatable}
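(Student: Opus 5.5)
The plan is to combine Theorem~\ref{thm:dr_estimator_error_bound} with Theorem~\ref{thm:mixed_estimator_error_bound} through a union bound, and then simplify the constants using the exploration schedule. Fix $t\ge T_e$. Then $t\ge 8K\log\frac{Kt^2}{\delta}\ge K\log\frac{Kt^2}{\delta}$, so both theorems apply at round $t$.

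\textbf{Step 1 (union bound).} Let $A$ be the event of Theorem~\ref{thm:dr_estimator_error_bound}, with probability at least $1-7\delta/t^2$. Let $B$ be the event of Theorem~\ref{thm:mixed_estimator_error_bound}, with probability at least $1-14\delta/t^2$. On $A\cap B$, which has probability at least $1-21\delta/t^2$, both bounds hold simultaneously for all $k\in[K]$. This accounts exactly for the stated failure probability.

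\textbf{Step 2 (plug in).} On $A\cap B$ we have
\[
\|\widehat F_{k,t}-F_k\|_\infty\le 9\sqrt{\tfrac1t\log\tfrac{Kt^3}{\delta}}+12\sqrt{\tfrac{2K}{t}\log\tfrac{Kt^2}{\delta}}\Big(102\sqrt{\tfrac3t\log\tfrac{Kt^2}{\delta}}+294\sqrt{\tfrac{\log(2Kt^2/\delta)}{|[t]\cap\Ecal|}}\Big).
\]
We treat the two products separately.

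The first product is $12\cdot102\sqrt{6K}\,\frac1t\log\frac{Kt^2}{\delta}$. Factor it as $\sqrt{\tfrac1t\log\tfrac{Kt^2}{\delta}}\cdot\sqrt{\tfrac{K}{t}\log\tfrac{Kt^2}{\delta}}$. Since $t\ge 8K\log\frac{Kt^2}{\delta}$, the second factor is at most $1/\sqrt8$.

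For the second product, use the schedule $|[t]\cap\Ecal|=\lceil 8K\log\frac{Kt^2}{\delta}\rceil$. This gives
\[
\sqrt{\tfrac{2K}{t}\log\tfrac{Kt^2}{\delta}}\cdot\sqrt{\tfrac{\log(2Kt^2/\delta)}{|[t]\cap\Ecal|}}\le\sqrt{\tfrac{2K\log(2Kt^2/\delta)}{8Kt}}=\tfrac12\sqrt{\tfrac1t\log\tfrac{2Kt^2}{\delta}}.
\]
The $K$-dependence cancels here, which is the key point.

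\textbf{Step 3 (unify the logarithms).} For $t\ge2$ we have $\log\frac{2Kt^2}{\delta}\le\log\frac{Kt^3}{\delta}$, and always $\log\frac{Kt^2}{\delta}\le\log\frac{Kt^3}{\delta}$. The case $t=1$ is trivial or excluded since $T_e\ge 8K$. Hence every term is a constant times $\sqrt{\frac1t\log\frac{Kt^3}{\delta}}$, with total constant
\[
9+\tfrac{12\cdot102\sqrt{6}}{\sqrt8}\cdot\sqrt{2}+\ldots
\]
This sum has to be checked to be at most $1926$.

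\textbf{Main obstacle.} The only delicate point is the constant bookkeeping. A rough count gives $9+12\cdot102\sqrt{6/8}+12\cdot294\cdot\tfrac12\approx 9+1060+1764$, which exceeds $1926$. So I would tighten the accounting as follows. The first product can use the full strength of $t\ge 8K\log$ together with the fact that the mixed-estimator bound is trivially at most $1$ (CDF errors are bounded). Alternatively, I would track more carefully which $\sqrt{2}$ factors actually appear.

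If the paper's constant cannot be reproduced exactly, the structure of the argument still stands: the union bound plus the cancellation of $K$ via the exploration schedule yields $\max_k\|\widehat F_{k,t}-F_k\|_\infty=O\big(\sqrt{\frac1t\log\frac{Kt^3}{\delta}}\big)$ with probability at least $1-21\delta/t^2$, with an absolute constant of the stated order.
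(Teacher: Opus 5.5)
Your route is the paper's route: a union bound over the events of Theorems~\ref{thm:dr_estimator_error_bound} and~\ref{thm:mixed_estimator_error_bound}, which gives $21\delta/t^2$, then substitution, then the $K$-cancellation in the second cross term via $|[t]\cap\Ecal|\ge 8K\log\frac{Kt^2}{\delta}$, then merging the logarithms into $\log\frac{Kt^3}{\delta}$. Your $12\cdot 294\cdot\tfrac12=1764=36\cdot 49$ is exactly the paper's term.

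The gap is the one you flagged, and it is real. With only $t\ge 8K\log\frac{Kt^2}{\delta}$, the first cross term is at most $1224\sqrt{6/8}\approx 1060$ times $\sqrt{\frac1t\log\frac{Kt^2}{\delta}}$. Your total constant is then about $2833$, so $\gamma_t$ with constant $1926$ does not follow.

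The missing ingredient is a larger burn-in. The paper's proof invokes $t\ge 128\cdot 3K\log\frac{Kt^2}{\delta}$; this is what ``sufficient exploration'' in the statement carries, and it is strictly stronger than $t\ge T_e$ as $T_e$ is defined. It gives $\sqrt{\frac{K}{t}\log\frac{Kt^2}{\delta}}\le 1/\sqrt{384}$, hence $1224\sqrt{6}/\sqrt{384}=1224/8=153$, and then $9+153+1764=1926$.

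Neither of your proposed repairs closes the gap.
\begin{itemize}
\item Replacing $\|\tilde F_{k,t}-F_k\|_\infty$ by the trivial bound $1$ leaves $12\sqrt{\frac{2K}{t}\log\frac{Kt^2}{\delta}}$. This carries a $\sqrt{K}$, and under $t\ge 8K\log\frac{Kt^2}{\delta}$ it is only $O(1)$, not a $K$-free multiple of $\sqrt{\frac1t\log\frac{Kt^3}{\delta}}$.
\item The discrepancy is a factor $\sqrt{48}\approx 6.9$ in that term, so no bookkeeping of $\sqrt2$ factors can recover it. Your Step~3 display in fact contains a stray $\sqrt2$ that does not belong there.
\end{itemize}
You have two options. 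State the stronger burn-in $t\ge 384K\log\frac{Kt^2}{\delta}$ explicitly and recover $1926$. Or keep only $t\ge T_e$ and settle for a constant of about $2834$ in $\gamma_t$.
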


The proof is deferred to Appendix~\ref{app:F_conv_proof}.
With a sufficient number of samples, the imputation model $\tilde{F}_{k,t}$ in the doubly robust estimator $\widehat{F}_{k,t}$ has enough accuracy to facilitate the estimation of the unselected rewards.

\begin{figure*}[t]
    \centering
    \begin{subfigure}{0.49\textwidth}
        \includegraphics[width=\textwidth]{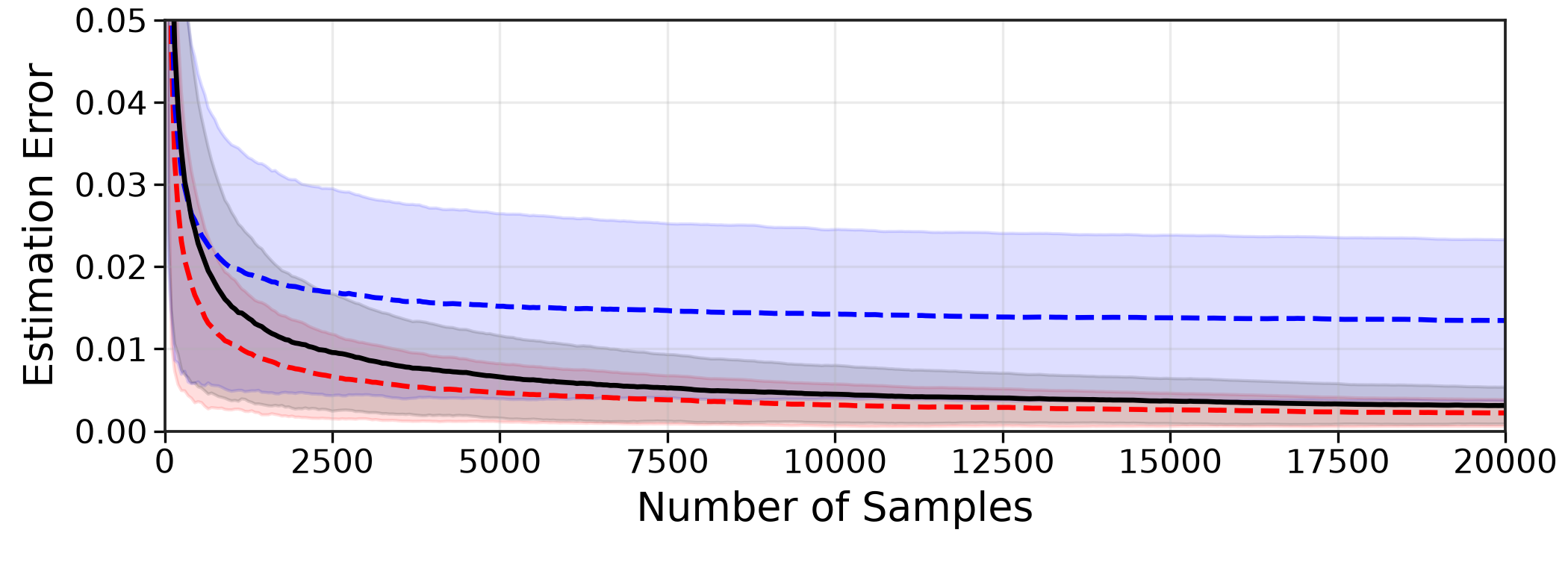}
        \caption{Error for the exploited arm (arm 1)}
    \end{subfigure}
    \hfill
    \begin{subfigure}{0.49\textwidth}
        \includegraphics[width=\textwidth]{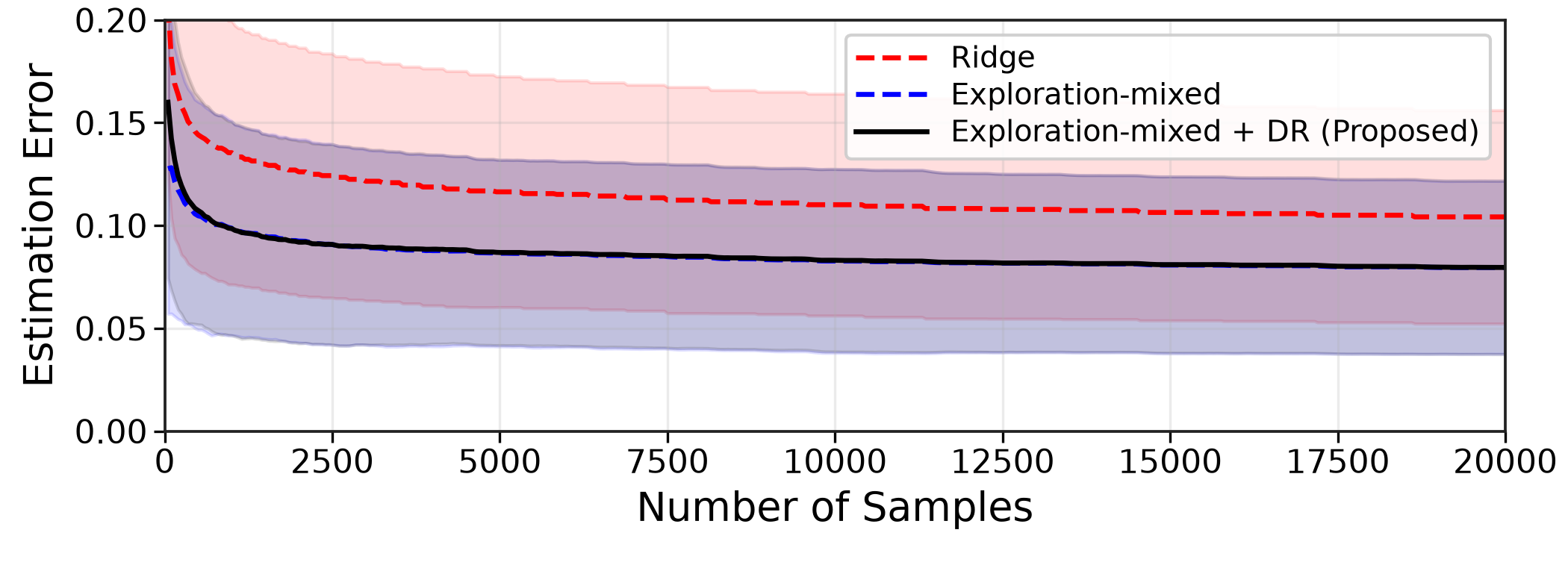}
        \caption{Error for the unselected arms (arms 2 and 3)}
    \end{subfigure}
    \caption{A comparison of CDF estimation errors of the conventional ridge,
    exploration-mixed, and proposed DR estimators for a 3-armed bandit.
    The lines and shaded regions represent the average and standard
    deviation, respectively, over \(1{,}000\) independent experiments.}
    \label{fig:cdf_error}
\end{figure*}
Figure~\ref{fig:cdf_error} show the error of the ridge estimator, mixed exploration estimator and the DR estimator. 
We consider \(K=3\) heterogeneous arms with Bernoulli, Gaussian, and Pareto rewards. 
The sampling rule selects arm \(1\) during each exploitation round, whereas arms \(2\) and \(3\) are observed only during the exploration rounds. 
For the exploited arm in Figure~\ref{fig:cdf_error}(a), the ridge estimator attains a slightly smaller error than the DR estimator since the ridge estimator receives fresh observations from arm 1 in almost every round.
Nevertheless, the error of the DR estimator 
remains substantially smaller than that of the exploration-mixed estimator. 
For the unselected arms in Figure~\ref{fig:cdf_error}(b), the ridge error remains high because direct observations are available only during exploration. 
In contrast, recycling the exploration samples allows both the exploration-mixed and DR estimators to continue updating their distribution functions during exploitation, resulting in substantially smaller errors. 
Thus, the proposed DR estimator maintains accurate estimation of the
exploited arm while enabling simultaneous learning of the unselected arms.

\subsection{Dominance Score Estimation}
\label{subsec:dominance_est}
Based on the empirical distribution functions, we construct a doubly robust estimator for the dominance score. 
Define $N_{a,t}:=\sum_{s=1}^{t}\II(a_s=a)$.
First, we define the estimator as:
\begin{equation*}
    \widehat{D}_{a,t} := \frac{1}{N_{a,t}} \sum_{s=1}^{t} \mathbb{I}(a_{s}=a) \sum_{k=1}^{K} \widehat{F}_{k,t}(R_{a,s}) \mathbb{I}(R_{a,s} \in \widehat{\mathcal{G}}_{k,t}),
\end{equation*}
where $\widehat{\mathcal{G}}_{k,t}=\{x\in\RR: \widehat{F}_{k,t}(x)\le \min_{k\neq k^\prime} \widehat{F}_{k^\prime,t}(x) \}$ is the empirical partition derived from $\widehat{F}_{k,t}$. 
The following theorem establishes the concentration of this estimator around the true dominance score.

\begin{theorem}
\label{thm:D_convergence}
With probability at least $1-12\delta/t^2$, the dominance score estimator satisfies:
\begin{equation*}
    |\widehat{D}_{a,t} - D_{a}| \le \beta_{a,t} := 3\gamma_t + \sqrt{\frac{2 \log\left(\frac{Kt^2}{\delta}\right)}{N_{a,t}}},
\end{equation*}
for all $a\in[K]$.
\end{theorem}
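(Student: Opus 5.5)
We split the error by introducing an oracle quantity that uses the true CDFs and the true partition but the same empirical average over arm $a$'s samples. Define
\[
\phi(r):=\sum_{k=1}^{K}F_k(r)\,\II(r\in\Gcal_k),\qquad
\widehat\phi_t(r):=\sum_{k=1}^{K}\widehat F_{k,t}(r)\,\II(r\in\widehat\Gcal_{k,t}),
\]
so that $D_a=\EE[\phi(R_a)]$ and $\widehat D_{a,t}=N_{a,t}^{-1}\sum_{s}\II(a_s=a)\widehat\phi_t(R_{a,s})$. Note that on $\Gcal_k$ the arm $k$ has the largest survival probability, so $\phi(r)=\min_k F_k(r)$ up to ties, since $F_k(r)=1-\PP(R_k>r)$ modulo the atom conventions. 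Likewise $\widehat\phi_t(r)=\min_k\widehat F_{k,t}(r)$ by the definition of $\widehat\Gcal_{k,t}$. We then write
\[
|\widehat D_{a,t}-D_a|\le \sup_r|\widehat\phi_t(r)-\phi(r)| + \Big|\tfrac{1}{N_{a,t}}\sum_{s}\II(a_s=a)\phi(R_{a,s})-D_a\Big|.
\]

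\textbf{Deterministic term.} Since the map $(F_1,\dots,F_K)\mapsto\min_k F_k$ is $1$-Lipschitz in the sup norm, on the event of Corollary~\ref{cor:uniform_estimator_rate} we get $\sup_r|\widehat\phi_t-\phi|\le\max_k\|\widehat F_{k,t}-F_k\|_\infty\le\gamma_t$. To be safe with the partition/tie conventions, I would instead argue directly. For $r\in\widehat\Gcal_{j,t}\cap\Gcal_k$ we have $|\widehat F_{j,t}(r)-F_k(r)|\le|\widehat F_{j,t}-F_j|+|F_j(r)-F_k(r)|$. Moreover $|F_j(r)-F_k(r)|\le 2\gamma_t$, because $F_k(r)\le F_j(r)$ and $\widehat F_{j,t}(r)\le\widehat F_{k,t}(r)$ together give $F_j-F_k\le 2\gamma_t$. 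This yields the $3\gamma_t$ constant in $\beta_{a,t}$. A further subtlety is the $<$ versus $\le$ definition of $F_k$ against $\PP(R_k\le r)$ in $D_a$; we handle it by applying the corollary to the right-continuous version as well, or by absorbing it into the same bound.

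\textbf{Stochastic term.} The summands $\phi(R_{a,s})\in[0,1]$ are evaluated at fresh rewards of arm $a$, which are i.i.d.\ from $\PP_a$ and independent of the selection history given the past. The centered sum $\sum_s\II(a_s=a)(\phi(R_{a,s})-D_a)$ is therefore a martingale with increments bounded by $1$. We apply Azuma--Hoeffding, or Hoeffding on the i.i.d.\ reward stack of arm $a$, with a union bound over the possible values $N_{a,t}\in[t]$ and over $a\in[K]$. This gives deviation at most $\sqrt{2\log(Kt^2/\delta)/N_{a,t}}$ with failure probability of order $\delta/t^2$.

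\textbf{Main obstacle.} The main difficulty is that $\widehat\phi_t$ is built from the same data, including arm $a$'s rewards. The decomposition avoids this only because the first term is controlled uniformly in $r$, which is exactly why the sup-norm guarantee of Corollary~\ref{cor:uniform_estimator_rate} is needed. Combining the two events by a union bound gives the stated probability; matching the exact constant $12\delta/t^2$ requires bookkeeping of the events from the corollary.
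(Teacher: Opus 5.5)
Your proposal is correct and is essentially the paper's proof: the paper splits the error into three terms. These are an Azuma term for the oracle average $N_{a,t}^{-1}\sum_s\II(a_s=a)\phi(R_{a,s})-D_a$, a partition-discrepancy term bounded by $2\gamma_t$ via exactly your $\widehat\Gcal_{j,t}\cap\Gcal_k$ argument, and a CDF-error term bounded by $\gamma_t$; you merge the last two into a single sup-norm term. Your extras are improvements rather than gaps: the identities $\phi=\min_k F_k$ and $\widehat\phi_t=\min_k\widehat F_{k,t}$ already give $\gamma_t$ in place of $3\gamma_t$, and your union bound over the values of $N_{a,t}$ is more careful than the paper's Azuma step, provided you use the range-one Hoeffding form so the failure probability stays of order $\delta/t^2$. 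Note, however, that neither your argument nor the paper's delivers the literal $12\delta/t^2$, because Corollary~\ref{cor:uniform_estimator_rate} alone already fails with probability $21\delta/t^2$.
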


\textit{Proof Sketch.} 
Define the following martingale difference:
\[
\epsilon_{a,s}:=\sum_{k=1}^{K}\Big(\int_{\mathcal{G}_{k,t}}\!\!F_{k}(y)d\mathbb{P}_{a}(y)\!-\!F_{k}(R_{a,s})\mathbb{I}(R_{a,s}\in\mathcal{G}_{k,t})\Big).
\] 
Observe that $\epsilon_{a,s}\in[-1,1]$. Next, define
\[
\bar{\epsilon}_{a,s}:=\sum_{k=1}^{K}\big(F_{k}(R_{a,s})-\widehat{F}{}_{k,t}(R_{a,s})\big)\mathbb{I}(R_{a,s}\in\widehat{\mathcal{G}}_{k,t}).
\]
This term represents the distribution function error component that satisfies $|\bar{\epsilon}_{a,t}|\le \Vert\widehat{F}_{k,t}-F_k \Vert_{\infty}$. Next, define the discrepancy of the partition:
\[
\check{\epsilon}_{a,s}:=\sum_{k=1}^{K}F_{k}(R_{a,s})\big(\mathbb{I}(R_{a,s}\in\mathcal{G}_{k,t})-\mathbb{I}(R_{a,s}\in\widehat{\Gcal}_{k,t})\big).
\]
The estimation error can be decomposed into three primary components:
\begin{equation}
\begin{aligned}
    &|\widehat{D}_{a,t}-D_{a}|\le\bigg|\frac{1}{N_{a,t}}\sum_{s=1}^{t}\mathbb{I}(a_{s}=a)\epsilon_{a,s}\bigg|\\&+\!\bigg|\frac{1}{N_{a,t}}\sum_{s=1}^{t}\mathbb{I}(a_{s}=a)\check{\epsilon}_{a,s}\bigg|+\bigg|\frac{1}{N_{a,t}}\sum_{s=1}^{t}\mathbb{I}(a_{s}=a)\bar{\epsilon}_{a,s}\bigg|.
\end{aligned}
\label{eq:error_decomp}
\end{equation}
The first term represents the concentration of the empirical dominance score,  the second term accounts for the bias introduced by the empirical partition $\widehat{\mathcal{G}}_k$, and the third term captures the convergence of the distribution functions.

The first term is bounded by leveraging Azuma inequality to obtain with probability at least $1-2\delta/(Kt^2)$
\[
\bigg|\frac{1}{N_{a,t}}\sum_{s=1}^{t}\mathbb{I}(a_{s}=a)\epsilon_{a,s}\bigg| \le \sqrt{\frac{2 \log\left(\frac{Kt^2}{\delta}\right)}{N_{a,t}}}.
\]

The second term is bounded in the following lemma.
\begin{lemma}[Partition Discrepancy Bound]
\label{lem:partition_bound}
Let $\gamma_t$ denote a high-probability bound such that $\max_k \| F_k - \widehat{F}_{k,t} \|_\infty \le \gamma_t$. 
Then,
\begin{equation*}
    \bigg|\frac{1}{N_{a,t}}\sum_{s=1}^{t}\mathbb{I}(a_{s}=a)\check{\epsilon}_{a,s}\bigg| \le 2\gamma_t.
\end{equation*}
\end{lemma}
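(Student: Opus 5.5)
The plan is to prove a deterministic, pointwise bound: $|\check{\epsilon}_{a,s}|\le 2\gamma_t$ for every realization $R_{a,s}=r$, on the event $\max_k\|F_k-\widehat{F}_{k,t}\|_\infty\le\gamma_t$. Averaging over the $N_{a,t}$ rounds with $a_s=a$ then gives the claim by the triangle inequality. No probabilistic argument is needed beyond this event, which is supplied by Corollary~\ref{cor:uniform_estimator_rate}.

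\textbf{Step 1 (reduce the sum to one difference).} Fix $r\in\mathbb{R}$. The true sets $\{\mathcal{G}_{k,t}\}_k$ form a partition, and I will assume ties in $\widehat{\mathcal{G}}_{k,t}$ are broken so that these sets also form a partition. Then exactly one index $k^\star=k^\star(r)$ has $r\in\mathcal{G}_{k^\star,t}$, and exactly one index $\hat{k}=\hat{k}(r)$ has $r\in\widehat{\mathcal{G}}_{\hat k,t}$. Hence
\[
\check{\epsilon}_{a,s}=F_{k^\star}(R_{a,s})-F_{\hat k}(R_{a,s}).
\]
Since $\mathbb{P}(R_k>r)=1-F_k(r)$, maximal survival probability is the same as minimal CDF. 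So $k^\star$ minimizes $F_k(r)$ over $k$, while $\hat k$ minimizes $\widehat{F}_{k,t}(r)$. I will use the convention $F_k(r)=\mathbb{P}(R_k\le r)$, matching the integrand in Definition~\ref{def:dominance}; the strict/non-strict inequality in the definition of $F_k$ does not affect the argument, because the bound is applied to whichever version appears in $\check{\epsilon}$.

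\textbf{Step 2 (two-sided bound).} The upper bound is immediate: minimality of $k^\star$ gives $F_{k^\star}(r)-F_{\hat k}(r)\le 0$. For the lower bound, chain the uniform error bound with the empirical minimality of $\hat k$:
\[
F_{\hat k}(r)\le \widehat{F}_{\hat k,t}(r)+\gamma_t\le \widehat{F}_{k^\star,t}(r)+\gamma_t\le F_{k^\star}(r)+2\gamma_t .
\]
Therefore $-2\gamma_t\le \check{\epsilon}_{a,s}\le 0$, and in particular $|\check{\epsilon}_{a,s}|\le 2\gamma_t$. This is the familiar ``plug-in argmin costs at most twice the uniform error'' argument.

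\textbf{Step 3 (average).} Apply the triangle inequality to the normalized sum:
\[
\bigg|\frac{1}{N_{a,t}}\sum_{s}\mathbb{I}(a_s=a)\check{\epsilon}_{a,s}\bigg|\le \frac{1}{N_{a,t}}\sum_s\mathbb{I}(a_s=a)\cdot 2\gamma_t=2\gamma_t .
\]
The bound holds on the same event as Corollary~\ref{cor:uniform_estimator_rate}.

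I expect the only delicate point to be Step 1, namely making sure both families really are partitions so that each indicator sum collapses to a single term. If $\widehat{\mathcal{G}}_{k,t}$ were defined with ties included in several sets, I would first fix a deterministic tie-breaking rule, for example the smallest index. This does not change the chain in Step 2, since that chain uses only $\widehat{F}_{\hat k,t}(r)\le\widehat{F}_{k^\star,t}(r)$.
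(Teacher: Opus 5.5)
Your proposal is correct and follows essentially the same route as the paper. Both arguments prove the deterministic pointwise bound $|\check{\epsilon}_{a,s}|\le 2\gamma_t$ from the chain $F_{\hat k}(r)\le \widehat{F}_{\hat k,t}(r)+\gamma_t\le \widehat{F}_{k^\star,t}(r)+\gamma_t\le F_{k^\star}(r)+2\gamma_t$ on the region where the true and empirical partitions disagree, and then average. The paper writes this as a sum over the cells $\mathcal{G}_{k,t}\cap\widehat{\mathcal{G}}_{k',t}$, which is the same as your collapse to a single difference. Your explicit tie-breaking rule for $\widehat{\mathcal{G}}_{k,t}$ and your convention $F_k(r)=\mathbb{P}(R_k\le r)$ state precisely the assumptions the paper's proof uses implicitly.
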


\begin{proof}
Consider a point $y$ such that $y \in \mathcal{G}_k$ in the true partition, but $y \in \widehat{\mathcal{G}}_{k'}$ in the empirical partition for some $k' \neq k$. By the definition of the empirical partition $\widehat{\mathcal{G}}_{k'}$, we have $\widehat{F}_{k',t}(y) \le \widehat{F}_{k,t}(y)$ (assuming ties favor the dominant arm). 
Using the uniform convergence $\max_k \| F_k - \widehat{F}_{k,t} \|_\infty \le \gamma_t$, it follows that:
\begin{equation*}
    F_{k'}(y) - \gamma_t \le \widehat{F}_{k',t}(y) \le \widehat{F}_{k,t}(y) \le F_{k}(y) + \gamma_t.
\end{equation*}
This implies $F_{k'}(y) - F_{k}(y) \le 2\gamma_t$. Conversely, since $y \in \mathcal{G}_k$ implies $F_k(y) \le F_{k'}(y)$, we have the bound $|F_k(y) - F_{k'}(y)| \le 2\gamma_t$ for any $y$ where the true and empirical partitions disagree.

For each $a\in[K]$ and $s\in[t]$, the difference $\check{\epsilon}_{a,s}$ can be expressed by integrating over the regions where the partitions $\mathcal{G}_k$ and $\widehat{\mathcal{G}}_{k'}$ overlap:
\begin{align*}
    |\check{\epsilon}_{a,s}|&\le\sum_{k\neq k^{\prime}}^{K}\II(R_{a,s}\in\Gcal_{k,t}\cap\widehat{\Gcal}_{k^{\prime},t})|F_{k}(R_{a,s})-F_{k^{\prime}}(R_{a,s})|\\
    &\le2\gamma_{t}\sum_{k=1}^{K}\sum_{k^{\prime}\in[K]\setminus\{k\}}\II(R_{a,s}\in\Gcal_{k,t}\cap\widehat{\Gcal}_{k^{\prime},t})\\
    &\le2\gamma_{t},
\end{align*}
where the final inequality holds because $\bigcup_{k,k'} (\mathcal{G}_k \cap \widehat{\mathcal{G}}_{k'})$ is a partition of $\mathbb{R}$. 
This completes the proof.
\end{proof}

The last term in the error decomposition~\eqref{eq:error_decomp} uses the concentration of the doubly robust empirical measure (Theorem~\ref{cor:uniform_estimator_rate}) to obtain
\[
\bigg|\frac{1}{N_{a,t}}\sum_{s=1}^{t}\mathbb{I}(a_{s}=a)\bar{\epsilon}_{a,s}\bigg| \le \gamma_t.
\]
This completes the proof of the error bound of the dominance score.

\section{Proposed Algorithm}

In this section, we present our proposed algorithm, Doubly Robust
Dominance-Score Elimination (\texttt{DRDSE}; Algorithm~\ref{alg:DRDSE}),
which identifies the arm with the maximum dominance score.
We establish an upper bound on the sample complexity for dominant arm
identification and introduce the key technical lemmas required to prove
the convergence of the DR dominance score estimator.

\subsection{Doubly Robust Dominance-Score Elimination Algorithm}
\label{subsec:alg}
The proposed DR Dominance-Score Elimination (\texttt{DRDSE}) is a
confidence-based elimination algorithm for identifying the dominant arm.
At each round, the algorithm maintains an active set $\mathcal{A}_t$ of
candidate arms and eliminates any arm whose dominance score estimate is
sufficiently below that of another active arm, as certified by the
confidence radius $\beta_t$.
To estimate the dominance scores, the algorithm uses the doubly robust
estimator $\widehat{D}_{a,t}$ and pulls one arm per round according to
the sampling rule defined over the exploration rounds $\mathcal{E}$ and
the exploitation rounds $\mathcal{E}^{c}$.
The algorithm proceeds until a single arm remains, which is returned as
the estimated dominant arm.

\begin{algorithm}[t]
\caption{DR Dominance-Score Elimination (\texttt{DRDSE})}
\label{alg:DRDSE}
\begin{algorithmic}
\STATE \textbf{Input:} Confidence level $\delta\in(0,1)$, exploration constant $c>0$, Pseudo-action sampling probability $p\in(0,1)$, confidence radius $\beta_{a,t}:[K]\times\NN \to \RR$.
\STATE Initialize $\mathcal A_1\gets[K]$ and $t\gets1$.
\WHILE{$|\mathcal A_t|\ge2$}
    \IF{$|[t]\cap\Ecal| \le \lceil cK\log\frac{Kt^2}{\delta}\rceil$} 
        \STATE Pull all arms in $[K]$ for $K$ rounds
        \STATE Set $\Ecal \leftarrow \Ecal \cup \{t,\ldots,t+K-1\}$ and $t \leftarrow t+K$
    \ELSE
        \STATE Pull all arms in $\mathcal A_t$.
        \STATE Set $\Acal_{t+|\Acal_t|} \gets\Acal_t$ and $t\gets t+|\Acal_t|$
    \STATE Compute CDF estimators $\{\widetilde F_{k,t}\}_{k=1}^K$ and
    $\{\widehat F_{k,t}\}_{k=1}^K$.
    \STATE Construct $\{\widehat{\mathcal G}_{k,t}\}_{k=1}^K$
    and compute $\{\widehat D_{a,t}\}_{a=1}^{K}$.
    \FOR{each $b\in\mathcal A_t$}
        \IF{$\exists\,a\in\mathcal A_t$ such that
            $\widehat D_{a,t}-\widehat D_{b,t}>2\beta_t$}
            \STATE Set $\mathcal A_{t} \gets\mathcal A_{t}\setminus\{b\}$.
            \ENDIF
        \ENDFOR
    \ENDIF
    
\ENDWHILE

\STATE \textbf{Return} the remaining arm
$\widehat a^\star\in\mathcal A_t$.
\end{algorithmic}
\end{algorithm}

\begin{restatable}[Sample Complexity for Dominant Arm Identification]{theorem}{thmSampleComplexityDominantArm}
\label{thm:sample_complexity}
Let $a_\star=\arg\max_{a\in[K]}D_a$ and let $\Delta_a := D_{a_\star} - D_a > 0$ denote the sub-optimality gap for each sub-optimal arm $a \neq a_\star$. 
For any confidence $\delta \in (0, 1)$, \texttt{DRDSE} with $p=1/2$, $c=8$ and $\beta_{a,t}:=3\gamma_t+\sqrt{\frac{2\log\frac{Kt^2}{\delta}}{N_{a,t}}}$ terminates and identifies $a_\star$.
With probability at least $1 - 21\delta$, the total sample complexity is bounded by:
\begin{equation*}
64K\log\frac{16K\sqrt{K}}{\sqrt{\delta}e}+\sum_{a\neq a_{\star}} \frac{4C}{\Delta_a^2}\log\frac{C\sqrt{K} \sum_{a\neq a_{\star}} \Delta_a^{-2}} {\sqrt{\delta}e},
\end{equation*}
for some absolute constant $C>0$.
\end{restatable}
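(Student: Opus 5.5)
The proof follows the standard successive-elimination template, with the concentration coming from Theorem~\ref{thm:D_convergence} and Corollary~\ref{cor:uniform_estimator_rate}. It has three stages: define a good event, show that on this event the algorithm never eliminates $a_\star$ and eliminates every suboptimal arm once its radius is small, and then invert the radius condition to count samples.

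\textbf{Good event.} Let $\mathcal{W}$ be the event that, for every round $t\ge T_e$ at which the algorithm runs an elimination step, the conclusions of Corollary~\ref{cor:uniform_estimator_rate} and Theorem~\ref{thm:D_convergence} both hold, so that $|\widehat D_{a,t}-D_a|\le\beta_{a,t}$ for all $a\in[K]$. At a fixed $t$ these fail with probability at most $21\delta/t^2$ and $12\delta/t^2$. A union bound over $t$, using $\sum_t t^{-2}\le\pi^2/6$ together with the fact that $T_e\ge 2$ removes the early rounds, gives $\PP(\mathcal{W}^c)\le 21\delta$ once the constants are absorbed.

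\textbf{Elimination logic on $\mathcal{W}$.} In exploitation rounds every active arm is pulled once per sweep, so all active arms share the same count $N_{a,t}$ up to $O(1)$. Hence $\beta_{a,t}$ is essentially common across active arms, and I write it as $\beta_t$.

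First, $a_\star$ is never removed. For any active $a$ we have $\widehat D_{a,t}-\widehat D_{a_\star,t}\le D_a-D_{a_\star}+2\beta_t\le 2\beta_t$, so the elimination test never fires against $a_\star$.

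Second, a suboptimal $b$ is removed as soon as $4\beta_t<\Delta_b$. In that case $\widehat D_{a_\star,t}-\widehat D_{b,t}\ge\Delta_b-2\beta_t>2\beta_t$.

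\textbf{Sample count.} Arm $b$ therefore stops being pulled once both terms of $\beta_t$ are at most $\Delta_b/8$:
\[
3\gamma_t\le\Delta_b/8 \quad\text{and}\quad \sqrt{2\log(Kt^2/\delta)/N_{b,t}}\le\Delta_b/8 .
\]
Since $\gamma_t=1926\sqrt{t^{-1}\log(Kt^3/\delta)}$, both conditions take the form $u\ge C\Delta_b^{-2}\log(Ku^2/\delta)$, where $u$ is $t$ or $N_{b,t}$.

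These implicit conditions are solved with the standard inversion lemma: $u\ge 2A\log(2A\sqrt{K}/(\sqrt{\delta}e))$ implies $u\ge A\log(Ku^2/\delta)$. This lemma is what produces the $\sqrt{K}/(\sqrt{\delta}e)$ form inside the logarithm.

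The exploration budget is handled by the same lemma. It requires $|\mathcal{E}\cap[t]|\approx 8K\log(Kt^2/\delta)$ rounds, each of which pulls $K$ arms. Inverting with $A=32K$ gives the leading term $64K\log(16K\sqrt K/(\sqrt\delta e))$.

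For the exploitation phase, the total number of samples is $\sum_b N_b$, where each $N_b\le 4C\Delta_b^{-2}\log(\cdot)$. The argument of the logarithm can be taken uniformly as $C\sqrt K\sum_{a\ne a_\star}\Delta_a^{-2}/(\sqrt\delta e)$, because the time index $t$ at which arm $b$ is eliminated is at most the total sample count, which is itself bounded by $C\sum_a\Delta_a^{-2}\log(\cdot)$. Arm $a_\star$ is pulled no more often than the last suboptimal arm, so it is covered by the sum. Summing over $b$ gives the second term of the bound.

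\textbf{Main obstacle.} The delicate step is the $\gamma_t$ condition. It depends on the global time $t$ rather than on $N_{b,t}$, so the elimination time of each arm must be bounded through the total count, and this produces a self-referential inequality. I would resolve it by bounding the global $t$ at the moment the last arm is eliminated, writing $t\le$ (exploration budget) $+\sum_b N_b$. Since $t\ge N_{b,t}$ always, the $\gamma_t$ requirement is dominated by the $N_{b,t}$ requirement up to constants, and both can be folded into $C$. The remaining work is to apply the inversion lemma once to the resulting aggregate inequality.
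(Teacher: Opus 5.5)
\paragraph{Comparison with the paper.} Your argument is the paper's proof. It has the same five steps: a good event from Theorem~\ref{thm:D_convergence} and Corollary~\ref{cor:uniform_estimator_rate} at every elimination time; $a_\star$ is never removed; a suboptimal $b$ is removed once $\beta_{b,t}+\beta_{a_\star,t}\le\Delta_b/2$; the $\gamma_t$ requirement is dominated by the $N_{b,t}$ requirement via $t\ge N_{b,t}$ and $\log(Kt^3/\delta)\le\frac{3}{2}\log(Kt^2/\delta)$; and the self-referential inequality for the stopping time is inverted once.

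Two setup points need tightening. First, active arms have \emph{exactly} equal counts at every check time, because each exploration block and each sweep pulls every active arm once. This is what justifies using a single $\beta_t$ in the test. Second, the constant $21$ is explicit, so you cannot absorb constants into it. Instead, note that checks only occur at $t\ge T_e>8K\log(K/\delta)>11$, so $\sum_{t\ge T_e}33\delta/t^2\le 3\delta$.

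\paragraph{The inversion step.} Your inversion lemma is false as written. Substitute $u=2AL$ with $L=\log\frac{2A\sqrt{K}}{\sqrt{\delta}e}$ into $u\ge A\log(Ku^2/\delta)$. The claim reduces to $\log L\le -1$, which fails whenever $L>1/e$, i.e.\ in every relevant regime, because the $2A\log L$ slack has been dropped. Use Lemma~\ref{lem:logt} instead: $t\ge 4A(1+\log\frac{2A\sqrt{B}}{e})$ implies $t\ge A\log(Bt^2)$.

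Your exploration count is also off. $|\mathcal{E}\cap[t]|$ counts single pulls, since a block contributes $K$ rounds with one pull each. So the burn-in condition is $t\ge 8K\log(Kt^2/\delta)$. Lemma~\ref{lem:logt} with $A=8K$ and $B=K/\delta$ (not $A=32K$) gives
$T_e\le 32K(1+\log\frac{16K\sqrt{K}}{\sqrt{\delta}e})\le 64K\log\frac{16K\sqrt{K}}{\sqrt{\delta}e}$; the last step holds because that logarithm is at least $1$.

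For the second term, apply the same lemma to your aggregate inequality with $A=\frac{C}{2}\sum_{b\neq a_\star}\Delta_b^{-2}$ and $B=K/\delta$. The ongoing $8K\log(KT^2/\delta)$ exploration is absorbed, because $\Delta_b\le 1$ gives $\sum_{b\neq a_\star}\Delta_b^{-2}\ge K-1$. Using $4A(1+L)\le 8AL$ then yields the stated second term. With these replacements the proof goes through.
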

The sample complexity upper bound matches the lower bound of the sample complexity up to logarithmic factors.
In addition, the bound has the same rate as the best arm identification literature.
This is possible due to the convergence of all $K$ reward CDFs including the unselected arms which we achieve by mixing the recycled samples and DR estimators.

\section{Numerical Experiments}
\label{sec:experiments}
We compare the proposed \texttt{DRDSE} to the average estimator-based method on the Gaussian bandit setting and the Biomedical treatment-response setting.
Detailed experimental settings and algorithm descriptions are provided in Appendix.

\paragraph{Gaussian Bandit Setting}
Figure~\ref{fig:gaussian_sample_complexity_boxplot} present the total number of samples used to terminate with the identification of the dominant arm under the Gaussian bandit setting. 
While both algorithms identify the true dominant arm (arm~\(1\)) in all \(500\) runs at every confidence level over all values of \(\delta\), the median total-pull of \texttt{DRDSE} ranges from \(72.2\%\) to \(74.3\%\) compared to the average estimator-based method.
This is consistent with the simultaneous uniform convergence of the DR estimator under the elimination-based sampling rule, whereas the average method
pulls all \(K\) arms once per iteration.

\begin{figure}[t]
    \centering
    \includegraphics[width=\columnwidth]
    {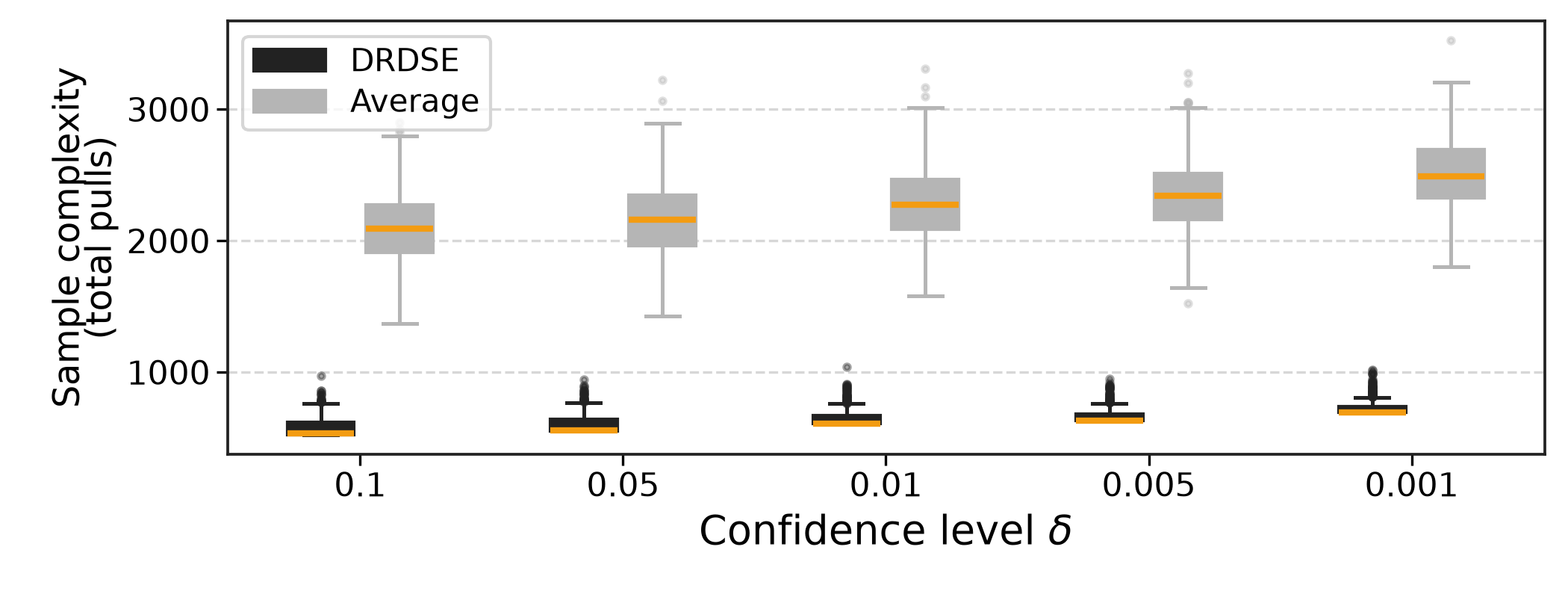}
    \caption{
        Sample complexity of \texttt{DRDSE} and the average
        estimator-based method under the Gaussian bandit setting.
        Both methods correctly identify the best dominance score arm in all \(500\)
        independent runs at every confidence level.
    }
    \label{fig:gaussian_sample_complexity_boxplot}
\end{figure}

\begin{figure}[t]
    \centering
    \includegraphics[width=\columnwidth]
    {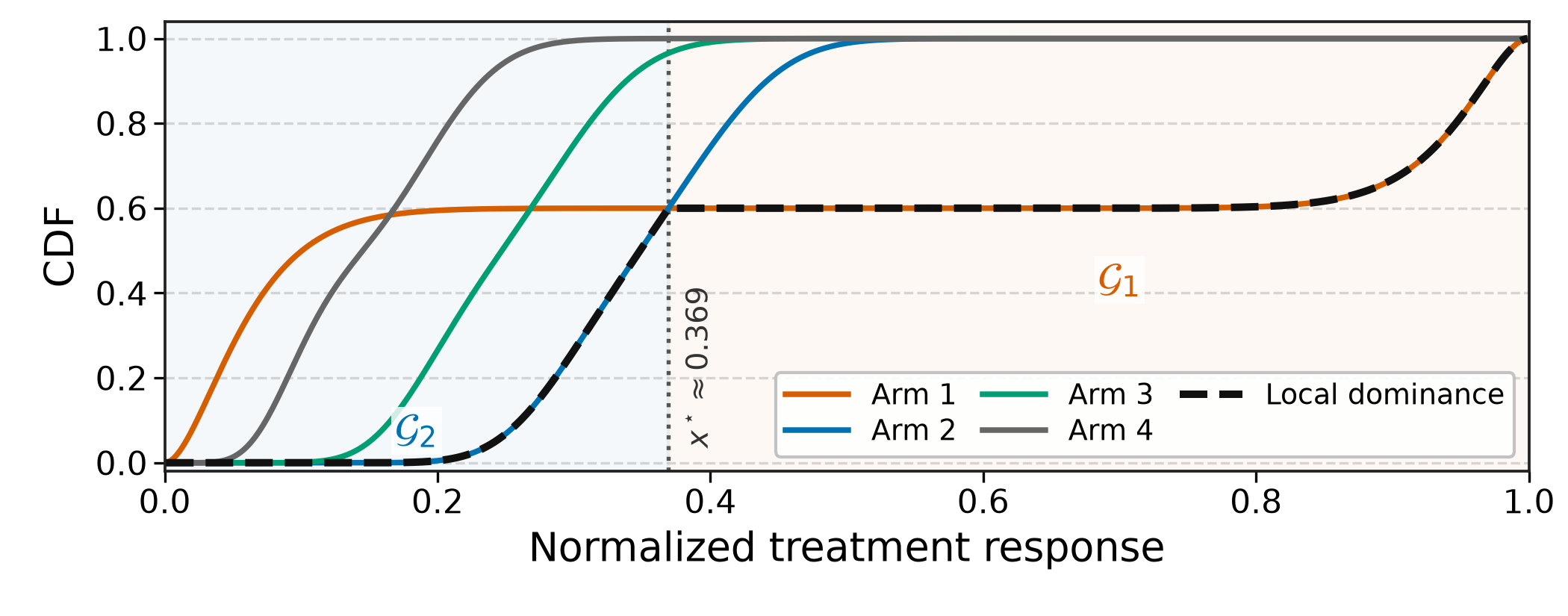}
    \caption{
        Biomedical reward CDFs and local-dominance regions.
        The black dashed curve follows the CDF of the locally dominant
        arm in each region. Arm~\(1\) has the largest mean reward,
        whereas arm~\(2\) has the largest dominance score.
    }
    \label{fig:biomedical_cdf}
\end{figure}

\paragraph{Biomedical Treatment-Response Setting}
Figure~\ref{fig:biomedical_cdf} illustrates the crossing CDFs of the treatment responses which shows that the mean-based identification may incur instantaneous harmful effect.
The larger mean reward of arm~$1$ is driven by large outcomes from its high-response component, although a substantial proportion of its rewards remains near the lower end.
In contrast, arm~$2$ has a larger proportion of its rewards in the moderate-response range. 
The dominance score compares each candidate arm with the locally dominant arm in each region and aggregates these comparisons over the candidate arm's reward distribution, yielding $D_2>D_1$ even though arm~$1$ has the largest mean reward.

\begin{figure}[t]
    \centering
    \includegraphics[width=\columnwidth]
    {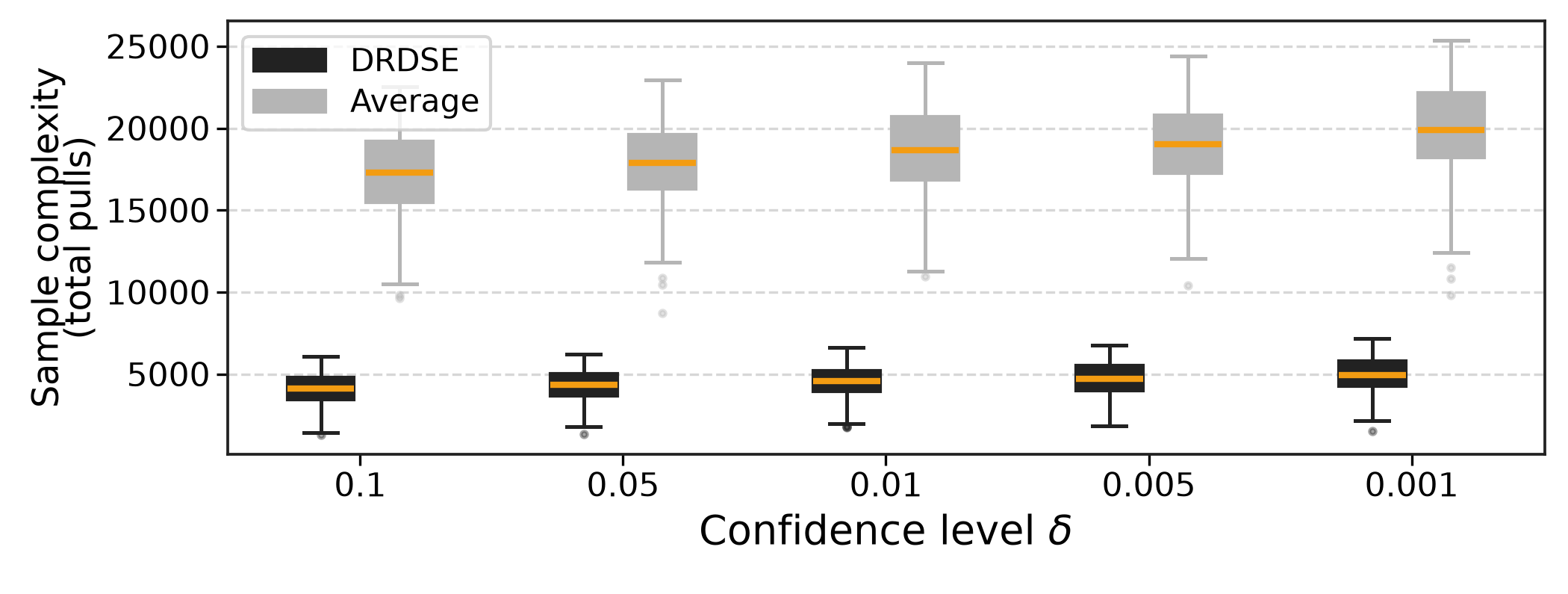}
    \caption{
        Sample complexity of \texttt{DRDSE} and the average
        estimator-based method under the biomedical treatment-response
        setting. Both methods correctly identify the best dominance score arm in all \(500\) independent runs at every confidence level.
        }
    \label{fig:sample_complexity_biomedical}
\end{figure}

\begin{figure}[t]
    \centering
    \includegraphics[width=\columnwidth]
    {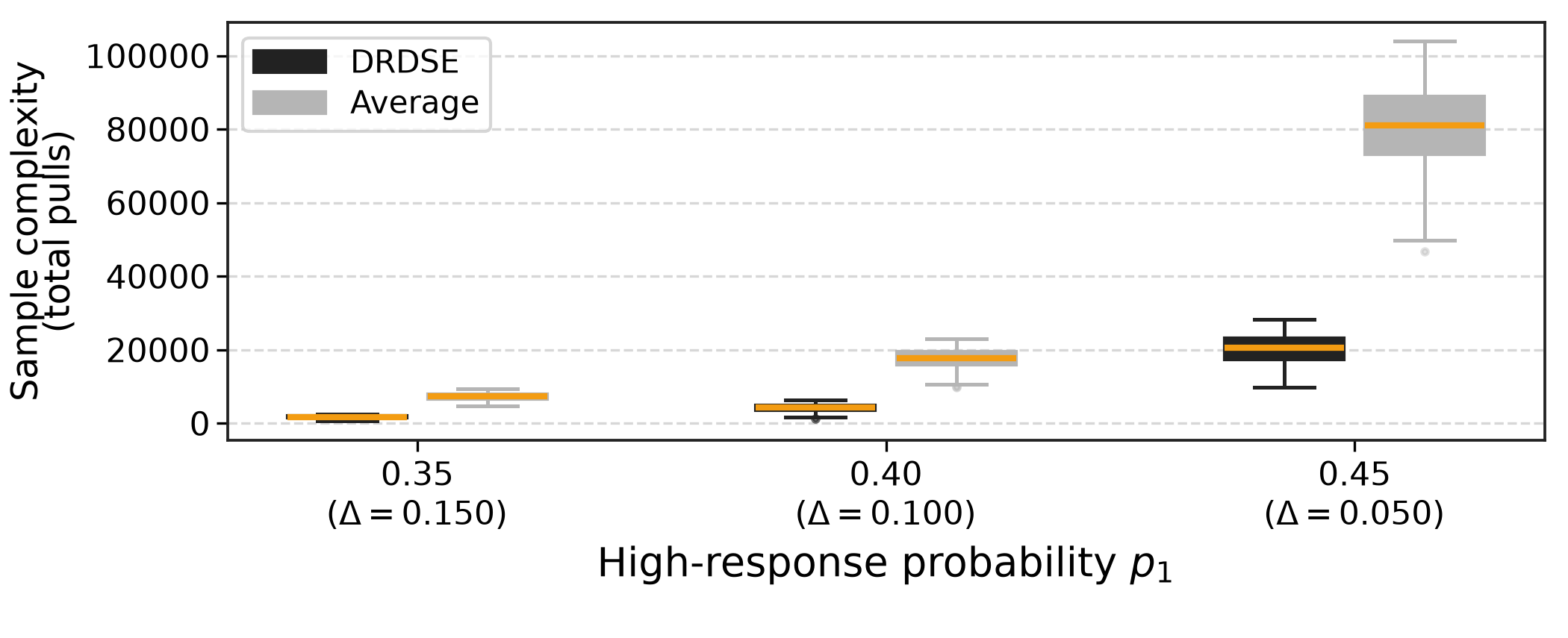}
    \caption{
        Sensitivity of sample complexity to the high-response probability
        \(p_1\) at \(\delta=0.05\). Both methods correctly identify
        the best dominance score arm in all \(500\) independent runs for every value of
        \(p_1\).
    }
    \label{fig:biomedical_p1_sensitivity}
\end{figure}

Figure~\ref{fig:sample_complexity_biomedical} presents the total number
of arm pulls required by \texttt{DRDSE} and the average estimator-based
method under the biomedical treatment-response setting. Across all
values of \(\delta\), the median of the total sample complexity of \texttt{DRDSE} reduces by \(74.7\%\) to \(76.0\%\) compared to the average estimator-based algorithm.
These results indicate that the sample-efficiency advantage of \texttt{DRDSE} persists in this heterogeneous, non-Gaussian setting without any observed loss in exact recovery.

Figure~\ref{fig:biomedical_p1_sensitivity} reports the results for
$\delta=0.05$. As the dominance-score gap decreases, both methods
require more arm pulls. The stopping time of \texttt{DRDSE} also becomes
more variable when the dominance-score gap decreases to approximately $0.05$, since some runs require additional observations to separate the two leading
arms. 
Nevertheless, the median sample complexity of \texttt{DRDSE} is reduced by $74.7\%-76.4\%$ compared to the average estimator based method.

\bibliographystyle{plainnat}
\bibliography{references}

\appendix
\onecolumn

\section{Related Works}
Classical best-arm identification in multi-armed bandits typically defines the best arm as the arm with the largest mean reward. Early pure-exploration methods study how to identify an $\epsilon$-optimal arm with high probability using confidence-based elimination procedures \citep{evendar2006action}, and subsequent work develops finite-sample guarantees for best-arm identification under fixed-budget or fixed-confidence settings \citep{audibert2010best,kaufmann2016complexity}. 
While mean-based comparison is simple and statistically tractable, it may fail to capture important distributional features when the rewards have crossing CDFs.

To address these limitations, several alternative comparison criteria have been studied. Mean-variance and risk-averse bandit formulations compare arms by incorporating dispersion or risk-return tradeoffs rather than expected reward alone \citep{sani2012risk,vakili2016risk,zhu2020thompson}. 
Quantile-based approaches compare arms through distributional functionals such as upper or lower quantiles, allowing the learner to focus on tail behavior or qualitative reward levels \citep{szorenyi2015qualitative,zhang2021quantile,lau2025quantile}. Pairwise winning criteria arise in dueling bandits and preference-based learning, where the learner observes noisy comparisons between pairs of arms instead of absolute rewards \citep{yue2012k,bengs2021preference}. 
These criteria provide richer notions of comparison than the mean, but each focuses on a specific quantile or pairwise preference structure. 
In contrast, our proposed dominance score compares arms by comparing with the locally dominant arm over the partitioned region.

Our estimation procedure is related to the literature on semiparametric efficiency, where auxiliary models are introduced to achieve greater efficiency than the nonparametric estimation~\citep{bickel1993efficient,vaart1998asymptotic,tsiatis2006semiparametric}. 
The widely used auxiliary models are probability selection model and the imputation model which is related to inverse-propensity weighting, and doubly robust estimation. 
Doubly robust estimators combine weighting and outcome regression, and remain consistent when either the propensity model or the outcome model is correctly specified \citep{bang2005doubly}. 
Doubly robust estimation has also been applied in sequential decision-making and bandit problems under adaptive data collection \citep{kim2021doubly}. 
Our work builds on this line of research by developing a doubly robust estimator for the proposed distributional dominance score and establishing its finite-sample concentration guarantees.

\section{Calculations for the Example}
\label{app:example_calculations}

Consider the following reward distributions:
\[
\begin{aligned}
R_1 &=
\begin{cases}
16, & \text{with probability }0.35,\\
31, & \text{with probability }0.65,
\end{cases}\\
R_2 &=
\begin{cases}
27, & \text{with probability }0.44,\\
30, & \text{with probability }0.56,
\end{cases}\\
R_3 &=
\begin{cases}
11, & \text{with probability }0.47,\\
18, & \text{with probability }0.11,\\
45, & \text{with probability }0.42,
\end{cases}\\
R_4 &=
\begin{cases}
21, & \text{with probability }0.33,\\
23, & \text{with probability }0.65,\\
24, & \text{with probability }0.02.
\end{cases}
\end{aligned}
\]
First, consider pairwise winning probabilities. Direct calculation
gives
\[
\begin{array}{c|cccc}
\mathbb{P}(R_a>R_b) & b=1 & b=2 & b=3 & b=4\\
\hline
a=1 & - & 0.65 & 0.5415 & 0.65\\
a=2 & 0.35 & - & 0.58 & 1.00\\
a=3 & 0.4585 & 0.42 & - & 0.42\\
a=4 & 0.35 & 0 & 0.58 & -
\end{array}
\]
Thus, arm $1$ is the Condorcet winner since it beats every other arm
pairwise:
\[
\mathbb{P}(R_1>R_b)>0.5
\qquad
\text{for all } b\neq 1.
\]
Now the Borda score is defined as
\[
B_a=\sum_{b\neq a}\mathbb{P}(R_a>R_b).
\]
The Borda scores are
\[
B_1=1.8415,\quad B_2=1.93,\quad
B_3=1.2985,\quad B_4=0.93.
\]
The Borda winner is arm $2$.
The mean rewards are provided in
Table~\ref{tab:example-means}.
\begin{table}[ht]
\centering
\begin{tabular}{cc}
\toprule
Arm $a$ & $\mathbb{E}[R_a]$ \\
\midrule
1 & $25.75$ \\
2 & $\mathbf{28.68}$ \\
3 & $26.05$ \\
4 & $22.36$ \\
\bottomrule
\end{tabular}
\caption{Mean rewards of the four arms in Example.}
\label{tab:example-means}
\end{table}
Thus, arm $2$ is the best mean arm.
Finally, consider the proposed dominance score.
Under the survival-based definition of $\mathcal{G}_k$, direct
calculation gives
\[
D_1=0.377,\quad D_2=0.350,\quad
D_3=0.420,\quad D_4=0.
\]
Therefore, the dominance-score-maximizing arm is arm $3$.
When all arms are drawn independently, direct calculation gives
\[
\begin{aligned}
\mathbb{P}(R_1>R_b\ \forall b\neq1)
&=0.65\times0.58=0.377,\\
\mathbb{P}(R_2>R_b\ \forall b\neq2)
&=0.35\times0.58=0.203,\\
\mathbb{P}(R_3>R_b\ \forall b\neq3)
&=0.420,\\
\mathbb{P}(R_4>R_b\ \forall b\neq4)
&=0.
\end{aligned}
\]
These are the joint winning probabilities reported in
Table~\ref{tab:example-winfreq}.
The Condorcet criterion favors arm $1$ since it wins each pairwise
comparison, while the Borda criterion favors arm $2$ because it has
the largest aggregate pairwise winning probability. In contrast, the
dominance score favors arm $3$, whose performance is concentrated in
the region of local dominance within the high-reward region of the
reward space.

\section{Supplementary Materials for Experiments}
\label{app:experimental_details}

\subsection{Experimental Setup}

We evaluate the proposed DR Dominance-Score Elimination (\texttt{DRDSE})
against the average estimator-based elimination method under two bandit
settings. The Gaussian and main biomedical experiments are repeated over
$500$ independent Monte Carlo runs across confidence levels
\[
\delta\in\{0.10,\,0.05,\,0.01,\,0.005,\,0.001\},
\]
and the biomedical sensitivity analysis uses $\delta=0.05$. Each
method--condition combination is evaluated over $500$ independent runs,
and sample complexity is reported as the total number of arm pulls.
The exploration schedule of \texttt{DRDSE} is constructed from complete
$K$-arm exploration blocks with $c=8$, while the sampling rule selects
every arm in the active arm set during exploitation. The average
estimator-based method collects one fresh reward from every arm in each
iteration and therefore uses $K$ arm pulls per iteration.

\paragraph{Average Estimator-Based Method}
Let
$N_{k,t}:=\sum_{s=1}^{t}\II(a_s=k)$
denote the number of times arm $k$ is selected up to round $t$.
We define the average empirical distribution estimator as
\[
\widehat{F}^{\mathrm{Avg}}_{k,t}(x)
:=
\frac{1}{N_{k,t}}
\sum_{s=1}^{t}
\II(a_s=k)\II(R_{a_s,s}\le x),
\]
the estimated partition as
\[
\widehat{\Gcal}^{\mathrm{Avg}}_{k,t}
=
\left\{
x\in\RR:
k=
\min\arg\min_{j\in[K]}
\widehat F^{\mathrm{Avg}}_{j,t}(x)
\right\},
\]
and the dominance score estimator as
\[
\widehat{D}^{\mathrm{Avg}}_{a,t}
:=
\frac{1}{N_{a,t}}
\sum_{s=1}^{t}
\II(a_s=a)
\sum_{k=1}^{K}
\widehat{F}^{\mathrm{Avg}}_{k,t}(R_{a,s})
\II\!\left(
R_{a,s}\in\widehat{\Gcal}^{\mathrm{Avg}}_{k,t}
\right).
\]

\subsection{Gaussian Bandit Setting}

We first consider $K=4$ arms with reward distributions
$R_k\sim\mathcal{N}(\mu_k,1)$, where
$(\mu_1,\mu_2,\mu_3,\mu_4)=(2.0,\,1.0,\,0.0,\,-1.0)$.
The true dominance scores are
$D_1=0.500$, $D_2=0.240$, $D_3=0.079$, and $D_4=0.017$,
so the true dominant arm is arm~$1$ with a dominance-score gap of
$\Delta=0.260$.

\subsection{Biomedical Treatment-Response Setting}

Dose-finding studies commonly consider treatment efficacy together with
toxicity when comparing candidate treatments and dose levels
\citep{thall1998strategy,mandrekar2007adaptive,
yuan2009bayesian,cai2014bayesian}.
Motivated by the best-drug-identification experiment of
\citet{wang2022best}, which uses logistic functions to model treatment
efficacy and toxicity as functions of dose, we next consider a simulated
heterogeneous treatment-response setting with $K=4$ candidate treatment
arms.

\paragraph{Dose-Response Construction}
For each arm, we use logistic dose-response models to specify the
high-response and toxicity probabilities at a prespecified dose. Let
$d_k$ denote the normalized dose of arm $k$, and define
\[
\begin{aligned}
p_k(d)
&=
\frac{1}
{1+\exp\{-(\eta_{0,k}+\eta_{1,k}d)\}},\\
z_k(d)
&=
\frac{1}
{1+\exp\{-(\zeta_{0,k}+\zeta_{1,k}d)\}},
\end{aligned}
\]
where $p_k(d)$ and $z_k(d)$ denote the high-response and toxicity
probabilities, respectively.

In the main experiment, we set $d_k=0.5$ for all $k\in[K]$ and use the
synthetic logistic coefficients
\[
\begin{aligned}
\boldsymbol{\eta}_0
&=(-1.405,\,-1.000,\,-1.000,\,-1.000),\\
\boldsymbol{\eta}_1
&=(2,\,2,\,2,\,2),\\
\boldsymbol{\zeta}_0
&=(-3.444,\,-2.697,\,-3.444,\,-3.976),\\
\boldsymbol{\zeta}_1
&=(1,\,1,\,1,\,1).
\end{aligned}
\]
These coefficients yield
$(p_1,p_2,p_3,p_4)=(0.4,\,0.5,\,0.5,\,0.5)$ and toxicity
probabilities $(0.05,\,0.10,\,0.05,\,0.03)$ at the prespecified doses.
All four toxicity probabilities are below the admissibility threshold
$0.30$. The toxicity probabilities are used only to restrict the
candidate treatment arms, and the experiment does not consider
adaptive safe-dose learning.

\paragraph{Reward Distribution}
To represent heterogeneous continuous treatment responses, we use a
two-component Beta mixture for the normalized reward of arm $k$.
Specifically, the high-response component is selected with probability
$p_k(d_k)$, while the low-response component is selected otherwise.
\[
R_k
\sim
\begin{cases}
\mathrm{Beta}
\!\left(\alpha_k^{\mathrm{H}},\beta_k^{\mathrm{H}}\right),
&\text{with probability }p_k(d_k),\\[2pt]
\mathrm{Beta}
\!\left(\alpha_k^{\mathrm{L}},\beta_k^{\mathrm{L}}\right),
&\text{with probability }1-p_k(d_k).
\end{cases}
\]
The parameters of the two components are
\[
\begin{aligned}
\boldsymbol{\alpha}^{\mathrm{H}}
&=(30,\,40,\,30,\,20),\\
\boldsymbol{\beta}^{\mathrm{H}}
&=(2,\,60,\,70,\,80),\\
\boldsymbol{\alpha}^{\mathrm{L}}
&=(2,\,30,\,20,\,10),\\
\boldsymbol{\beta}^{\mathrm{L}}
&=(30,\,70,\,80,\,90).
\end{aligned}
\]

The true mean rewards are $0.413$, $0.350$, $0.250$, and $0.150$,
whereas the true dominance scores are
$D_1=0.320$, $D_2=0.420$, $D_3=0.151$, and $D_4=0.013$.
Thus, arm~$1$ has the largest mean reward, while the true dominant arm
is arm~$2$ with a dominance-score gap of $\Delta=0.100$.

\paragraph{Sensitivity Analysis}
We further examine whether the result depends on the particular choice of 
$p_1=0.40$. We vary only the prespecified dose of arm~$1$, setting
$d_1$ to $0.393$, $0.500$, and $0.602$, which yields
$p_1(d_1)=0.35$, $0.40$, and $0.45$, respectively. All other doses,
logistic coefficients, Beta components, and experimental parameters
remain fixed. The corresponding dominance-score gaps are approximately
$0.150$, $0.100$, and $0.050$. In all three settings, arm~$1$ has the
largest mean reward, arm~$2$ has the largest dominance score, and every
candidate dose remains below the toxicity threshold.

\section{Missing Proofs}

\subsection{Sample Complexity Lower Bound}
\thmSampleComplexityLowerBound*
\begin{proof}
Without loss of generality we reorder the arms $D_{1}>\cdots>D_{K}$
and define the gap $\Delta_{a}:=D_{1}-D_{a}$ for $a>2$. Let $\widehat{D}_{a,n_{a}}$
denote an estimator that uses $n_{a}$ samples to estimate $D_{a}$.
The event
\[
\cap_{a=2}^{K}\big\{\big|\widehat{D}_{a,n_{a}}-D_{a}\big|\le\frac{\Delta_{a}}{2}\big\}\cap\big\{\big|\widehat{D}_{1,n_{1}}-D_{1}\big|\le\frac{\Delta_{2}}{2}\big\}
\]
is a necessary condition to identify the best dominance score arm.
Thus for the event $\Bcal$ of failing the identification satisfies,
\begin{align*}
\PP(\Bcal)\ge & \PP\big(\cup_{a=2}^{K}\big\{\big|\widehat{D}_{a,n_{a}}-D_{a}\big|>\frac{\Delta_{a}}{2}\big\}\cup\big\{\big|\widehat{D}_{1,n_{1}}-D_{1}\big|>\frac{\Delta_{2}}{2}\big\}\big)\\
\ge & \PP\big(\cup_{a=2}^{K}\big\{\big|\widehat{D}_{a,n_{a}}-D_{a}\big|>\frac{\Delta_{a}}{2}\big\}\big)\\
= & \max_{a=2,\ldots,K}\PP\big(\big|\widehat{D}_{a,n_{a}}-D_{a}\big|>\frac{\Delta_{a}}{2}\big).
\end{align*}
 If $n_{a}\le\frac{D_{a}^{2}}{12\Delta_{a}^{2}}\log\frac{1}{4\delta}$
for some $a=2,\ldots,K$, by Lemma \ref{lem:lower_noise},
\[
\max_{a=2,\ldots,K}\PP\big(\big|\widehat{D}_{a,n_{a}}-D_{a}\big|>\frac{\Delta_{a}}{2}\big)\ge\delta.
\]
Thus if the total number of samples is less than $\sum_{a=2}^{K}\frac{D_{a}^{2}}{12\Delta_{a}^{2}}\log\frac{1}{4\delta}$
then the probability of failing the identification is greater than
$\delta$. 
\end{proof}

\subsection{Properties of the Mixed Gram matrix}
\lemMixedGramMatrix*

\begin{proof}
Recall that 
\[
\tilde{V}_{t}:=\sum_{s\in[t]\cap\Ecal}w_{s}\eb_{a_{s}}\eb_{a_{s}}^{\top}+\sum_{s\in[t]\setminus\Ecal}w_{s}\tilde{X}_{a_{s},s}\tilde{X}_{a_{s},s}^{\top}.
\]
Let $\EE_{s}$ denote the conditional expectation given weights $w_{1},\ldots,w_{s+1}$,
actions $a_{1},\ldots,a_{s+1}$ and random variables $m_{1,1},\ldots,m_{K,s}$.
Note that $\|\tilde{X}_{a_{s},s}\tilde{X}_{a_{s},s}^{\top}\|_{2}\le4$
and $\EE_{s}[w_{s}\tilde{X}_{a_{s},s}\tilde{X}_{a_{s},s}^{\top}|]=w_{s}\Ib_{K}/K+w_{s}\eb_{a_{s}}\eb_{a_{s}}^{\top}$.
By the matrix Bernstein bound (Lemma~\ref{lem:matrix_hoeffding}), with probability at least $1-2\delta/t^{2}$,
for $v\in(0,\frac{1}{8}],$
\[
\Big\Vert\sum_{s\in[t]\setminus\Ecal}w_{s}\tilde{X}_{a_{s},s}\tilde{X}_{a_{s},s}^{\top}-\sum_{s\in[t]\setminus\Ecal}w_{s}(\Ib_{K}/K+\eb_{a_{s}}\eb_{a_{s}}^{\top})\Big\Vert_{2}\le v\Big\Vert\sum_{s\in[t]\setminus\Ecal}w_{s}^{2}\EE_{s}[\tilde{X}_{a_{s},s}\tilde{X}_{a_{s},s}^{\top}\tilde{X}_{a_{s},s}\tilde{X}_{a_{s},s}^{\top}]\Big\Vert_{2}+\frac{1}{v}\log\frac{Kt^{2}}{\delta}.
\]
Note that
\begin{align*}
\Big\Vert\sum_{s\in[t]\setminus\Ecal}w_{s}^{2}\EE_{s}[\tilde{X}_{a_{s},s}\tilde{X}_{a_{s},s}^{\top}\tilde{X}_{a_{s},s}\tilde{X}_{a_{s},s}^{\top}]\Big\Vert_{2}\le & 4\Big\Vert\sum_{s\in[t]\setminus\Ecal}w_{s}^{2}\EE_{s}[\tilde{X}_{a_{s},s}\tilde{X}_{a_{s},s}^{\top}]\Big\Vert_{2}\\
\le & 16\Big\Vert\sum_{s\in[t]\setminus\Ecal}\Ib_{K}/K+\eb_{a_{s}}\eb_{a_{s}}^{\top}\Big\Vert_{2}\\
\le & 32|[t]\setminus\Ecal|.
\end{align*}
Thus, setting $v=\sqrt{\frac{\log\frac{Kt^{2}}{\delta}}{32|[t]\setminus\Ecal|}}$,
gives
\[
\Big\Vert\sum_{s\in[t]\setminus\Ecal}w_{s}\tilde{X}_{a_{s},s}\tilde{X}_{a_{s},s}^{\top}-\sum_{s\in[t]\setminus\Ecal}w_{s}(\Ib_{K}/K+\eb_{a_{s}}\eb_{a_{s}}^{\top})\Big\Vert_{2}\le8\sqrt{2|[t]\setminus\Ecal|\log\frac{Kt^{2}}{\delta}}
\]
Thus, with probability at least $1-2\delta/t^{2}$,
\[
\Big\Vert\tilde{V}_{t}-\sum_{s=1}^{t}w_{s}\eb_{a_{s}}\eb_{a_{s}}^{\top}-\sum_{s\in[t]\setminus\Ecal}w_{s}\frac{\Ib_{K}}{K}\Big\Vert_{2}\le8\sqrt{2t\log\frac{Kt^{2}}{\delta}}
\]
Since $\|\sum_{k=1}^{K}\eb_{k}\eb_{k}^{\top}+\Ib_{K}\|_{2}=2$, by the importance weight bound (Lemma~\ref{lem:importance_weight}), with probability at least $1-4\delta/t^{2}$
\begin{align*}
 & \Big\Vert\sum_{s=1}^{t}w_{s}\eb_{a_{s}}\eb_{a_{s}}^{\top}+\sum_{s\in[t]\setminus\Ecal}w_{s}\frac{\Ib_{K}}{K}-(|[t]\cap\Psi|+|[t]\setminus\Ecal\cap\Psi)\Ib_{K}\Big\Vert_{2}\\
 & \le2|[t]\setminus\Psi|+8\sqrt{2Kt\log\frac{Kt^{2}}{\delta}}.
\end{align*}
Thus,
\[
\Big\Vert\tilde{V}_{t}-(|[t]\cap\Psi|+|[t]\setminus\Ecal\cap\Psi)\Ib_{K}\Big\Vert_{2}\le8\sqrt{2t\log\frac{Kt^{2}}{\delta}}+|[t]\setminus\Psi|+8\sqrt{2Kt\log\frac{Kt^{2}}{\delta}}.
\]
By construction of $\Psi$, applying Lemma~\ref{lem:matrix_hoeffding} with $K=1$, with probability at least $1-2\delta/t^{2}$, for $v\in(0,1/2]$
\[
|[t]\cap\Psi|\ge\sum_{s=1}^{t}\PP(s\in\Psi)-v\sum_{s=1}^{t}\PP(s\in\Psi)-\frac{1}{v}\log\frac{Kt^{2}}{\delta}.
\]
Note that
\[
\sum_{s=1}^{t}\PP(s\in\Psi)=t-\sum_{s=1}^{t}\frac{1}{s+1}\ge t-\log(t+1).
\]
Thus, setting $v=1/2$, 
\[
|[t]\cap\Psi|\ge\frac{t}{2}-\frac{\log(t+1)}{2}-2\log\frac{Kt^{2}}{\delta}\ge\frac{t}{3},
\]
where the last inequality holds because $t\ge15\log\frac{Kt^{2}}{\delta}$.
Again, by Lemma~\ref{lem:matrix_hoeffding}, with probability at least
$1-2\delta/t^{2}$ 
\begin{align*}
|[t]\setminus\Psi|\le & \sum_{s=1}^{t}\PP(s\notin\Psi)+\frac{1}{2}\sum_{s=1}^{t}\PP(s\notin\Psi)+2\log\frac{Kt^{2}}{\delta}\\
\le & \frac{3}{2}\log(t+1)+2\log\frac{Kt^{2}}{\delta}\\
\le & \frac{7}{2}\log\frac{Kt^{2}}{\delta}.
\end{align*}
Because $t/15\ge\log\frac{Kt^{2}}{\delta}$, 
\begin{align*}
\Big\Vert\tilde{V}_{t}-(|[t]\cap\Psi|+|[t]\setminus\Ecal\cap\Psi)\Ib_{K}\Big\Vert_{2}\le & 8\sqrt{2t\log\frac{Kt^{2}}{\delta}}+\frac{7}{2}\log\frac{Kt^{2}}{\delta}+8\sqrt{2Kt\log\frac{Kt^{2}}{\delta}}\\
\le & \sqrt{2t\log\frac{Kt^{2}}{\delta}}\Big(8+\frac{7}{30\sqrt{2}}+8\sqrt{K}\Big)\\
\le & \sqrt{2t\log\frac{Kt^{2}}{\delta}}\Big(9+8\sqrt{K}\Big)
\end{align*}
Since $t$ satisfies $t\ge 8K\log\frac{Kt^2}{\delta}$ applying Lemma~\ref{lem:logt} gives $t\ge72\log\frac{Kt^{2}}{\delta}(9+8\sqrt{K})^{2}$.
It follows that
\[
\sqrt{2t\log\frac{Kt^{2}}{\delta}}\Big(9+8\sqrt{K}\Big)\le\frac{t}{6}.
\]
Thus,
\[
\Big\Vert\tilde{V}_{t}-(|[t]\cap\Psi|+|[t]\setminus\Ecal\cap\Psi)\Ib_{K}\Big\Vert_{2}\le\frac{t}{6}.
\]
It follows that
\[
\frac{t}{6}\Ib_{K}\preceq\tilde{V}_{t}\preceq\frac{13t}{6}\Ib_{K},
\]
which concludes the proof.
\end{proof}

\subsection{An Error Bound for the mixed estimator}

\lemMixedEstimator*

\begin{proof}
Let $\eta_{s}(x):=\II(R_{a_{s},s}\le x)-\PP(R_{a_{s},s}\le x)$ and
\[
\tilde{\eta}_{s}(x):=\tilde{R}_{s}(x)-F(x)^{\top}\tilde{X}_{s}=\frac{1}{K}\sum_{k=1}^{K}m_{k,s}\eta_{\tau_{s}+k-1}(x)+\eta_{s}(x),
\]
where recall that $\tau_{s}:=\min\{\argmin{\tau\in\Ecal\cap[s]}\sum_{\nu\in[s-1]\setminus\Ecal}\II(\tau=\tau_{\nu})\}$
denote the index of the minimally reused round. 
For each $k\in[K]$, let $\eb_k$ denote the Euclidean basis in $\RR^K$.
Then,
\[
\eb_{k}^{\top}(\tilde{\Fb}_{t}(x)-\Fb(x))=\eb_{k}^{\top}\tilde{V}_{t}^{-1}\Big(\sum_{s\in[t]\cap\Ecal}w_{s}\eb_{a_{s}}\eta_{s}(x)+\sum_{s\in[t]\setminus\Ecal}w_{s}\tilde{\eta}_{s}(x)\tilde{X}_{s}\Big).
\]
Recall that $\tilde{V}_{t}:=\sum_{s\in[t]\cap\Ecal}w_{s}\eb_{a_{s}}\eb_{a_{s}}^{\top}+\sum_{s\in[t]\setminus\Ecal}w_{s}\tilde{X}_{s}\tilde{X}_{s}^{\top}$.
Observe that 
\[
\sum_{s\in[t]\setminus\Ecal}w_{s}\tilde{\eta}_{s}(x)\tilde{X}_{s}=\frac{1}{K}\sum_{s\in[t]\setminus\Ecal}\sum_{k=1}^{K}m_{k,s}w_{s}\eta_{\tau_{s}+k-1}(x)\tilde{X}_{s}+\sum_{s\in[t]\setminus\Ecal}w_{s}\eta_{s}(x)\tilde{X}_{s}
\]
Then, by the triangular inequality, 
\begin{align*}
\big|\eb_{k}^{\top}(\tilde{\Fb}_{t}(x)-\Fb(x))\big|= & \big|\sum_{s\in[t]\cap\Ecal}w_{s}\eta_{s}(x)\eb_{k}^{\top}\tilde{V}_{t}^{-1}\eb_{a_{s}}+\sum_{s\in[t]\cap\Ecal^{c}}w_{s}\tilde{\eta}_{s}(x)\eb_{k}^{\top}\tilde{V}_{t}^{-1}\tilde{X}_{s}\big|\\
\le & |\sum_{s\in[t]\cap\Ecal}w_{s}\eta_{s}(x)\eb_{k}^{\top}\tilde{V}_{t}^{-1}\eb_{a_{s}}+\sum_{s\in[t]\setminus\Ecal}w_{s}\eta_{s}(x)\eb_{k}^{\top}\tilde{V}_{t}^{-1}\tilde{X}_{s}|\\
 & \quad+\big|\frac{1}{K}\sum_{s\in[t]\setminus\Ecal}\sum_{k=1}^{K}m_{k,s}w_{s}\eta_{\tau_{s}+k-1}(x)\eb_{k}^{\top}\tilde{V}_{t}^{-1}\tilde{X}_{s}\big|.
\end{align*}
For the first term, by the Cauchy-Schwartz inequality,
\begin{align*}
 & |\sum_{s\in[t]\cap\Ecal}w_{s}\eta_{s}(x)\eb_{k}^{\top}\tilde{V}_{t}^{-1}\eb_{a_{s}}+\sum_{s\in[t]\setminus\Ecal}w_{s}\eta_{s}(x)\eb_{k}^{\top}\tilde{V}_{t}^{-1}\tilde{X}_{s}|\\
 & \le\Vert\eb_{k}\Vert_{\tilde{V}_{t}^{-1}}\big\Vert\sum_{s\in[t]\cap\Ecal}w_{s}\eta_{s}(x)\eb_{a_{s}}+\sum_{s\in[t]\setminus\Ecal}w_{s}\eta_{s}(x)\tilde{X}_{s}\big\Vert_{\tilde{V}_{t}^{-1}}.
\end{align*}
By Lemma~\ref{lem:mixed_gram_matrix_spectrum}, with probability at least $1-10\delta/t^2$ we obtain $\tilde{V}_{t}\succeq(t/6)\Ib_{K}$ and 
\begin{align*}
 & |\sum_{s\in[t]\cap\Ecal}w_{s}\eta_{s}(x)\eb_{k}^{\top}\tilde{V}_{t}^{-1}\eb_{a_{s}}+\sum_{s\in[t]\setminus\Ecal}w_{s}\eta_{s}(x)\eb_{k}^{\top}\tilde{V}_{t}^{-1}\tilde{X}_{s}|\\
 & \le\frac{6}{t}\Vert\eb_{k}\Vert_{2}\big\Vert\sum_{s\in[t]\cap\Ecal}w_{s}\eta_{s}(x)\eb_{a_{s}}+\sum_{s\in[t]\setminus\Ecal}w_{s}\eta_{s}(x)\tilde{X}_{s}\big\Vert_{2}\\
 & =\frac{6}{t}\big\Vert\sum_{s\in[t]\cap\Ecal}w_{s}\eta_{s}(x)\eb_{a_{s}}+\sum_{s\in[t]\setminus\Ecal}w_{s}\eta_{s}(x)\tilde{X}_{s}\big\Vert_{2}.
\end{align*}
Note that $\sqrt{w_{s}}\eta_{s}(x)\in[-\sqrt{2},\sqrt{2}]$ and $\|\tilde{X}_{s}\|_{2}^{2}\le8$.
Thus, by the dimension-free martingale bound (Lemma~\ref{lem:dimension_free_martingale}), for each
$x_{11},\ldots,x_{tK}\in\RR$, with probability at least $1-2\delta/(Kt^{2})$,
\[
\sqrt{\sum_{j=1}^{K}\Big(\sum_{s\in[t]\cap\Ecal}w_{s}\eta_{s}(x_{sj})\II(j=a_{s})+\sum_{s\in[t]\setminus\Ecal}w_{s}\eta_{s}(x_{sj})\tilde{X}_{sj}}\Big)^{2}\le4\sqrt{2\cdot8t\log\frac{Kt^{2}}{\delta}}.
\]
By the uniform bound (Lemma~\ref{lem:vector_pointwise_to_uniform_vcdf}),
\[
\sup_{x\in\RR}\big\Vert\sum_{s\in[t]\cap\Ecal}w_{s}\eta_{s}(x)\eb_{a_{s}}+\sum_{s\in[t]\setminus\Ecal}w_{s}\eta_{s}(x)\tilde{X}_{s}\big\Vert_{2}\le4\sqrt{2\cdot8t\log\frac{2Kt^{2}}{\delta}}+\frac{\sqrt{8t}}{2}\le17\sqrt{t\log\frac{2Kt^{2}}{\delta}}.
\]
Thus,
\[
\sup_{x\in\RR}|\sum_{s\in[t]\cap\Ecal}w_{s}\eta_{s}(x)\eb_{k}^{\top}\tilde{V}_{t}^{-1}\eb_{a_{s}}+\sum_{s\in[t]\setminus\Ecal}w_{s}\eta_{s}(x)\eb_{k}^{\top}\tilde{V}_{t}^{-1}\tilde{X}_{s}|\le102\sqrt{\frac{3}{t}\log\frac{Kt^{2}}{\delta}}.
\]
For the second term, by the Cauchy-Schwartz inequality and by the
fact that $\tilde{V}_{t}\succeq(t/6)\Ib_{K}$, we obtain
\begin{align*}
 & \big|\frac{1}{K}\sum_{s\in[t]\setminus\Ecal}\sum_{k=1}^{K}m_{k,s}w_{s}\eta_{\tau_{s}+k-1}(x)\eb_{k}^{\top}\tilde{V}_{t}^{-1}\tilde{X}_{s}\big|\\
 & \le\frac{6}{tK}\Big\Vert\sum_{s\in[t]\setminus\Ecal}\sum_{k=1}^{K}m_{k,s}w_{s}\eta_{\tau_{s}+k-1}(x)\tilde{X}_{s}\Big\Vert_{2}.
\end{align*}
For the last term, for each $u\in\Ecal$, let $\Tcal_{u}:=\{s\in\Ecal^{c}:\tau_{s}=u\}$.
Then 
\[
\sum_{s\in[t]\cap\Ecal^{c}}\sum_{k=1}^{K}m_{k,s}w_{s}\eta_{\tau_{s}+k-1}(x)\tilde{X}_{s}=\sum_{u\in\Ecal\cap[t]}\sum_{k=1}^{K}\eta_{u+k-1}(x)\sum_{s\in\Tcal_{u}\cap[t]}m_{k,s}w_{s}\tilde{X}_{s}.
\]
For each $u\in\Ecal$, the explored sample is used not more than $Kt/|[t]\cap\Ecal|$
times. Thus $|\Tcal_{u}\cap[t]|\le Kt/|[t]\cap\Ecal|$. Note that
$\sqrt{w_{s}}\eta_{s}(x)\in[-\sqrt{2},\sqrt{2}]$ and $\Vert\sum_{s\in\Tcal_{u}\cap[t]}m_{k,s}w_{s}\tilde{X}_{s}\Vert_{2}^{2}\le144K^{2}t^{2}/|[t]\cap\Ecal|^{2}$.
By the dimension-free martingale bound (Lemma~\ref{lem:dimension_free_martingale}), with probability
at least $1-2\delta/(Kt^{2})$,
\[
\sqrt{\sum_{j=1}^{K}\big(\sum_{u\in\Ecal\cap[t]}\eta_{u}(x_{uj})\sum_{s\in\Tcal_{u}\cap[t]}m_{k,s}w_{s}\tilde{X}_{sj}}\big)^{2}\le48\sqrt{\frac{K^{2}t^{2}}{|[t]\cap\Ecal|}\log\frac{Kt^{2}}{\delta}}.
\]
By the uniform bound (Lemma~\ref{lem:vector_pointwise_to_uniform_vcdf}),
\begin{align*}
\sup_{x\in\RR}\Big\Vert\sum_{u\in\Ecal\cap[t]}\sum_{k=1}^{K}\eta_{u+k-1}(x)\sum_{s\in\Tcal_{u}\cap[t]}m_{k,s}w_{s}\tilde{X}_{s}\Big\Vert_{2}\le & 48\sqrt{\frac{K^{2}t^{2}}{|[t]\cap\Ecal|}\log\frac{2Kt^{2}}{\delta}}+\frac{1}{2}\sqrt{\frac{K^{2}t^{2}}{|[t]\cap\Ecal|}}\\
\le & 49\sqrt{\frac{K^{2}t^{2}}{|[t]\cap\Ecal|}\log\frac{2Kt^{2}}{\delta}}
\end{align*}
Thus, 
\[
\sup_{x\in\RR}\big|\frac{1}{K}\sum_{s\in[t]\setminus\Ecal}\sum_{k=1}^{K}m_{k,s}w_{s}\eta_{\tau_{s}+k-1}(x)\eb_{k}^{\top}\tilde{V}_{t}^{-1}\tilde{X}_{s}\big|\le6\cdot49\sqrt{\frac{1}{|[t]\cap\Ecal|}\log\frac{2Kt^{2}}{\delta}}.
\]
which concludes
\[
\big|\eb_{k}^{\top}(\tilde{\Fb}_{t}(x)-\Fb(x))\big|\le102\sqrt{\frac{3}{t}\log\frac{Kt^{2}}{\delta}}+6\cdot49\sqrt{\frac{1}{|[t]\cap\Ecal|}\log\frac{2Kt^{2}}{\delta}}.
\]
\end{proof}

\subsection{An Error bound for the Doubly Robust Estimator}
\thmDREstimatorErrorBound*

\begin{proof}
Let us fix $t\ge T_{e}$ throughout the proof. For $s\in[t]$, let
$\widehat{\eta}_{k,s}(x):=\widehat{R}_{k,s}(x)-F_{k}(x).$ For each
$s\in[t]$ and $n\in[\rho_{s}]$, let $\tilde{\Fcal}_{n,s}$ denote
the sigma algebra generated by the random variables $\tilde{a}_{s}(1),\ldots,\tilde{a}_{s}(n)$.
For the stopping time $\tau_{s}:=\inf\{n\in[\rho_{s}]:\tilde{a}_{s}(n)=a_{s}\}$,
we define the sigma algebra $\Fcal_{\tau_{s}}^{-}:=\sigma\{A\cap\{\tau_{s}=n\}:A\in\tilde{\Fcal}_{n-1,s},\forall n\ge1\}$.
Then it follows that $\PP(\tilde{a}_{s}(\tau_{s})=a_{s}|\Fcal_{\tau_{s}}^{-})=1/2$
and $\PP(\tilde{a}_{s}(\tau_{s})=k|\Fcal_{\tau_{s}}^{-})=1/(2K-2)$
for $k\neq a_{s}$. By definition of the DR estimator and the pseudo-reward
$\widehat{R}_{k,s}(x)$, and for each $k\in[K]$, 
\begin{align*}
|\widehat{F}_{k,t}(x)-F_{k}(x)|= & \frac{1}{|[t]\cap\Psi|+\sum_{s\in[t]\setminus\Psi}\II(a_{s}=k)}\Big|\sum_{s=1}^{t}\widehat{\eta}_{k,s}(x)\Big|\\
= & \frac{1}{|[t]\cap\Psi|+\sum_{s\in[t]\setminus\Psi}\II(a_{s}=k)}\Big|\sum_{s=1}^{t}w_{s}\eta_{k,s}(x)+\big(\sum_{s\in\Psi\cap[t]}\big(1-\frac{\II(\tilde{a}_{s}(\tau_{s})=k)}{\PP(\tilde{a}_{s}(\tau_{s})=k|\Fcal_{\tau_{s}}^{-})}\big)\eb_{k}\eb_{k}^{\top}\big)\big(\tilde{\Fb}_{t}(x)-\Fb(x)\big)\Big|.
\end{align*}
As in Lemma~\ref{lem:mixed_gram_matrix_spectrum}, applying the Bernstein bound (Lemma~\ref{lem:matrix_hoeffding}), with probability at least $1-\delta/t^2$,
\[
|[t]\cap\Psi|+\sum_{s\in[t]\setminus\Psi}\II(a_{s}=k)\ge|[t]\cap\Psi|\ge\frac{t}{3}.
\]
Thus,
\[
|\widehat{F}_{k,t}(x)-F_{k}(x)|\le\frac{3}{t}\Big|\sum_{s=1}^{t}w_{s}\eta_{k,s}(x)+\big(\sum_{s\in\Psi\cap[t]}\big(1-\frac{\II(\tilde{a}_{s}(\tau_{s})=k)}{\PP(\tilde{a}_{s}(\tau_{s})=k|\Fcal_{\tau_{s}}^{-})}\big)\eb_{k}\eb_{k}^{\top}\big)\big(\tilde{\Fb}_{t}(x)-\Fb(x)\big)\Big|
\]
Let us write $U_{k,s}:=1-\frac{\II(\tilde{a}_{s}(\tau_{s})=k)}{\PP(\tilde{a}_{s}(\tau_{s})=k|\Fcal_{\tau_{s}}^{-})}$.
By the triangular inequality, 
\begin{align*}
 & |\widehat{F}_{k,t}(x)-F_{k}(x)|\\
 & \le\frac{3}{t}\Big|\sum_{s=1}^{t}w_{s}\eta_{k,s}(x)\Big|+\frac{3}{t}\Big|\eb_{k}^{\top}\big(\sum_{s\in\Psi\cap[t]}U_{k,s}\eb_{k}\eb_{k}^{\top}\big)\big(\tilde{\Fb}_{t}(x)-\Fb(x)\big)\Big|.
\end{align*}
For the first term, since $w_{s}\le2$, by the Hoeffding inequality, with probability at least $1-2\delta/(Kt^{2})$, 
\[
\frac{1}{t}\Big|\sum_{s=1}^{t}w_{s}\eta_{k,s}(x)\Big|\le\sqrt{\frac{8}{t}\log\frac{Kt^{2}}{\delta}}.
\]
From pointwise to uniform lemma (Lemma~\ref{lem:pointwise_to_uniform_cdf}) setting $M=t$ gives 
\[
\frac{1}{t}\sup_{x\in\RR}\Big|\sum_{s=1}^{t}w_{s}\eta_{k,s}(x)\Big|\le\sqrt{\frac{8}{t}\log\frac{Kt^{3}}{\delta}}+\frac{2}{t}\le3\sqrt{\frac{1}{t}\log\frac{Kt^{3}}{\delta}},
\]
where the last inequality holds since $t\ge4(3-\sqrt{8})^{-2}\log\frac{Kt^{3}}{\delta}$.
For the second term, by Cauchy-Schwartz inequality, 
\begin{align}
 & \frac{3}{t}\Big|\big(\eb_{k}^{\top}\sum_{s\in\Psi\cap[t]}U_{k,s}\eb_{k}\eb_{k}^{\top}\big)\big(\tilde{\Fb}_{t}(x)-\Fb(x)\big)\Big|\label{eq:deompose1}\\
 & =\frac{3}{t}|\sum_{s\in\Psi\cap[t]}U_{k,s}|\cdot|\tilde{F}_{k,t}(x)-F_{k}(x)|\nonumber 
\end{align}
By the matrix concentration inequality (Lemma~\ref{lem:matrix_hoeffding}), for any $v\in[0,\frac{t}{4K}]$,
with probability at least $1-2\delta/ (Kt^{2})$, 
\begin{align*}
\frac{1}{t}|\sum_{s\in\Psi\cap[t]}U_{k,s}| & =\frac{1}{t}|\sum_{s=1}^{t}U_{k,s}|+\frac{1}{t}|\sum_{s\in[t]\setminus\Psi}U_{k,s}|\\
 & \le\frac{v}{t^{2}}|\sum_{s=1}^{t}\frac{1}{\PP(\tilde{a}_{s}(\tau_{s})=k|\Fcal_{\tau_{s}}^{-})}|+\frac{1}{v}\log\frac{Kt^{2}}{\delta}+\frac{1}{t}|\sum_{s\in[t]\setminus\Psi}U_{k,s}|.
\end{align*}
Because $\PP(\tilde{a}_{s}(\tau_{s})=k|\Fcal_{\tau_{s}}^{-})\ge\frac{1}{2K}$,
setting $v=\sqrt{\frac{t}{2K}\log\frac{Kt^{2}}{\delta}}\le\frac{t}{4K}$
gives 
\begin{align*}
\frac{v}{t^{2}}|\sum_{s=1}^{t}\frac{1}{\PP(\tilde{a}_{s}(\tau_{s})=k|\Fcal_{\tau_{s}}^{-})}|+\frac{1}{v}\log\frac{Kt^{2}}{\delta}\le & \frac{2vK}{t}+\frac{1}{v}\log\frac{Kt^{2}}{\delta}\\
= & 2\sqrt{\frac{2K}{t}\log\frac{Kt^{2}}{\delta}}.
\end{align*}
For the second term, let $\Hcal_{s}^{-}$ denote the sigma algebra
generated by the event in $\Fcal_{\tau_{s}}^{-}$ and the random variables
$a_{1},\ldots,a_{s-1}$ and $\tilde{a}_{1}(\tau_{1}),\ldots,\tilde{a}_{s-1}(\tau_{s-1})$.
For each $s\in[t]$ let 
\[
M_{s}:=\frac{\EE[\sum_{s\in[t]\setminus\Psi}U_{k,s}|\Hcal_{s+1}^{-}]-\EE[\sum_{s\in[t]\setminus\Psi}U_{k,s}|\Hcal_{s}^{-}]}{t},
\]
denotes the martingale differences. Note that $|M_{s}|\le2K$ and $\Mb_{s}=\Ob$
for $s\in\Psi\cap[t]$. 
Thus by the matrix concentration lemma (Lemma~\ref{lem:matrix_hoeffding}), with
probability at least $1-2\delta/(Kt^2)$, for any $v\in[0,\frac{t}{4K}]$,
then, 
\begin{align*}
|\frac{1}{t}\sum_{s\in\Psi^{c}\cap[t]}U_{k,s}| & =|\sum_{s=1}^{t}M_{s}|\\
 & \le v\sum_{s=1}^{t}\EE[M_{s}^{2}|\Hcal_{s}^{-}]|_{2}+\frac{1}{v}\log\frac{Kt^2}{\delta}\\
 & =v|\sum_{s\in[t]\setminus\Psi}\EE[M_{s}^{2}|\Hcal_{s}^{-}]|_{2}+\frac{1}{v}\log\frac{Kt^2}{\delta}.
\end{align*}
Because for $s\in[t]\setminus\Psi$, 
\begin{align*}
\EE[M_{s}^{2}|\Hcal_{s-1}]\le & \frac{1}{t^{2}}\EE[M_{s}^{2}|\Hcal_{s}^{-}]\\
\le & \frac{1}{t^{2}}\EE[\big(\frac{\II(\tilde{a}_{s}=k)}{\PP(\tilde{a}_{s}(\tau_{s})=k|\Fcal_{\tau_{s}}^{-})}|\Hcal_{s}^{-}]\\
= & \frac{1}{t^{2}}\frac{1}{\PP(\tilde{a}_{s}=k|\Fcal_{\tau_{s}}^{-})},
\end{align*}
we obtain $\EE[M_{s}^{2}|\Hcal_{s-1}]\preceq\frac{2K}{t^{2}}.$ Thus,
setting $v=\sqrt{\frac{t}{2K}\log\frac{Kt^{2}}{\delta}}$ gives, 
\begin{align*}
|\frac{1}{t}\sum_{s\in[t]\setminus\Psi}U_{k,s}|\le & \frac{2Kv}{t}+\frac{1}{v}\log\frac{Kt^{2}}{\delta}\\
= & 2\sqrt{\frac{2K}{t}\log\frac{Kt^{2}}{\delta}},
\end{align*}
where the inequality holds because $|[t]\setminus\Psi|\le t$. From
\eqref{eq:deompose1}, 
\[
\frac{1}{t}\Big|\big(\sum_{s\in\Psi\cap[t]}U_{k,s}\eb_{k}\eb_{k}^{\top}\big)\big(\tilde{\Fb}_{t}(x)-\Fb(x)\big)\Big|\le4\sqrt{\frac{2K}{t}\log\frac{Kt^{2}}{\delta}}|\tilde{F}_{k,t}(x)-F_{k}(x)|
\]
Thus, 
\begin{align*}
 & \sup_{x\in\RR}|\widehat{F}_{k,t}(x)-F_{k}(x)|\\
 & \le\frac{3}{t}\sup_{x\in\RR}\Big|\sum_{s=1}^{t}w_{s}\eta_{k,s}(x)\Big|+\frac{3}{t}\sup_{x\in\RR}\Big|\eb_{k}^{\top}\big(\sum_{s\in\Psi\cap[t]}U_{k,s}\eb_{k}\eb_{k}^{\top}\big)\big(\tilde{\Fb}_{t}(x)-\Fb(x)\big)\Big|\\
 & \le9\sqrt{\frac{1}{t}\log\frac{Kt^{3}}{\delta}}+12\sqrt{\frac{2K}{t}\log\frac{Kt^{2}}{\delta}}\sup_{x\in\RR}|\tilde{F}_{k,t}(x)-F_{k}(x)|.
\end{align*}
which completes the proof. 
\end{proof}

\corUniformEstimatorRate*
\label{app:F_conv_proof}
\begin{proof}
Combining the error bound of the exploration-mixed estimator and the
DR estimator, with probability at least $1-9\delta/(t^{2}K)$, for
each $x\in\RR$, 
\begin{align*}
 & \Vert\widehat{F}_{k,t}-F_{k}\Vert_{\infty}\\
 & \le9\sqrt{\frac{1}{t}\log\frac{Kt^{3}}{\delta}}+12\sqrt{\frac{2K}{t}\log\frac{Kt^{2}}{\delta}}\Big(102\sqrt{\frac{3}{t}\log\frac{Kt^{2}}{\delta}}+6\cdot49\sqrt{\frac{1}{|[t]\cap\Ecal|}\log\frac{Kt^{2}}{\delta}}\Big)\\
 & =9\sqrt{\frac{1}{t}\log\frac{Kt^{3}}{\delta}}+3\cdot408\sqrt{\frac{6K}{t^{2}}\log^{2}\frac{Kt^{2}}{\delta}}+3\cdot24\cdot49\sqrt{\frac{2K}{t|[t]\cap\Ecal|}\log^{2}\frac{Kt^{2}}{\delta}}
\end{align*}
Because $|\Ecal\cap[t]|\ge8K\log\frac{Kt^{2}}{\delta}$ for $t\ge T_{e}$
\[
3\cdot24\cdot49\sqrt{\frac{2K}{t|[t]\cap\Ecal|}\log^{2}\frac{Kt^{2}}{\delta}}\le3\cdot12\cdot49\sqrt{\frac{1}{t}\log\frac{Kt^{2}}{\delta}}.
\]
In addition, since $t\ge128\cdot3K\log\frac{Kt^{2}}{\delta}$ we obtain
\[
3\cdot408\sqrt{\frac{6K}{t^{2}}\log^{2}\frac{Kt^{2}}{\delta}}\le153\sqrt{\frac{1}{t}\log\frac{Kt^{2}}{\delta}}.
\]
Thus,
\begin{align*}
\Vert\widehat{F}_{k,t}-F_{k}\Vert_{\infty}\le & 3\sqrt{\frac{1}{t}\log\frac{Kt^{3}}{\delta}}+51\sqrt{\frac{1}{t}\log\frac{Kt^{2}}{\delta}}+12\cdot49\sqrt{\frac{1}{t}\log\frac{Kt^{2}}{\delta}}\\
\le & (9+153+36\cdot49)\sqrt{\frac{1}{t}\log\frac{Kt^{3}}{\delta}},
\end{align*}
which completes the proof.
\end{proof}

\subsection{Sample Complexity Upper Bound}
\label{app:sample_complex_proof}
\thmSampleComplexityDominantArm*

In order to satisfy the exploration condition $|[t]\cap\Ecal|\ge 8K\log\frac{Kt^2}{\delta}$, we need at least
\[
32K(1+\log\frac{16K\sqrt{K}}{\sqrt{\delta}e})
\]
number of samples by Lemma~\ref{lem:logt}.
Without loss of generality, we rearrange the arms in descending order of the dominance score, $D_{1}>\cdots>D_{K}$.
Let $T$ denote the round that the algorithm terminates with $\Acal_T =\{1\}$.
By construction of the algorithm, the arm $a>2$ is eliminated when
\begin{equation}
\widehat{D}_{1,T}>\widehat{D}_{a,T}+\beta_{a,T}+\beta_{1,T}
\label{eq:termination}
\end{equation}
By the convergence of $\widehat{D}_{a,t}$ (Theorem~\ref{thm:D_convergence}), with probability at least $1-10\delta$,
\[
D_a \le \widehat{D}_{a,T} + \beta_{a,T} \le \widehat{D}_{1,T} -\beta_{1,T} \le D_1,
\]
and $a$ is the suboptimal arm.
Note that 
\[
\widehat{D}_{1,T} - \widehat{D}_{a,T} \ge \Delta_a - \beta_{1,T}-\beta_{a,T}.
\]
Thus condition~\eqref{eq:termination} holds when 
\[
\beta_{a,T}+\beta_{1,T} \le  \Delta_a - \beta_{1,T}-\beta_{a,T},
\]
which is equivalent to 
\[
\beta_{a,T}+\beta_{1,T} \le \frac{\Delta_a}{2}.
\]
Because the algorithm selects all arms in the undetermined arms $\Acal_t$, the condition
\[
3\gamma_{T} + \sqrt{\frac{2\log\frac{T^2}{\delta}}{N_{a,T}}} \le \frac{\Delta_a}{4},
\]
implies~\eqref{eq:termination}.
Since $\gamma_{T}\le 1926 \sqrt{\frac{3\log\frac{Kt^2}{\delta}}{2N_{a,T}}}$, 
\[
N_{a,T} \ge \frac{32\cdot (3\cdot 1926 +1)^2}{\Delta_a^2}\log \frac{KT^2}{\delta},
\]
implies~\eqref{eq:termination}.
Summing up over $a>2$ gives
\[
T \ge \sum_{a=2}^{K}\frac{32\cdot (3\cdot 1926 +1)^2}{\Delta_a^2}\log \frac{T^2}{\delta},
\]
which is implied by (using Lemma~\ref{lem:logt}),
\[
T \ge (1+\sum_{a=2}^{K}\frac{4\cdot32\cdot (3\cdot 1926 +1)^2}{\Delta_a^2}) \log \frac{\sqrt{K}\sum_{a=2}^{K}\frac{64\cdot (3\cdot 1926 +1)^2}{\Delta_a^2}}{e\sqrt{\delta}}, 
\]
which completes the proof.

\section{Novel Technical results}

\subsection{Construction of Hard Distributions for the Dominance Score Estimation}
\begin{lemma}
\label{lem:lower_noise} Let $D_{a}$ denote the dominance score for
arm $a\in[K]$. Then there exists a distribution $\PP_{a}$ such that
for any failure probability parameter $\delta\in(0,1/4)$, the maximum
estimation error satisfies: 
\[
|D_{a}-\widehat{D}_{a}|>\epsilon,
\]
with probability at least $\delta$, for any estimator $\widehat{D}_{a,n}$
that uses at most $n\le\frac{\sigma_{a}^{2}}{12\epsilon^{2}}\log\left(\frac{1}{4\delta}\right)$
independent samples $R_{a,1},\ldots R_{a,n}$ sampled from $\mathbb{P}_{a}$.
\end{lemma}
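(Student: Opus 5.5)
We prove Lemma~\ref{lem:lower_noise} by a two-point (Le Cam) argument. We build two distributions for arm $a$ whose dominance scores differ by more than $2\epsilon$, yet which are statistically close enough that no estimator using $n$ samples can tell them apart with error probability below $\delta$.

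\emph{Construction.} Fix the other arms so that the partition $\{\mathcal G_k\}$ is under control. The simplest choice puts the other arms far below the support of $R_a$, with one reference arm $k_0$ placed so that $F_{k_0}$ is constant on the support of $R_a$. Then the map $\mathbb P_a\mapsto D_a$ becomes a linear functional
\[
D_a=\mathbb E_{\mathbb P_a}[g(R_a)]
\]
for a fixed function $g$. For concreteness, let $R_a$ take two values $x_0<x_1$ with $g(x_1)-g(x_0)=c$. Compare $\mathbb P_a^{+}$, which puts mass $q+\theta$ on $x_1$, with $\mathbb P_a^{-}$, which puts mass $q-\theta$ on $x_1$. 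Then $D_a^{+}-D_a^{-}=2c\theta$. Choose $\theta$ with $2c\theta>2\epsilon$, so any estimator within $\epsilon$ of the truth under one hypothesis is more than $\epsilon$ away under the other. The variance of $g(R_a)$ under either hypothesis is of order $c^2 q(1-q)$; we identify this quantity with $\sigma_a^2$.

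\emph{Indistinguishability.} We use the Bretagnolle--Huber inequality. For any test $\psi$ built from $n$ i.i.d.\ samples,
\[
\mathbb P^{+}(\psi=-)+\mathbb P^{-}(\psi=+)\ge \tfrac12\exp\bigl(-n\,\mathrm{KL}(\mathbb P_a^{+}\Vert\mathbb P_a^{-})\bigr).
\]
For two Bernoulli laws, $\mathrm{KL}\le \frac{(2\theta)^2}{(q-\theta)(1-q+\theta)}$. With $\theta\approx\epsilon/c$ and $q$ bounded away from the endpoints (say $\theta\le q/2$), this gives $\mathrm{KL}\le 6\epsilon^2/\sigma_a^2$, with some constant. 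Hence if $n\le \frac{\sigma_a^2}{12\epsilon^2}\log\frac{1}{4\delta}$, the right-hand side is at least $\frac12(4\delta)^{1/2}\ge 2\delta$ for $\delta<1/4$. Therefore one of the two hypotheses has failure probability at least $\delta$. The estimator $\widehat D_{a,n}$ induces the test "closer to $D_a^{+}$ or to $D_a^{-}$", which gives the claim.

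\emph{Main obstacle.} The delicate step is keeping $g$ fixed across the two hypotheses: perturbing $\mathbb P_a$ could move the partition boundaries $\mathcal G_k$. We would check that with the other arms placed as above, the region containing $\{x_0,x_1\}$ is the same under both $\mathbb P_a^{\pm}$. This requires $\theta$ small relative to the survival-probability margins, so that the partition is unchanged.

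A second point is matching the constant $12$ and the scale $\sigma_a$. In Theorem~\ref{thm:sample_complexity_lower_bound} this scale enters as $D_a$. We would fix $c$ and $q$ so that $\sigma_a\asymp D_a$; for instance, take $g(x_0)=0$, which makes $D_a=cq$ and $\sigma_a^2\le c\,D_a$. We would then tune $q$ so that $\sigma_a^2\ge D_a^2$, absorbing slack into the constant $12$.
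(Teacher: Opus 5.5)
Your proof is sound in substance, but it follows a different route from the paper. The paper never separates its two hypotheses in the target. It prescribes the law of $f(R_a)=\sum_k F_k(R_a)\mathbb{I}(R_a\in\mathcal{G}_k)$ directly as $D_a+\eta_v$, where $\eta_v$ is a mean-zero two-point noise with atoms more than $2\epsilon$ apart ($\sigma_a=D_a/2$, $\epsilon<\sigma_a/4$). It then argues through a ``fresh sample'' of $f(R_a)$ that any estimate misses one atom by more than $\epsilon$. Finally it reduces to testing $v=\pm1$ via Bretagnolle--Huber with $\mathrm{KL}\le 12\epsilon^2/\sigma_a^2$.

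You share only that last step. Your reduction is the standard Le Cam one: $D_a^{+}-D_a^{-}>2\epsilon$ is what turns an $\epsilon$-accurate estimator into a test. The paper's construction lacks exactly this. Since $\mathbb{E}_v[\eta_v]=0$ for both $v$, both of its hypotheses have the same $D_a$, and the constant estimator $\widehat D_{a,n}\equiv D_a$ is exact under both. So its passage from estimation error to testing error has no parameter gap behind it. You also specify the other arms so that $g$ does not depend on the hypothesis. The paper instead defines $\mathbb{P}_{a,v}$ implicitly, although $F_a$ and the $\mathcal{G}_k$ themselves depend on it. What the paper's route nominally buys is the identification $\sigma_a=D_a/2$ and the constant $12$ without tuning $q$.

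Three repairs are needed.

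\emph{The reference arm.} Asking $F_{k_0}$ to be \emph{constant} on the support of $R_a$ forces $g(x_0)=g(x_1)$, i.e.\ $c=0$. What you need is that $F_{k_0}$ be hypothesis-independent. For example, put all other arms below $x_0$, and let $R_{k_0}$ put mass $c<1$ strictly between $x_0$ and $x_1$ and the rest above $x_1$. Then $F_{k_0}(x_0)=0<F_a(x_0)$ and $F_{k_0}(x_1)=c<1=F_a(x_1)$ for every $\theta$ with $q+\theta<1$. Hence $g(x_0)=0$, $g(x_1)=c$, $D_a^{\pm}=c(q\pm\theta)$, and your ``main obstacle'' disappears.

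\emph{The constant.} With $\theta\le q/2$, the $\chi^2$ bound gives $\mathrm{KL}\le 8(c\theta)^2/\sigma_a^2$, not $6\epsilon^2/\sigma_a^2$. This is harmless: any constant $C\le 12$ suffices, because $\tfrac14(4\delta)^{C/12}\ge\delta$ exactly when $C\le 12$. So take $c\theta=(1+\eta)\epsilon$ with $8(1+\eta)^2\le 12$.

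\emph{The range of $\epsilon$.} State the restriction $\epsilon\lesssim D_a$ that $\theta\le q/2$ imposes. The lemma is false for large $\epsilon$: with $n=0$, the estimator $\widehat D\equiv 1/2$ has error at most $1/2$. Read literally with a single $\mathbb{P}_a$, it is false for every $\epsilon$. Like the paper, you are really proving the two-point minimax version.
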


\begin{proof}
\textbf{Step 1. Construction of the parametric noise distribution.}
Given $v\in\{-1,1\}$, we construct the discrete noise distribution
of $\eta_{v}$ such that 
\[
\eta_{v}=\begin{cases}
-\dfrac{\sigma_{a}}{2\left(1+\frac{2\epsilon}{\sigma_{a}}v\right)} & \text{with probability }p_{+}(v):=\frac{1}{2}+\frac{\epsilon}{\sigma_{a}}v,\\[10pt]
\dfrac{\sigma_{a}}{2\left(1-\frac{2\epsilon}{\sigma_{a}}v\right)} & \text{with probability }p_{-}(v):=\frac{1}{2}-\frac{\epsilon}{\sigma_{a}}v.
\end{cases}
\]
Let $\PP_{a,v}$ denote the distribution of reward$R_{a}$ that satisfies,
\[
\sum_{k=1}^{K}F_{k}(R_{a})\II(R_{a}\in\Gcal_{k})=\sum_{k=1}^{K}\int_{\Gcal_{k}}F_{k}(r)d\PP_{a}(r)+\eta_{v}.
\]
Note that $\EE_{v}[\eta_{v}]=0$. Since $\epsilon\in(0,\sigma_{a}/4)$,
we have $|\eta_{v}|\le\sigma$. We set 
\[
\sigma_{a}=\sigma_{a}:=\frac{1}{2}\sum_{k=1}^{K}\int_{\Gcal_{k}}F_{k}(r)d\PP_{a,}(r)=\frac{D_{a}}{2}
\]
 to guarantee 
\[
\sum_{k=1}^{K}\int_{\Gcal_{k}}F_{k}(r)d\PP_{a}(r)+\eta_{v}\ge0.
\]
Furthermore, for all $v\in\{-1,1\}$, the distance between support
points satisfies: 
\begin{equation}
\Big|\frac{\sigma_{a}}{2\left(1-\frac{2\epsilon}{\sigma_{a}}v\right)}+\frac{\sigma_{a}}{2\left(1+\frac{2\epsilon}{\sigma_{a}}v\right)}\Big|>2\epsilon.\label{eq:diff_lower}
\end{equation}

\textbf{Step 2. Reduction to hypothesis testing.} Let $D_{a}:=\sum_{k=1}^{K}\int_{\Gcal_{k}}F_{k}(r)d\PP_{a}(r)$
denote the dominance score over the distribution $\PP_{a,v}$. Let
$\PP_{v}^{(n)}=\prod_{i=1}^{n}\PP_{v}$ denote the joint distribution
of the $n$ observed samples $R_{a,1},\ldots,R_{a,n}$. Consider an
independent fresh sample $f(R_{a,v}):=\sum_{k=1}^{K}F_{k}(R_{a,v})\II(R_{a,v}\in\Gcal_{k})=D_{a}+\eta_{v}$.
For any estimator $\widehat{D}_{a,n}$ that uses $R_{a,1},\ldots,R_{a,n}$,
we bound the probability of small estimation error by: 
\[
\PP_{v}^{(n)}\left(|D_{a}-\widehat{D}_{a,n}|>\epsilon\right)=\PP_{v}^{(n)}\big(|f(R_{a,v})-\eta_{v}-\widehat{D}_{a,n}|>\epsilon\big).
\]
Taking expectation on $\eta_{v}$, 
\[
\PP_{v}^{(n)}\left(|D_{a}-\widehat{D}_{a,n}|>\epsilon\right)=\EE_{v}\big[\PP_{v}^{(n)}\big(|f(R_{a,v})-\widehat{D}_{a,n}-\eta_{v}|>\epsilon|\eta_{v}\big)\big].
\]
Next, we lower-bound the complement event probability: 
\begin{align*}
 & \EE_{v}\big[\PP_{v}^{(n)}\big(|f(R_{a,v})-\widehat{D}_{a,n}-\eta_{v}|>\epsilon|\eta_{v}\big)\big]\\
 & =\PP_{v}^{(n)}\Big(\Big|f(R_{a,v})-\widehat{D}_{a,n}+\dfrac{\sigma_{a}}{2\left(1+\frac{2\epsilon}{\sigma_{a}}v\right)}\Big|>\epsilon\Big)p_{+}(v)\\
 & \quad+\PP_{v}^{(n)}\Big(\Big|f(R_{a,v})-\widehat{D}_{a,n}-\dfrac{\sigma_{a}}{2\left(1-\frac{2\epsilon}{\sigma_{a}}v\right)}\Big|>\epsilon\Big)p_{-}(v)\\
 & \ge\bigg(\PP_{v}^{(n)}\Big(\Big|f(R_{a,v})-\widehat{D}_{a,n}+\dfrac{\sigma_{a}}{2\left(1+\frac{2\epsilon}{\sigma_{a}}v\right)}\Big|>\epsilon\Big)+\PP_{v}^{(n)}\Big(\Big|f(R_{a,v})-\widehat{D}_{a,n}-\dfrac{\sigma_{a}}{2\left(1-\frac{2\epsilon}{\sigma_{a}}v\right)}\Big|>\epsilon\Big)\bigg)\min\left\{ p_{+}(v),p_{-}(v)\right\} .
\end{align*}
Because \eqref{eq:diff_lower} holds, the region 
\[
\RR\subseteq\Big\{ r\in\RR:\Big|r+\dfrac{\sigma_{a}}{2\left(1+\frac{2\epsilon}{\sigma_{a}}v\right)}\Big|>\epsilon\Big\}\cup\Big\{ r\in\RR:\Big|r-\dfrac{\sigma_{a}}{2\left(1-\frac{2\epsilon}{\sigma_{a}}v\right)}\Big|>\epsilon\Big\},
\]
and it follows that
\begin{equation}
\EE_{v}\big[\PP_{v}^{(n)}\big(|f(R_{a,v})-\widehat{D}_{a,n}-\eta_{v}|>\epsilon|\eta_{v}\big)\big]\ge\min\left\{ p_{+}(v),p_{-}(v)\right\} .\label{eq:P_v_lower}
\end{equation}
Define the optimal decision function $\widehat{f}(r_{a},z)$ for $z\in\{-1,1\}$
and $r_{a}\in\RR$ as: 
\[
\hat{D}(r_{a},z):=\begin{cases}
f(r_{a})+\frac{\sigma_{a}}{2(1+2v\epsilon/\sigma_{a})} & \text{if }p_{+}(z)>p_{-}(z),\\
f(r_{a})-\frac{\sigma_{a}}{2(1-2v\epsilon/\sigma_{a})} & \text{otherwise}.
\end{cases}
\]
Substituting the worst-case minimizer into \eqref{eq:P_v_lower} yields:
\[
\EE_{v}\big[\PP_{v}^{(n)}\big(|f(R_{a,v})-\hat{D}(R_{a,v},v)-\eta_{v}|>\epsilon|\eta_{v}\big)\big]=\min\left\{ p_{+}(v),p_{-}(v)\right\} .
\]
Thus, for any estimator $\widehat{D}_{a,n}$,
\[
\PP_{v}^{(n)}\left(|D_{a}-\widehat{D}_{a,n}|\le\epsilon\right)\le\PP_{v}^{(n)}\left(|D_{a}-\widehat{f}(R_{a,v},v)|\le\epsilon\right).
\]
For any decision rule $\widehat{v}\in\{-1,1\}$ based on $R_{a,1},\ldots,R_{a,n}$,
the value $\widehat{f}(R_{a,v},\widehat{v})$ is the estimator for
$D_{a}$. It follows that
\[
\PP_{v}^{(n)}\left(|D_{a}-\widehat{D}_{a,n}|\le\epsilon\right)\le\PP_{v}^{(n)}\left(\big\{|D_{a}-\widehat{f}(R_{a,v},\widehat{v})|\le\epsilon\big\}\cap\big\{\widehat{v}=v\big\}\right)\le\PP_{v}^{(n)}(\widehat{v}=v).
\]
Taking the complement, we lower-bound the estimation error by the
hypothesis testing error: 
\[
\PP_{v}^{(n)}\left(|D_{a}-\widehat{D}_{a,n}|>\epsilon\right)\ge\PP_{v}^{(n)}(\widehat{v}\neq v).
\]
Taking the supremum over all target hypercube configurations $v\in\{-1,1\}$:
\begin{align*}
\sup_{v\in\{-1,1\}}\PP_{v}^{(n)}\left(|D_{a}-\widehat{D}_{a,n}|>\epsilon\right) & \ge\sup_{v\in\{-1,1\}}\PP_{v}^{(n)}(\widehat{v}\neq v)\\
 & \ge\frac{1}{2}\big(\PP_{1}^{(n)}(\widehat{v}=-1)+\PP_{-1}^{(n)}(\widehat{v}=1)\big)\\
 & =\frac{1}{2}\big(1-\PP_{1}^{(n)}(\widehat{v}=1)+\PP_{-1}^{(n)}(\widehat{v}=1)\big)\\
 & \ge\frac{1-TV(\PP_{1}^{(n)},\PP_{-1}^{(n)})}{2}.
\end{align*}
Applying the Bretagnolle--Huber inequality (Lemma~\ref{lem:bretagnolle_huber}):
\[
\sup_{v\in\{-1,1\}}\PP_{v}^{(n)}\left(|D_{a}-\widehat{D}_{a,n}|>\epsilon\right)\ge\frac{1}{4}\exp\left(-n\text{KL}\left(\PP_{1}^{(n)},\PP_{-1}^{(n)}\right)\right).
\]
Using the explicit KL-divergence formula for binary distributions:
\begin{align*}
\text{KL}\left(\PP_{1}^{(n)},\PP_{-1}^{(n)}\right) & =\left(\frac{1}{2}+\frac{\epsilon}{\sigma_{a}}\right)\log\left(\frac{\frac{1}{2}+\frac{\epsilon}{\sigma_{a}}}{\frac{1}{2}-\frac{\epsilon}{\sigma_{a}}}\right)+\left(\frac{1}{2}-\frac{\epsilon}{\sigma_{a}}\right)\log\left(\frac{\frac{1}{2}-\frac{\epsilon}{\sigma_{a}}}{\frac{1}{2}+\frac{\epsilon}{\sigma_{a}}}\right)\\
 & =\frac{2\epsilon}{\sigma_{a}}\log\left(\frac{1+\frac{2\epsilon}{\sigma_{a}}}{1-\frac{2\epsilon}{\sigma_{a}}}\right)\le\frac{2\epsilon}{\sigma_{a}}\left(\frac{2\epsilon}{\sigma_{a}}+\frac{\frac{2\epsilon}{\sigma_{a}}}{1-\frac{2\epsilon}{\sigma_{a}}}\right)\le\frac{12\epsilon^{2}}{\sigma_{a}^{2}},
\end{align*}
where the inequalities hold since $\epsilon\in(0,\sigma_{a}/4)$.
Therefore: 
\[
\sup_{v\in\{-1,1\}}\PP_{v}^{(n)}\left(|D_{a}-\widehat{D}_{a,n}|>\epsilon\right)\ge\frac{1}{4}\exp\left(-\frac{12n\epsilon^{2}}{\sigma_{a}^{2}}\right).
\]

\textbf{Step 4. Sample size requirement.} Setting $n\le\frac{\sigma_{a}^{2}}{12\epsilon^{2}}\log\left(\frac{1}{4\delta}\right)$
ensures that the failure probability is lower-bounded by $\delta$:
\[
\sup_{v\in\{-1,1\}}\PP_{v}^{(n)}\left(|D_{a}-\widehat{D}_{a,n}|>\epsilon\right)\ge\delta,
\]
which is implied by $n\le\frac{\sigma_{a}^{2}}{12\epsilon^{2}}\log\frac{1}{4\delta}.
$ 
\end{proof}

\subsection{Importance Weighting Bound}
 \begin{lemma}[Importance Weighting Bound]
\label{lem:importance_weight}
Given $\delta\in(0,1)$, for any $t$ that satisfies $t\ge8K\log\frac{Kt^2}{\delta}$, suppose that $\{\Xb_{k,s}\}_{k\in[K],\,s\in[t]}$ are non-negative deterministic matrices satisfying 
$\bigl\| \sum_{k=1}^{K} \Xb_{k,s} \bigr\|_2 \le b$ for every $s\in[t]$. 
Then, with probability at least $1 - \frac{4\delta}{t^2}$, the following bound holds:
\[
\left\| \sum_{s \in [t]} w_s \Xb_{a_s, s} - \sum_{s \in [t] \cap \Psi} \sum_{k=1}^{K} \Xb_{k,s} \right\|_2 
\le b |[t] \setminus \Psi| + 4b \sqrt{2K t \log \left( \frac{K t^2}{\delta} \right)}.
\]
\end{lemma}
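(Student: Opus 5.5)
The plan is to split the sum according to whether $s\in\Psi$ and to compare the matched rounds with their conditional expectation. For $s\notin\Psi$ we have $w_s=1$, so $\|w_s\Xb_{a_s,s}\|_2\le\|\sum_k\Xb_{k,s}\|_2\le b$; here we use non-negativity, under which $\Xb_{a_s,s}\preceq\sum_k\Xb_{k,s}$ when the matrices are PSD (entrywise non-negativity gives the same spectral-norm monotonicity). Summing over $s\in[t]\setminus\Psi$ produces the first term $b|[t]\setminus\Psi|$. It then remains to bound
\[
\Big\|\sum_{s\in[t]\cap\Psi}\Big(w_s\Xb_{a_s,s}-\sum_{k=1}^K\Xb_{k,s}\Big)\Big\|_2 .
\]

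To handle this term, I would write $w_s\Xb_{a_s,s}-\sum_k\Xb_{k,s}$ as a martingale difference with respect to the filtration generated by past actions, pseudo-actions and mixing variables. The matrices $\Xb_{k,s}$ are deterministic, and the relevant randomness is the pseudo-action $\tilde a_s(\rho_s)$ drawn with known probabilities $\PP(\tilde a_s=k)\ge 1/(2K)$. I would exploit this the same way as in the proof of Theorem~\ref{thm:dr_estimator_error_bound}: the inverse weight $1/\PP(\tilde a_s=k\mid\Fcal^-_{\tau_s})$ multiplies $\II(\tilde a_s=k)\Xb_{k,s}$. Summing over $k$ then gives conditional mean $\sum_k\Xb_{k,s}$, so each summand $\Mb_s$ is a centered term. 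Its norm is at most $2Kb$, since weights are at most $2K$, and its conditional variance satisfies $\EE[\Mb_s^2]\preceq 2K b^2\Ib$. This uses $\sum_k \Xb_{k,s}^2/\PP(\tilde a_s=k)\preceq 2K b\sum_k\Xb_{k,s}$.

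The final step is to apply the matrix Bernstein/Freedman bound (Lemma~\ref{lem:matrix_hoeffding}) in its $v$-parametrized form. This gives, for $v$ in the admissible range,
\[
\Big\|\sum_s \Mb_s\Big\|_2\le v\cdot 2Kb^2 t/b+\frac{b}{v}\log\frac{Kt^2}{\delta}
\]
after rescaling by $b$. Choosing $v=\sqrt{\log(Kt^2/\delta)/(2Kt)}$ yields $\le 2b\sqrt{2Kt\log(Kt^2/\delta)}$. The condition $t\ge 8K\log\frac{Kt^2}{\delta}$ is exactly what keeps this $v$ within the range $v\le 1/(\text{range bound})$ required by the lemma. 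A two-sided application, or a union bound for the upper and lower tails, costs a factor $2$ in probability; combined with the fluctuation of $|[t]\cap\Psi|$, this accounts for the constant $4$ and the failure probability $4\delta/t^2$.

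I expect the main obstacle to be making the martingale structure rigorous. The set $\Psi$ and the matched pseudo-action are defined through the stopping time $\rho_s$, and on $\Psi$ the realized $\tilde a_s$ equals $a_s$. So one must condition on $\Fcal^-_{\tau_s}$, as in the DR proof, rather than on $\{s\in\Psi\}$ directly; otherwise the conditional law of $a_s$ is no longer the known propensity. I would first try to define $\Mb_s$ using the pseudo-action at the stopping index, where the propensities $1/2$ and $1/(2K-2)$ hold exactly. Rounds outside $\Psi$ would be absorbed into the already-paid term $b|[t]\setminus\Psi|$.
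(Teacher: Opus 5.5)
There is a genuine gap in your proposal, and it sits exactly where the paper's own proof also breaks. Your plan is essentially the paper's argument. You peel off $s\notin\Psi$ at cost $b|[t]\setminus\Psi|$ and write $2\Xb_{a_s,s}=\sum_k \II(\tilde a_s(\rho_s)=k)\Xb_{k,s}/p_k$, with $p_{a_s}=1/2$ and $p_k=1/(2K-2)$ otherwise. You then treat $\Db_s:=\sum_k(\II(\tilde a_s(\rho_s)=k)/p_k-1)\Xb_{k,s}$ as a martingale difference given $\Fcal^-_{\tau_s}$. Finally you apply Lemma~\ref{lem:matrix_hoeffding} with range $2Kb$, variance proxy $2Kb^2$ and $v=\sqrt{\log(Kt^2/\delta)/(2Kt)}/b$, which is admissible exactly when $t\ge 8K\log\frac{Kt^2}{\delta}$. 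Those computations are fine.

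The centering step is not, and it is the paper's claim $\PP(\tilde a_s(\tau_s)=a_s\mid\Fcal^-_{\tau_s})=1/2$. By definition $\rho_s$ is the first index with $\tilde a_s(n)=a_s$, so $\tilde a_s(\rho_s)=a_s$ identically and $\Db_s=2\Xb_{a_s,s}-\sum_k\Xb_{k,s}$. For any $\sigma$-field $\mathcal G$,
\[
\EE[\Db_s\mid\mathcal G]=\sum_k\big(2\PP(a_s=k\mid\mathcal G)-1\big)\Xb_{k,s}.
\]
For general $\Xb_{k,s}$ this vanishes only if $\PP(a_s=k\mid\mathcal G)=1/2$ for every $k$. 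That is impossible for $K\ge3$, and for $K=2$ it fails whenever the algorithm's $a_s$ is not a fair coin given $\mathcal G$. The pseudo-actions contribute only $\rho_s\sim\mathrm{Geom}(1/2)$, which is independent of $a_s$ and merely decides whether $s\in\Psi$. You flagged this as the main obstacle, but you resolved it by asserting that the propensities ``hold exactly at the stopping index'', and that is precisely the false step. Each fixed draw $\tilde a_s(n)$ has the stated law, but the matched draw is selected by the event $\{\tilde a_s(n)=a_s\}$, and that selection collapses its law to a point mass at $a_s$.

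No other route can close this, because the inequality is false for $K\ge3$. Take $\Xb_{k,s}=\eb_k\eb_k^\top$, so $b=1$, and let $L$ be the matrix inside the norm. Then $\mathrm{tr}(L)=\sum_{s\in[t]}w_s-K|[t]\cap\Psi|=(2-K)|[t]\cap\Psi|+|[t]\setminus\Psi|$. So for every action sequence and every realization,
\[
\|L\|_2\ge\frac{|\mathrm{tr}(L)|}{K}\ge\frac{1}{3}\big(|[t]\cap\Psi|-|[t]\setminus\Psi|\big),
\]
which is linear in $t$. The claimed bound is only $|[t]\setminus\Psi|+O\big(\sqrt{Kt\log(Kt^2/\delta)}\big)$. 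Here $|[t]\setminus\Psi|$ counts exploration rounds plus rounds with $\rho_s>\lceil\log_2(s+1)\rceil$, so it is only logarithmic in $t$ with high probability. For $K=2$ the same failure occurs with $a_s\equiv1$ on $\Ecal^c$, via the $(2,2)$ entry.

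A bound of this shape needs the arm index inside $\Xb_{\cdot,s}$ to be genuinely random with the propensity used in the weight. For example, $a_s\sim\pi_s$ with $w_s=1/\pi_s(a_s)$ and $\min_k\pi_s(k)\ge 1/(2K)$; under such an assumption your range and variance computations go through essentially as written.

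Separately, your accounting for the constant $4$ and the probability $4\delta/t^2$ does not correspond to anything in the argument. Lemma~\ref{lem:matrix_hoeffding} is already two-sided, and no control of $|[t]\cap\Psi|$ enters this lemma. In the paper these constants come from two separate Bernstein applications, to $\sum_{s\in[t]}\Db_s$ and to $\sum_{s\in[t]\setminus\Psi}\Db_s$.
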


\begin{proof}
By definition of $w_{s}$, 
\[
\sum_{s\in[t]}w_{s}\Xb_{a_{s},s}=\sum_{s\in[t]\cap\Psi}\frac{1}{1/2}\Xb_{a_{s},s}+\sum_{s\in[t]\cap\Psi^{c}}\Xb_{a_{s},s}.
\]
On the rounds $\Psi^{c}$, because $\Xb_{a_{s},s}$ is nonnegative
definite, 
\[
\big\Vert\sum_{s\in[t]\cap\Psi^{c}}\Xb_{a_{s},s}\big\Vert_{2}\le|[t]\cap\Psi^{c}|b.
\]
On the rounds $s\in\Psi$, and $n\in[\rho_{s}]$, let $\tilde{\Fcal}_{n,s}$
denote the sigma algebra generated by the random variables $\tilde{a}_{s}(1),\ldots,\tilde{a}_{s}(n)$.
For the stopping time $\tau_{s}:=\inf\{n\in[\rho_{s}]:\tilde{a}_{s}(n)=a_{s}\}$,
we define the sigma algebra $\Fcal_{\tau_{s}}^{-}:=\sigma\{A\cap\{\tau_{s}=n\}:A\in\tilde{\Fcal}_{n-1,s},\forall n\ge1\}$.
Then it follows that $\PP(\tilde{a}_{s}(\tau_{s})=a_{s}|\Fcal_{\tau_{s}}^{-})=1/2$.
Thus, 
\begin{align*}
\sum_{s\in[t]\cap\Psi}\frac{1}{1/2}\Xb_{a_{s},s}= & \sum_{s\in[t]\cap\Psi}\frac{1}{\PP(\tilde{a}_{s}(\tau_{s})=a_{s}|\Fcal_{\tau_{s}}^{-})}\Xb_{a_{s},s}\\
= & \sum_{s\in[t]\cap\Psi}\sum_{k=1}^{K}\frac{\II(a_{s}=k)}{\PP(\tilde{a}_{s}(\tau_{s})=k|\Fcal_{\tau_{s}}^{-})}\Xb_{k,s}\\
= & \sum_{s\in[t]\cap\Psi}\sum_{k=1}^{K}\frac{\II(\tilde{a}_{s}(\tau_{s})=k)}{\PP(\tilde{a}_{s}(\tau_{s})=k|\Fcal_{\tau_{s}}^{-})}\Xb_{k,s}.
\end{align*}
Decomposing into two parts,
\[
\sum_{s\in[t]\cap\Psi}\frac{1}{1/2}\Xb_{a_{s},s}=\sum_{s\in[t]}\sum_{k=1}^{K}\frac{\II(\tilde{a}_{s}(\tau_{s})=k)}{\PP(\tilde{a}_{s}(\tau_{s})=k|\Fcal_{\tau_{s}}^{-})}\Xb_{k,s}-\sum_{s\in[t]\setminus\Psi}\sum_{k=1}^{K}\frac{\II(\tilde{a}_{s}(\tau_{s})=k)}{\PP(\tilde{a}_{s}(\tau_{s})=k|\Fcal_{\tau_{s}}^{-})}\Xb_{k,s}
\]
Define $\Db_{s}:=\sum_{k=1}^{K}(\frac{\II(\tilde{a}_{s}(\tau_{s})=k)}{\PP(\tilde{a}_{s}(\tau_{s})=k|\Fcal_{\tau_{s}}^{-})}-1)\Xb_{k,s}$.
It follows that 
\[
\Big\Vert\sum_{s\in[t]}w_{s}\Xb_{a_{s},s}-\sum_{s\in[t]\cap\Psi}\sum_{k=1}^{K}\Xb_{k,s}\Big\Vert_{2}\le\Big\Vert\sum_{s\in[t]}\Db_{s}\Big\Vert_{2}+|[t]\cap\Psi^{c}|b+\Big\Vert\sum_{s\in[t]\cap\Psi^{c}}\Db_{s}\Big\Vert_{2}.
\]
Let $\Hcal_{s}^{-}$ denote the sigma algebra generated by the event
in $\Fcal_{\tau_{s}}^{-}$ and the random variables $a_{1},\ldots,a_{s-1}$
and $\tilde{a}_{1}(\tau_{1}),\ldots,\tilde{a}_{s-1}(\tau_{s-1})$.
It is easy to show that $\|\Db_{s}\|_{2}\le2Kb$. For the first term,
by the matrix concentration inequality (Lemma~\ref{lem:matrix_hoeffding}), for $v\in(0,\frac{1}{4bK}],$
with probability at least $1-2\delta/t^{2}$, 
\[
\Big\Vert\sum_{s\in[t]}\Db_{s}\Big\Vert_{2}\le v\Big\Vert\sum_{s\in[t]}\EE[\Db_{s}^{2}|\Hcal_{s}^{-}]\Big\Vert_{2}+\frac{1}{v}\log\frac{Kt^{2}}{\delta}.
\]
Because the second moment is greater than the variance, 
\begin{align*}
\EE[\Db_{s}^{2}|\Hcal_{s}^{-}]= & \VV\Big[\sum_{k=1}^{K}\frac{\II(\tilde{a}_{s}(\tau_{s})=k)}{\PP(\tilde{a}_{s}(\tau_{s})=k|\Fcal_{\tau_{s}}^{-})}\Xb_{k,s}\Big|\Hcal_{s}^{-}\Big]\\
\preceq & \EE\Big[\Big(\sum_{k=1}^{K}\frac{\II(\tilde{a}_{s}(\tau_{s})=k)}{\PP(\tilde{a}_{s}(\tau_{s})=k|\Fcal_{\tau_{s}}^{-})}\Xb_{k,s}\Big)^{2}\Big|\Hcal_{s}^{-}\Big].
\end{align*}
Because $\II(\tilde{a}_{s}(\tau_s)=k)\II(\tilde{a}_{s}(\tau_s)=k^{\prime})
=\II(\tilde{a}_{s}(\tau_s)=k)\II(k=k^{\prime})$
we obtain, 
\[
\EE[\Db_{s}^{2}|\Hcal_{s}^{-}]\preceq\EE\Big[\sum_{k=1}^{K}\frac{\II(\tilde{a}_{s}(\tau_{s})=k)}{\PP(\tilde{a}_{s}(\tau_{s})=k|\Fcal_{\tau_{s}}^{-})^{2}}\Xb_{k,s}^{2}\Big|\Hcal_{s}^{-}\Big]=\sum_{k=1}^{K}\frac{1}{\PP(\tilde{a}_{s}(\tau_{s})=k|\Fcal_{\tau_{s}}^{-})}\Xb_{k,s}^{2}.
\]
Because $\Db_{s}^{2}$ is nonnegative definite, 
\[
\Big\Vert\sum_{s\in[t]}\EE[\Db_{s}^{2}|\Hcal_{s}^{-}]\Big\Vert_{2}\le\Big\Vert\sum_{s\in[t]}\sum_{k=1}^{K}\frac{\Xb_{k,s}^{2}}{\PP(\tilde{a}_{s}(\tau_{s})=k|\Fcal_{\tau_{s}}^{-})}\Big\Vert_{2}\le2Ktb^{2}.
\]
Note that $t\ge T_{e}$ implies $t\ge8K\log\frac{K(t+1)^{2}}{\delta}$.
Setting $v=\frac{\sqrt{\log\frac{Kt^{2}}{\delta}}}{b\sqrt{2Kt}}\le\frac{1}{4bK}$
gives,
\[
\Big\Vert\sum_{s\in[t]}\Db_{s}\Big\Vert_{2}\le2b\sqrt{2Kt\log\frac{Kt^{2}}{\delta}}.
\]
For the other term, for each $u\in[t]$, define
\[
\Mb_{u}:=\EE\Big[\sum_{s\in[t]\setminus\Psi}\Db_{s}\Big|\Hcal_{u}^{-}\Big]-\EE\Big[\sum_{s\in[t]\setminus\Psi}\Db_{s}\Big|\Hcal_{u-1}^{-}\Big]
\]
denote the martingale difference. Note that $\Mb_{u}=O$ if $u\in[t]\cap\Psi$
and $\Mb_{u}=\Db_{u}$ if $u\in[t]\setminus\Psi$. Thus, for any
$v\in(0,\frac{1}{4Kb}]$, with probability at least $1-2\delta/t^{2}$,
\begin{align*}
\Big\Vert\sum_{s\in[t]\setminus\Psi}\Db_{s}\Big\Vert_{2}= & \Big\Vert\sum_{u\in[t]}\Mb_{u}\Big\Vert_{2}\\
\le & v\Big\Vert\sum_{u\in[t]}\EE[\Mb_{u}^{2}|\Hcal_{u-1}^{-}]\Big\Vert_{2}+\frac{1}{v}\log\frac{Kt^{2}}{\delta}\\
= & v\Big\Vert\sum_{s\in[t]\setminus\Psi}\EE[\Mb_{s}^{2}|\Hcal_{s-1}^{-}]\Big\Vert_{2}+\frac{1}{v}\log\frac{Kt^{2}}{\delta}\\
\le & 2vKb^{2}t+\frac{1}{v}\log\frac{Kt^{2}}{\delta},
\end{align*}
where the last inequality uses the fact that $\Vert\EE[\Mb_{u}^{2}|\Hcal_{u-1}^{-}]\Vert_{2}=\Vert\EE[\Db_{u}^{2}|\Hcal_{u-1}^{-}]\Vert_{2}\le2Kb^{2}$.
Setting $v=\frac{\sqrt{\log\frac{Kt^{2}}{\delta}}}{b\sqrt{2Kt}}\le\frac{1}{4bK}$
gives,
\[
\Big\Vert\sum_{s\in[t]\setminus\Psi}\Db_{s}\Big\Vert_{2}\le2b\sqrt{2Kt\log\frac{Kt^{2}}{\delta}}
\]
Thus, with probability at least $1-4\delta/t^{2}$, 
\[
\Big\Vert\sum_{s\in[t]}w_{s}\Xb_{a_{s},s}-\sum_{s\in[t]\cap\Psi}\sum_{k=1}^{K}\Xb_{k,s}\Big\Vert_{2}\le b|[t]\setminus\Psi|+4b\sqrt{2Kt\log\frac{Kt^{2}}{\delta}},
\]
which completes the proof. 
\end{proof}

\subsection{Variance-aware Matrix Concentration Inequality}

\global\long\def\CE#1#2{\mathbb{E}\left[\left.#1\right\vert #2\right]}%
\global\long\def\CP#1#2{\mathbb{P}\left(\left.#1\right\vert #2\right)}%

\begin{lemma}[Matrix Hoeffding Inequality]
\label{lem:matrix_hoeffding} Let $\{M_{\tau}\}_{\tau=1}^{t}$ be
a matrix-valued stochastic process adapted to a filtration $\{\mathcal{F}_{\tau}\}$,
and let $\Delta_{\tau}:=M_{\tau}-\mathbb{E}[M_{\tau}\mid\mathcal{F}_{\tau-1}]$
be the associated martingale difference matrices. Suppose $\|\Delta_{\tau}\|_{2}\le b$
almost surely for all $\tau$. Then, for any $u>0$ and $v\in\left(0,\tfrac{1}{2b}\right]$,
with probability at least $1-\delta$, 
\[
\left\Vert \sum_{\tau=1}^{t}\Delta_{\tau}\right\Vert _{2}\le v\left\Vert \sum_{\tau=1}^{t}\mathbb{E}[\Delta_{\tau}^{2}\mid\mathcal{F}_{\tau-1}]\right\Vert _{2}+\frac{1}{v}\log\frac{2d}{\delta}.
\]
\end{lemma}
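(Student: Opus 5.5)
The statement is the matrix Freedman/Bernstein-type inequality for bounded martingale differences. I will prove it with the standard Laplace-transform (matrix exponential moment) method, combined with Lieb's concavity theorem in the form used by Tropp. The dimension $d$ in the statement is the ambient dimension of the (Hermitian) matrices $\Delta_\tau$. If the $\Delta_\tau$ are not Hermitian, I first pass to the Hermitian dilation, which doubles the dimension; this factor of $2$ is the same one that handles the two tails.

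\textbf{Step 1: one-step bound.} For a Hermitian $\Delta$ with $\|\Delta\|_2\le b$ and $0<v\le 1/(2b)$, I use the scalar inequality $e^{x}\le 1+x+x^2$ for $|x|\le 1/2$. Transferring it to matrices through the spectral mapping gives
\[
e^{v\Delta}\preceq \Ib+v\Delta+v^2\Delta^2 .
\]
Taking conditional expectations and using $\mathbb{E}[\Delta_\tau\mid\mathcal F_{\tau-1}]=0$, I get
\[
\mathbb{E}[e^{v\Delta_\tau}\mid\mathcal F_{\tau-1}]\preceq \Ib+v^2W_\tau\preceq \exp(v^2W_\tau),
\]
where $W_\tau:=\mathbb{E}[\Delta_\tau^2\mid\mathcal F_{\tau-1}]$. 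By operator monotonicity of the logarithm, this yields
\[
\log \mathbb{E}[e^{v\Delta_\tau}\mid\mathcal F_{\tau-1}]\preceq v^2W_\tau .
\]

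\textbf{Step 2: supermartingale.} Define
\[
S_k:=\operatorname{tr}\exp\Big(\sum_{\tau\le k}\big(v\Delta_\tau-v^2W_\tau\big)\Big).
\]
Since $W_\tau$ is $\mathcal F_{\tau-1}$-measurable, Lieb's theorem (concavity of $A\mapsto\operatorname{tr}\exp(H+\log A)$) together with Jensen's inequality gives
\[
\mathbb{E}[S_k\mid\mathcal F_{k-1}]\le \operatorname{tr}\exp\Big(\sum_{\tau<k}(v\Delta_\tau-v^2W_\tau)-v^2W_k+\log\mathbb{E}[e^{v\Delta_k}\mid\mathcal F_{k-1}]\Big)\le S_{k-1}.
\]
The last inequality uses Step 1 and monotonicity of $\operatorname{tr}\exp$. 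Therefore $\mathbb{E}[S_t]\le S_0=d$.

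\textbf{Step 3: deviation.} On the event
\[
\lambda_{\max}\Big(\sum_\tau\Delta_\tau\Big)>v\Big\|\sum_\tau W_\tau\Big\|_2+\tfrac1v\log\tfrac{2d}{\delta},
\]
Weyl's inequality gives $\lambda_{\max}\big(\sum_\tau(v\Delta_\tau-v^2W_\tau)\big)>\log(2d/\delta)$, and hence $S_t>2d/\delta$. Markov's inequality then bounds the probability of this event by $\delta/2$.

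Applying the same argument to $-\Delta_\tau$, which has the same $W_\tau$, and taking a union bound controls the spectral norm with probability at least $1-\delta$.

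\textbf{Main obstacle.} The delicate step is Step 2. Because $v$ is fixed, the argument never needs to optimize over a random variance, which is why the bound is stated for a fixed $v$. Still, the supermartingale property relies on Lieb's theorem, and I will cite it rather than reprove it.
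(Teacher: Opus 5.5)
Your argument is correct and has the same skeleton as the paper's: the exponential moment, the scalar bound $e^{x}\le 1+x+x^{2}$ for $|x|\le 1/2$, operator monotonicity of $\log$, peeling, Markov, and two tails. It departs from the paper at two steps, and in both places your version is the valid one.

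In the peeling step you use Lieb's concavity of $A\mapsto\operatorname{tr}\exp(H+\log A)$ together with conditional Jensen. The paper instead cites convexity of $H\mapsto\operatorname{tr}\exp(A+H)$ plus Jensen, which gives the inequality in the wrong direction, so Lieb's theorem is exactly what that step needs.

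For the tails, the paper bounds $\|v\sum_{\tau}\Delta_{\tau}-v^{2}\sum_{\tau}W_{\tau}\|_{2}$ two-sidedly and says $\lambda_{\max}(-v\sum_{\tau}\Delta_{\tau}+v^{2}\sum_{\tau}W_{\tau})$ is ``treated identically''. It is not: the compensator then enters with the wrong sign, and the two-sided claim fails in general. Take $d=1$, $\Delta_{\tau}=b\epsilon_{\tau}$ with Rademacher $\epsilon_{\tau}$, and $v=1/(2b)$. The quantity equals $\tfrac12\sum_{\tau}\epsilon_{\tau}-t/4$, which is about $-t/4$ with high probability and so eventually exceeds any $\log(2/\delta)$. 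You instead run the one-sided supermartingale bound for $\Delta_{\tau}$ and for $-\Delta_{\tau}$, which share the same $W_{\tau}$, and apply Weyl's inequality to each. That gives exactly the stated bound on $\|\sum_{\tau}\Delta_{\tau}\|_{2}$.

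One minor correction: drop or qualify the Hermitian-dilation remark. For non-Hermitian $\Delta$ the dilation squares to $\mathrm{diag}(\Delta\Delta^{*},\Delta^{*}\Delta)$. The variance term would then become $\max\{\|\sum_{\tau}\mathbb{E}[\Delta_{\tau}\Delta_{\tau}^{*}\mid\mathcal{F}_{\tau-1}]\|_{2},\|\sum_{\tau}\mathbb{E}[\Delta_{\tau}^{*}\Delta_{\tau}\mid\mathcal{F}_{\tau-1}]\|_{2}\}$, not the one in the statement. The lemma as written, and every use of it in the paper, concerns Hermitian $d\times d$ matrices, so simply assume that.
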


\begin{proof}
The proof adapts Bernstein's inequality to a matrix stochastic process,
adopting the techniques in~\citet{tropp2012user}. 
Write $\mathbb{E}_{\tau}[\cdot]:=\mathbb{E}[\cdot\mid\mathcal{F}_{\tau-1}]$.
We aim to bound, for $u>0$ and $v\in\left(0,\tfrac{1}{2b}\right]$,
\[
\mathbb{P}\!\left(\left\Vert v\sum_{\tau=1}^{t}\Delta_{\tau}-v^{2}\sum_{\tau=1}^{t}\mathbb{E}_{\tau}[\Delta_{\tau}^{2}]\right\Vert _{2}\ge u\right).
\]

\paragraph{Step 1 (Union bound).}

Using $\|A\|_{2}=\max\{\lambda_{\max}(A),\lambda_{\max}(-A)\}$ and
the union bound, 
\begin{align*}
 & \mathbb{P}\!\left(\left\Vert v\sum_{\tau=1}^{t}\Delta_{\tau}-v^{2}\sum_{\tau=1}^{t}\mathbb{E}_{\tau}[\Delta_{\tau}^{2}]\right\Vert _{2}\ge u\right)\\
 & \le\mathbb{P}\!\left(\lambda_{\max}\!\left(v\sum_{\tau=1}^{t}\Delta_{\tau}-v^{2}\sum_{\tau=1}^{t}\mathbb{E}_{\tau}[\Delta_{\tau}^{2}]\right)\ge u\right)+\mathbb{P}\!\left(\lambda_{\max}\!\left(-v\sum_{\tau=1}^{t}\Delta_{\tau}+v^{2}\sum_{\tau=1}^{t}\mathbb{E}_{\tau}[\Delta_{\tau}^{2}]\right)\ge u\right).
\end{align*}
By symmetry it suffices to bound the first term; the second is treated
identically.

\paragraph{Step 2 (Markov's inequality).}

Since $\exp$ is monotone and by Markov's inequality, 
\[
\mathbb{P}\!\left(\lambda_{\max}\!\left(v\sum_{\tau=1}^{t}\Delta_{\tau}-v^{2}\sum_{\tau=1}^{t}\mathbb{E}_{\tau}[\Delta_{\tau}^{2}]\right)\ge u\right)\le e^{-u}\,\mathbb{E}\!\left[\exp\!\left(\lambda_{\max}\!\left(v\sum_{\tau=1}^{t}\Delta_{\tau}-v^{2}\sum_{\tau=1}^{t}\mathbb{E}_{\tau}[\Delta_{\tau}^{2}]\right)\right)\right].
\]

\paragraph{Step 3 (Bounding the moment generating function).}

For any symmetric matrix $A$, $\exp(\lambda_{\max}(A))=\lambda_{\max}(\exp(A))\le\operatorname{tr}(\exp(A))$,
so 
\[
\mathbb{E}\!\left[\exp\!\left(\lambda_{\max}\!\left(v\sum_{\tau=1}^{t}\Delta_{\tau}-v^{2}\sum_{\tau=1}^{t}\mathbb{E}_{\tau}[\Delta_{\tau}^{2}]\right)\right)\right]=\mathbb{E}\!\left[\operatorname{tr}\!\left(\exp\!\left(v\sum_{\tau=1}^{t}\Delta_{\tau}-v^{2}\sum_{\tau=1}^{t}\mathbb{E}_{\tau}[\Delta_{\tau}^{2}]\right)\right)\right].
\]

Writing the exponent as $v\sum_{\tau=1}^{t-1}\Delta_{\tau}-v^{2}\sum_{\tau=1}^{t}\mathbb{E}_{\tau}[\Delta_{\tau}^{2}]+\log\exp(v\Delta_{t})$
and applying Jensen's inequality (the map $H\mapsto\operatorname{tr}(\exp(A+H))$
is convex) together with the tower property gives 
\begin{align*}
 & \mathbb{E}\!\left[\operatorname{tr}\!\left(\exp\!\left(v\sum_{\tau=1}^{t}\Delta_{\tau}-v^{2}\sum_{\tau=1}^{t}\mathbb{E}_{\tau}[\Delta_{\tau}^{2}]\right)\right)\right]\\
 & \le\mathbb{E}\!\left[\operatorname{tr}\!\left(\exp\!\left(v\sum_{\tau=1}^{t-1}\Delta_{\tau}-v^{2}\sum_{\tau=1}^{t}\mathbb{E}_{\tau}[\Delta_{\tau}^{2}]+\log\mathbb{E}_{t-1}[\exp(v\Delta_{t})]\right)\right)\right].
\end{align*}

We now establish the operator inequality 
\begin{equation}
\mathbb{E}_{t-1}[\exp(v\Delta_{t})]\preceq\exp\!\left(v^{2}\,\mathbb{E}_{t-1}[\Delta_{t}^{2}]\right).\label{eq:subgaussian_op}
\end{equation}
Since $\|\Delta_{t}\|_{2}\le b$ and $v\le\tfrac{1}{2b}$, we strictly
have $\|v\Delta_{t}\|_{2}\le\tfrac{1}{2}$. For any real number $s$
such that $|s|\le\tfrac{1}{2}$, the scalar inequality $e^{s}\le1+s+s^{2}$
mathematically holds. This directly implies the operator bound $\exp(v\Delta_{t})\preceq I+v\Delta_{t}+v^{2}\Delta_{t}^{2}$.
Taking the conditional expectation and using $\mathbb{E}_{t-1}[\Delta_{t}]=O$
yields $\mathbb{E}_{t-1}[\exp(v\Delta_{t})]\preceq I+v^{2}\mathbb{E}_{t-1}[\Delta_{t}^{2}]$.
Finally, the standard inequality $I+X\preceq\exp(X)$ for any Hermitian
$X$ gives $I+v^{2}\mathbb{E}_{t-1}[\Delta_{t}^{2}]\preceq\exp(v^{2}\mathbb{E}_{t-1}[\Delta_{t}^{2}])$,
which precisely establishes \eqref{eq:subgaussian_op}.

Since $A\preceq B$ implies $\log A\preceq\log B$ (operator monotonicity
of $\log$) and the map $H\mapsto\operatorname{tr}(\exp(C+H))$ is
monotone increasing under the Loewner order, substituting~\eqref{eq:subgaussian_op}
gives 
\begin{align*}
 & \mathbb{E}\!\left[\operatorname{tr}\!\left(\exp\!\left(v\sum_{\tau=1}^{t-1}\Delta_{\tau}-v^{2}\sum_{\tau=1}^{t}\mathbb{E}_{\tau}[\Delta_{\tau}^{2}]+\log\mathbb{E}_{t-1}[\exp(v\Delta_{t})]\right)\right)\right]\\
 & \le\mathbb{E}\!\left[\operatorname{tr}\!\left(\exp\!\left(v\sum_{\tau=1}^{t-1}\Delta_{\tau}-v^{2}\sum_{\tau=1}^{t}\mathbb{E}_{\tau}[\Delta_{\tau}^{2}]+v^{2}\mathbb{E}_{t-1}[\Delta_{t}^{2}]\right)\right)\right]\\
 & =\mathbb{E}\!\left[\operatorname{tr}\!\left(\exp\!\left(v\sum_{\tau=1}^{t-1}\Delta_{\tau}-v^{2}\sum_{\tau=1}^{t-1}\mathbb{E}_{\tau}[\Delta_{\tau}^{2}]\right)\right)\right],
\end{align*}
where the equality uses $-v^{2}\sum_{\tau=1}^{t}\mathbb{E}_{\tau}[\Delta_{\tau}^{2}]+v^{2}\mathbb{E}_{t-1}[\Delta_{t}^{2}]=-v^{2}\sum_{\tau=1}^{t-1}\mathbb{E}_{\tau}[\Delta_{\tau}^{2}]$.
Repeating this peeling argument for $\tau=t-1,t-2,\ldots,1$ yields
\[
\mathbb{E}\!\left[\operatorname{tr}\!\left(\exp\!\left(v\sum_{\tau=1}^{t}\Delta_{\tau}-v^{2}\sum_{\tau=1}^{t}\mathbb{E}_{\tau}[\Delta_{\tau}^{2}]\right)\right)\right]\le\mathbb{E}\!\left[\operatorname{tr}(\exp(0))\right]=\operatorname{tr}(I_{d})=d.
\]

\paragraph{Step 4 (Conclusion).}

Combining Steps 1--3 and applying the same argument to the second
term gives 
\[
\mathbb{P}\!\left(\left\Vert v\sum_{\tau=1}^{t}\Delta_{\tau}-v^{2}\sum_{\tau=1}^{t}\mathbb{E}[\Delta_{\tau}^{2}\mid\mathcal{F}_{\tau-1}]\right\Vert _{2}\ge u\right)\le2d\,e^{-u}.
\]
Setting $2d\,e^{-u}=\delta$ gives $u=\log(2d/\delta)$, so with probability
at least $1-\delta$, 
\[
\left\Vert v\sum_{\tau=1}^{t}\Delta_{\tau}-v^{2}\sum_{\tau=1}^{t}\mathbb{E}[\Delta_{\tau}^{2}\mid\mathcal{F}_{\tau-1}]\right\Vert _{2}\le\log\frac{2d}{\delta}.
\]
Rearranging yields 
\[
\left\Vert \sum_{\tau=1}^{t}\Delta_{\tau}\right\Vert _{2}\le v\left\Vert \sum_{\tau=1}^{t}\mathbb{E}[\Delta_{\tau}^{2}\mid\mathcal{F}_{\tau-1}]\right\Vert _{2}+\frac{1}{v}\log\frac{2d}{\delta}.
\]
\end{proof}

\subsection{Uniform bound for the vector-weighted distribution function residual}

\begin{lemma}[From Point-wise to Uniform Error Bound for the vector-weighted distribution function residual]
\label{lem:vector_pointwise_to_uniform_vcdf}Let $F:\mathbb{R}\to[0,1]$
be a cumulative distribution function, and let $R_{1},\ldots,R_{n}$
denote the random variable from the distribution $F$. Let $v_{1},\ldots,v_{K}\in\RR^{K}$
denote the deterministic vectors such that $v_{i}=(v_{i1},\ldots,v_{iK})^{\top}$.
Define $\eta_{i}(x):=\II(R_{i}\le x)-F(x)$. Suppose that for any
fixed $x_{1},\ldots,x_{n}\in\mathbb{R}$ and any failure probability
$\delta'\in(0,1)$, $\widehat{F}$ satisfies the pointwise bound 
\[
\mathbb{P}\left(\Big\Vert\Big(\sum_{i=1}^{n}\eta_{i}(x_{i1})v_{i1},\ldots\sum_{i=1}^{n}\eta_{i}(x_{iK})v_{iK}\Big)\Big\Vert_{2}\le\epsilon(\delta')\right)\ge1-\delta',
\]
where $\epsilon(\cdot)$ is a non-increasing function of the confidence
level. 
Then, for any choice of $M\in\mathbb{N}$ and any target
failure probability $\delta\in(0,1)$, setting $\delta'=\frac{\delta}{dM}$,
the uniform bound holds:
\[
\mathbb{P}\left(\sup_{x\in\mathbb{R}}\Big\Vert\sum_{i=1}^{n}\eta_{i}(x)v_{i}\Big\Vert_{2}\le\epsilon\left(\frac{\delta}{M}\right)+\frac{\sqrt{\sum_{i=1}^{n}\|v_{i}\|_{2}^{2}}}{M}\right)\ge1-\delta.
\]
\end{lemma}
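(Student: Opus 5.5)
We follow the classical bracketing argument used for the scalar DKW-type pointwise-to-uniform step (as in Lemma~\ref{lem:pointwise_to_uniform_cdf}), adapted to the vector-weighted residual.

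\textbf{Step 1: grid.} Fix $M\in\mathbb{N}$ and choose a grid $-\infty=x_0<x_1<\cdots<x_M=+\infty$ of quantile points of $F$, so that $F(x_j^-)-F(x_{j-1})\le 1/M$ for every $j$. With atoms, take left limits; this gives at most $M$ intervals $[x_{j-1},x_j)$ of $F$-mass at most $1/M$, plus the grid points themselves.

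\textbf{Step 2: pointwise bound on the grid.} For each grid point $x_j$, and its left-limit version, we apply the hypothesis with all coordinates $x_{ik}$ set equal to the common value $x_j$ and failure probability $\delta/M$. The monotonicity of $\epsilon(\cdot)$ lets us state every bound with $\epsilon(\delta/M)$. A union bound over the grid then shows that, with probability at least $1-\delta$, we have $\|\sum_i\eta_i(x_j)v_i\|_2\le\epsilon(\delta/M)$ for all $j$. The factor $d$ in $\delta'$ is absorbed here, so $\delta'=\delta/(dM)$ covers the $d$ left/right versions.

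\textbf{Step 3: interpolation between grid points.} For $x\in[x_{j-1},x_j)$ we write
\[
\eta_i(x)=\eta_i(x_{j-1})+\big(\II(x_{j-1}<R_i\le x)-(F(x)-F(x_{j-1}))\big).
\]
The second piece is the problem: unlike the scalar case, the weights $v_i$ have arbitrary signs. Monotonicity therefore does not directly sandwich $\sum_i\eta_i(x)v_i$ between grid values, and this is the main obstacle.

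\textbf{Step 4: handling the obstacle.} First try splitting each coordinate $v_{ik}=v_{ik}^+-v_{ik}^-$. For each nonnegative part, the map $x\mapsto\sum_i\II(R_i\le x)v_{ik}^{\pm}$ is monotone, so it is sandwiched between its values at $x_{j-1}$ and $x_j^-$. The compensator terms differ by at most $\sum_i|v_{ik}|/M$. This yields a coordinatewise error of the form $\sum_i|v_{ik}|\,(F(x_j^-)-F(x_{j-1}))$ plus the grid deviations.

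Aggregating over $k$ and using Cauchy--Schwarz, or more directly bounding the deterministic compensator drift $\|\sum_i v_i(F(x)-F(x_{j-1}))\|_2\le\frac1M\|\sum_i v_i\|_2$, gives the additive term. We aim to control it by $\sqrt{\sum_i\|v_i\|_2^2}/M$. We expect the random jump part to be absorbed into the grid pointwise bounds applied at the endpoints with the signed pieces. This is where the ``$d$'' factor arises: the signed splits require evaluating the pointwise bound at a small constant number of configurations of the coordinates $x_{ik}$ (at $x_{j-1}$ or $x_j^-$ depending on the sign of $v_{ik}$). That is exactly why the hypothesis allows coordinate-dependent points $x_{ik}$.

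\textbf{Step 5: conclusion.} Taking the supremum over $j$ and $x$ combines the grid bound $\epsilon(\delta/M)$ with the drift $\sqrt{\sum_i\|v_i\|_2^2}/M$, which is the claimed uniform bound.
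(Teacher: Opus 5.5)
Your route is the paper's route: a quantile grid, a union bound over grid events, and a sign-split sandwich in which the $(i,k)$ term is evaluated at the left or right endpoint of the cell according to the sign of $v_{ik}$. That is what the coordinate-dependent points $x_{ik}$ in the hypothesis are for. Your use of left limits is in fact more careful than the paper's grid, which cannot satisfy $F(x_k)-F(x_{k-1})\le 1/M$ when $F$ has an atom of mass above $1/M$.

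The genuine gap is the step you only ``aim'' to do. The sandwich leaves in coordinate $k$ a deterministic slack $\sum_i|v_{ik}|\,(F(x_j^-)-F(x_{j-1}))\le\frac1M\sum_i|v_{ik}|$. So the additive term you can certify is $\frac1M\big(\sum_k(\sum_i|v_{ik}|)^2\big)^{1/2}$. Cauchy--Schwarz only gives $\sum_k(\sum_i|v_{ik}|)^2\le n\sum_i\|v_i\|_2^2$; it runs in the wrong direction and costs a factor $\sqrt n$. Your alternative $\frac1M\|\sum_i v_i\|_2$ has the same defect. It also accounts only for the compensator: the random jump $\sum_i\mathbb{I}(x_{j-1}<R_i\le x)\,v_i$ still has to be sandwiched, which brings the $\ell_1$-type slack back. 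Concretely, with $v_i=\mathbf{e}_1$ for all $i$ the slack is $n/M$, while the claimed term is $\sqrt n/M$. This loss is intrinsic to the sandwich step, since a cell of $F$-mass $1/M$ carries compensator mass of order $n/M$.

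The paper's proof makes exactly the same jump. After the triangle inequality it writes $\epsilon_0\sqrt{\sum_i\sum_j v_{ij}^2}$, where the computation actually yields $\epsilon_0\sqrt{\sum_j(\sum_i|v_{ij}|)^2}$. So following it does not close the gap. What the argument proves is the bound with $\frac{\sqrt n}{M}\sqrt{\sum_i\|v_i\|_2^2}$ (or the sharper $\ell_1$-type expression). That is harmless if you take $M\gtrsim\sqrt n$, at the price of $\log M$ inside $\epsilon$. It is not harmless for a fixed $M$ such as the $M=2$ the paper uses when it applies the lemma.

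There is a second, smaller gap in the union bound. Step 2 controls only configurations in which all coordinates share one point $x_j$ or $x_j^-$. Step 4 needs the mixed configurations: for the upper bracket, $x_j^-$ where $v_{ik}\ge0$ and $x_{j-1}$ where $v_{ik}<0$; for the lower bracket, the reverse. These events must be put into the union bound explicitly.

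Suppose, as in the paper, you pick the upper or lower bracket coordinatewise according to the sign of that coordinate's residual. Then the configuration depends on $x$ and the data and ranges over $2^K$ patterns per cell. You can avoid this with two fixed configurations per cell. Write $a_k$ for the $k$-th coordinate of $\sum_i\eta_i(x)v_i$, $u_k$ and $\ell_k$ for the upper and lower bracket residuals, and $d_k=\frac1M\sum_i|v_{ik}|$. Then $|a_k|\le\max(|u_k|,|\ell_k|)+d_k$, hence $\|a\|_2\le\sqrt{\|u\|_2^2+\|\ell\|_2^2}+\|d\|_2$, at the cost of a factor $\sqrt2$.

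Either way you need more than $M$ events, so the radius is $\epsilon(\delta/(cM))$ with $c>1$. Your claim that monotonicity lets you restate it as $\epsilon(\delta/M)$ is backwards: $\epsilon$ is non-increasing, so $\epsilon(\delta/(cM))\ge\epsilon(\delta/M)$. The paper's proof has the same omission: its bad events $A_k$ are taken only at common grid points, yet at the end it invokes the hypothesis at the mixed configuration.
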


\begin{proof}
Set $\epsilon_{0}=\frac{1}{M}$. Construct a finite grid of $M+1$
points $-\infty=x_{0}<x_{1}<x_{2}<\dots<x_{M}=\infty$ such that
\[
F(x_{k})-F(x_{k-1})\le\epsilon_{0}=\frac{1}{M}\quad\text{for all }k\in\{1,2,\dots,M\}.
\]
Note that such a partition can always be constructed by choosing $x_{k}=\inf\{x\in\mathbb{R}:F(x)\ge k/M\}$.
Consider the finite set of interior grid points $\{x_{1},x_{2},\dots,x_{M-1}\}$.
For a target failure probability $\delta\in(0,1)$, set $\delta'=\frac{\delta}{M-1}$.Define
the bad event at a point $x_{k}$ as 
\[
A_{k}:=\left\{ \Big\Vert\sum_{i=1}^{n}\eta_{i}(x_{k})v_{i}\Big\Vert_{2}>\epsilon(\delta')\right\} .
\]
By the given pointwise guarantee, $\mathbb{P}(A_{k})\le\delta'$.
Applying the Union Bound over all $M-1$ interior points yields
\[
\mathbb{P}\left(\bigcup_{k=1}^{M-1}A_{k}\right)\le\sum_{k=1}^{M-1}\mathbb{P}(A_{k})\le(M-1)\cdot\delta'=\delta.
\]
Thus, with probability at least $1-\delta$, the good event $\bigcap_{k=1}^{M-1}A_{k}^{c}$
holds, where
\[
\Big\Vert\sum_{i=1}^{n}\eta_{i}(x_{k})v_{i}\Big\Vert_{2}\le\epsilon(\delta')\quad\text{for all }k\in\{1,2,\dots,M-1\}.
\]
At the boundary points $x_{0}=-\infty$ and $x_{M}=\infty$, $\eta_{i}(x_{0})=0$
and $\eta_{i}(x_{M})=1$ for all $i=1,\ldots,N$. Thus, the inequality
holds trivially for $k=0$ and $k=M$. Suppose the event $E$ holds.
Take any arbitrary point $x\in\mathbb{R}$. There exists an index
$k\in\{1,2,\dots,M\}$ such that $x\in[x_{k-1},x_{k}]$. Let $S_{j}:=\sum_{i=1}^{n}v_{ij}$.
Writing $v_{i}:=(v_{i1},\ldots,v_{iK})^{\top}$,
\[
\Big\Vert\sum_{i=1}^{n}\eta_{i}(x)v_{i}\Big\Vert_{2}=\Big\Vert\Big(\sum_{i=1}^{n}\II(R_{i}\le x)v_{i1}-S_{1}F(x),\ldots,\sum_{i=1}^{n}\II(R_{i}\le x)v_{iK}-S_{K}F(x)\Big)^{\top}\Big\Vert_{2}
\]
We decompose the error based on the sign of the vector entries. For
$j=1,\ldots,K$, define $\Ical_{j}^{+}:=\{i\in[n]:v_{ij}\ge0\}$ and
$\Ical_{j}^{-}:=\{i\in[n]:v_{ij}<0\}$. For $j\in\Scal^{+}:=\{j=1,\ldots,K:\sum_{i=1}^{n}\II(R_{i}\le x)v_{ij}\ge S_{j}F(x)\}$
\begin{align*}
0\le & \sum_{i=1}^{n}\II(R_{i}\le x)v_{ij}-S_{j}F(x)\\
= & \sum_{i\in\Ical_{j}^{-}}\II(R_{i}\le x)v_{ij}+\sum_{i\in\Ical_{j}^{+}}\II(R_{i}\le x)v_{ij}-S_{j}F(x)\\
\le & \sum_{i\in\Ical_{j}^{-}}\II(R_{i}\le x_{k-1})v_{ij}+\sum_{i\in\Ical_{j}^{+}}\II(R_{i}\le x_{k})v_{ij}-S_{j}F(x)\\
= & \sum_{i\in\Ical_{j}^{-}}\big(\eta_{i}(x_{k-1})+F(x_{k-1})-F(x)\big)v_{ij}+\sum_{i\in\Ical_{j}^{+}}\big(\eta_{i}(x_{k})+F(x_{k})-F(x)\big)v_{ij}\\
\le & \sum_{i\in\Ical_{j}^{-}}\eta_{i}(x_{k-1})v_{ij}+\sum_{i\in\Ical_{j}^{+}}\eta_{i}(x_{k})v_{ij}+\sum_{i=1}^{n}|v_{ij}|\epsilon_{0}.
\end{align*}
On the other hand, For $j\in\Scal^{-}:=\{j=1,\ldots,K:\sum_{i=1}^{n}\II(R_{i}\le x)v_{ij}<S_{j}F(x)\}$,
it is easy to show that
\begin{align*}
0\le & S_{j}F(x)-\sum_{i=1}^{n}\II(R_{i}\le x)v_{ij}\\
\le & -\sum_{i\in\Ical_{j}^{-}}\eta_{i}(x_{k})v_{ij}-\sum_{i\in\Ical_{j}^{+}}\eta_{i}(x_{k-1})v_{ij}+\sum_{i=1}^{n}|v_{ij}|\epsilon_{0}.
\end{align*}
Thus,
\begin{align*}
\Big\Vert\sum_{i=1}^{n}\eta_{i}(x)v_{i}\Big\Vert_{2}\le\Big( & \sum_{j\in\Scal^{-}}\Big(-\sum_{i\in\Ical_{j}^{-}}\eta_{i}(x_{k})v_{ij}-\sum_{i\in\Ical_{j}^{+}}\eta_{i}(x_{k-1})v_{ij}+\sum_{i=1}^{n}|v_{ij}|\epsilon_{0}\Big)^{2}\\
 & +\sum_{j\in\Scal^{+}}\Big(\sum_{i\in\Ical_{j}^{-}}\eta_{i}(x_{k-1})v_{ij}+\sum_{i\in\Ical_{j}^{+}}\eta_{i}(x_{k})v_{ij}+\sum_{i=1}^{n}|v_{ij}|\epsilon_{0}\Big)^{2}\Big)^{1/2}\\
\le & \sqrt{\sum_{j\in\Scal^{-}}\Big(-\sum_{i\in\Ical_{j}^{-}}\eta_{i}(x_{k})v_{ij}-\sum_{i\in\Ical_{j}^{+}}\eta_{i}(x_{k-1})v_{ij}\Big)^{2}+\sum_{j\in\Scal^{+}}\Big(\sum_{i\in\Ical_{j}^{-}}\eta_{i}(x_{k-1})v_{ij}+\sum_{i\in\Ical_{j}^{+}}\eta_{i}(x_{k})v_{ij}\Big)^{2}}\\
 & +\epsilon_{0}\sqrt{\sum_{i=1}^{n}\sum_{j=1}^{K}v_{ij}^{2}}
\end{align*}
Note that the first term is the $\ell_{2}$-norm of the sum of the
vectors $v_{1},\ldots,v_{n}$ weighted by $\eta_{i}(x_{k})$ or $\eta_{i}(x_{k-1})$.
By assumption, with probability at least $1-\delta^{\prime}$
\[
\Big\Vert\sum_{i=1}^{n}\eta_{i}(x)v_{i}\Big\Vert_{2}\le\epsilon(\delta')+\epsilon_{0}\sqrt{\sum_{i=1}^{n}\|v_{i}\|_{2}^{2}}.
\]
Thus, 
\[
\sup_{x\in\RR}\Big\Vert\sum_{i=1}^{n}\eta_{i}(x)v_{i}\Big\Vert_{2}\le\epsilon(\delta')+\epsilon_{0}\sqrt{\sum_{i=1}^{n}\|v_{i}\|_{2}^{2}},
\]
which completes the proof. 
\end{proof}

\subsection{From Pointwise to Uniform bound for the distribution residual}

\begin{lemma}[From Pointwise to Uniform Error Bound]
\label{lem:pointwise_to_uniform_cdf}
Let $F: \mathbb{R} \to [0, 1]$ be a cumulative distribution function, and let $\widehat{F}: \mathbb{R} \to [0, 1]$ be a non-decreasing, monotonic estimator. Suppose that for any fixed $x \in \mathbb{R}$ and any failure probability $\delta' \in (0, 1)$, $\widehat{F}$ satisfies the pointwise bound
\[
\mathbb{P}\left( |\widehat{F}(x) - F(x)| \le \epsilon(\delta') \right) \ge 1 - \delta',
\]
where $\epsilon(\cdot)$ is a non-increasing function of the confidence level.
Then, for any choice of $M \in \mathbb{N}_{\ge 2}$ and any target failure probability $\delta \in (0, 1)$, setting $\delta' = \frac{\delta}{M - 1}$, the uniform bound holds:
\[
\mathbb{P}\left( \sup_{x \in \mathbb{R}} |\widehat{F}(x) - F(x)| \le \epsilon\left(\frac{\delta}{M-1}\right) + \frac{1}{M} \right) \ge 1 - \delta.
\]
\end{lemma}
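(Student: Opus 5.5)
The approach is the standard bracketing argument behind the Glivenko--Cantelli/DKW discretization: compare $\widehat{F}$ with $F$ on a finite grid of quantile points, then use monotonicity of both functions to sandwich the error at every other $x$. Fix $M\ge 2$ and let $\epsilon_0=1/M$. Define the grid $x_k:=\inf\{x\in\mathbb{R}:F(x)\ge k/M\}$ for $k=1,\ldots,M-1$, with $x_0=-\infty$ and $x_M=+\infty$. By right-continuity of $F$ we get $F(x_k)\ge k/M$, and $F(x_k^-)\le k/M$ follows from the definition of the infimum.

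Consequently, for $x\in[x_{k-1},x_k)$ we have
\[
F(x_k^-)-F(x_{k-1})\le \frac{1}{M}.
\]
This is the property actually needed; working with left limits handles atoms of $F$.

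\textbf{Step 1 (union bound on the grid).} Apply the pointwise hypothesis at each interior grid point $x_1,\ldots,x_{M-1}$ with $\delta'=\delta/(M-1)$. A union bound then gives, with probability at least $1-\delta$,
\[
|\widehat{F}(x_k)-F(x_k)|\le \epsilon(\delta') \quad\text{for all } k=1,\ldots,M-1.
\]
At $x_0$ and $x_M$ the bound is free, interpreting $\widehat{F}(\pm\infty)=F(\pm\infty)\in\{0,1\}$ since both functions map into $[0,1]$ and are monotone.

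\textbf{Step 2 (sandwich).} On this event, fix any $x$ and choose $k$ with $x_{k-1}\le x<x_k$. Monotonicity gives
\[
\widehat{F}(x)-F(x)\le \widehat{F}(x_k^-)-F(x_{k-1})\le \widehat{F}(x_k^-)-F(x_k^-)+\tfrac{1}{M},
\]
and symmetrically
\[
F(x)-\widehat{F}(x)\le F(x_k^-)-\widehat{F}(x_{k-1})\le F(x_{k-1})-\widehat{F}(x_{k-1})+\tfrac{1}{M}.
\]
Combining the two yields $\sup_x|\widehat{F}(x)-F(x)|\le\epsilon(\delta')+1/M$, as required.

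\textbf{Main obstacle.} The difficulty is the left-limit term $\widehat{F}(x_k^-)-F(x_k^-)$ when $F$ has an atom at $x_k$. The pointwise hypothesis controls $\widehat{F}$ at $x_k$, not at $x_k^-$. I would first try to also control the left limits by applying the hypothesis at points $x_k-h$ and letting $h\downarrow 0$ along a countable sequence, using continuity of probability measures: the event $\{|\widehat F(x_k-h)-F(x_k-h)|\le \epsilon(\delta')\}$ has probability at least $1-\delta'$ for every $h$, which passes to the limit. This step would need care, since the events do not obviously decrease in $h$.

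If that fails, the fallback is to place the grid points where no bracketing gap at an atom arises. The argument would instead use $\widehat{F}(x)\le \widehat{F}(x_k)$ together with $F(x_k)-F(x)\ge 0$, bounding $\widehat{F}(x)-F(x)$ through $\widehat{F}(x_k)-F(x_k)+F(x_k)-F(x_{k-1})$. This would require checking that $F(x_k)-F(x_{k-1})\le 1/M$ whenever no single atom exceeds $1/M$, and treating large atoms by adding both $x_k$ and $x_k^-$ (realized via the limit above) as grid points. I expect this atom-handling to be the only genuinely delicate part; the rest is routine.
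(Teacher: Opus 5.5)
There is a genuine gap, and it is the one you flagged. Step 1 controls $\widehat F$ only at the grid points $x_k$. The upper half of Step 2, however, needs $|\widehat F(x_k^-)-F(x_k^-)|\le\epsilon(\delta')$, so ``combining the two'' does not follow.

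Your limit idea does supply that control, and no monotonicity in $h$ is needed. Write $A_h$ for the event $\{|\widehat F(x_k-h)-F(x_k-h)|\le\epsilon(\delta')\}$. Along $h_n\downarrow 0$, both $\widehat F(x_k-h_n)\to\widehat F(x_k^-)$ and $F(x_k-h_n)\to F(x_k^-)$ surely. Hence $\{|\widehat F(x_k^-)-F(x_k^-)|\le\epsilon(\delta')\}$ contains $\limsup_n A_{h_n}$, whose probability is at least $\limsup_n\mathbb{P}(A_{h_n})\ge 1-\delta'$.

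But these are $M-1$ \emph{additional} bad events. The union bound then gives probability $1-2\delta$, not $1-\delta$ (equivalently, you need $\delta'=\delta/(2(M-1))$). Your fallback has the same cost. The check it relies on is also false: $F(x_k)-F(x_{k-1})$ can equal $1/M$ plus the atom at $x_k$, even when every atom is smaller than $1/M$.

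The loss cannot be avoided, because the statement itself fails when $F$ has atoms. Take $M=2$, $F(x)=\mathbb{I}(x\ge 0)$, and disjoint events $A,B$ each of probability $\delta<1/2$. Let $\widehat F(x)=\mathbb{I}(x\ge -1)$ on $A$, $\widehat F(x)=\mathbb{I}(x\ge 1)$ on $B$, and $\widehat F=F$ otherwise. At every fixed $x$ the error is nonzero with probability at most $\delta$. So the hypothesis holds with $\epsilon(\delta'')=0$ for $\delta''\ge\delta$ and $\epsilon(\delta'')=1$ for $\delta''<\delta$. Yet $\sup_x|\widehat F(x)-F(x)|=1>\epsilon(\delta)+1/2$ on $A\cup B$, which has probability $2\delta$. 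The same construction with $M-1$ atoms of mass $1/(M-1)$, each shifted left or right on disjoint events, breaks the statement for every $M\ge 2$.

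So what is missing is a hypothesis, not a better argument. If $F$ is continuous, then $F(x_k)=k/M$, you can bound $\widehat F(x)\le\widehat F(x_k)$ directly, and the $M-1$ grid events suffice.

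That is exactly the paper's proof. It asserts $F(x_k)-F(x_{k-1})\le 1/M$ for the quantile grid without assuming continuity, and this fails at atoms (for a point mass, $F(x_1)-F(x_0)=1$). The paper's argument is therefore valid only for continuous $F$. For general $F$, the correct conclusion is your left-limit version with failure probability $2\delta$.

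A minor point: you do not need $\widehat F(\pm\infty)\in\{0,1\}$, and monotonicity does not give it anyway. The end intervals follow from $0\le\widehat F\le 1$, $F(x_1^-)\le 1/M$ and $F(x_{M-1})\ge 1-1/M$.
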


\begin{proof}
The proof proceeds in three steps: grid construction, applying the union bound, and sandwiching arbitrary points between grid nodes.

\paragraph{Step 1: Grid Construction.}
Set $\epsilon_0 = \frac{1}{M}$. Construct a finite grid of $M + 1$ points $-\infty = x_0 < x_1 < x_2 < \dots < x_M = \infty$ such that
\[
F(x_k) - F(x_{k-1}) \le \epsilon_0 = \frac{1}{M} \quad \text{for all } k \in \{1, 2, \dots, M\}.
\]
\textit{(Note: Such a partition can always be constructed by choosing $x_k = \inf\{ x \in \mathbb{R} : F(x) \ge k/M \}$.)}

\paragraph{Step 2: Uniform Control over Grid Points via Union Bound.}
Consider the finite set of interior grid points $\{x_1, x_2, \dots, x_{M-1}\}$. For a target failure probability $\delta \in (0, 1)$, set $\delta' = \frac{\delta}{M - 1}$.
Define the bad event at a point $x_k$ as $A_k := \left\{ |\widehat{F}(x_k) - F(x_k)| > \epsilon(\delta') \right\}$. By the given pointwise guarantee, $\mathbb{P}(A_k) \le \delta'$. Applying the Union Bound over all $M-1$ interior points yields
\[
\mathbb{P}\left( \bigcup_{k=1}^{M-1} A_k \right) \le \sum_{k=1}^{M-1} \mathbb{P}(A_k) \le (M - 1) \cdot \delta' = \delta.
\]
Thus, with probability at least $1 - \delta$, the good event $E := \bigcap_{k=1}^{M-1} A_k^c$ holds, where
\[
|\widehat{F}(x_k) - F(x_k)| \le \epsilon(\delta') \quad \text{for all } k \in \{1, 2, \dots, M-1\}.
\]
\textit{(At the boundary points $x_0 = -\infty$ and $x_M = \infty$, $\widehat{F}(x_0) = F(x_0) = 0$ and $\widehat{F}(x_M) = F(x_M) = 1$, so the inequality holds trivially for $k=0$ and $k=M$.)}

\paragraph{Step 3: Interpolation / Sandwiching Argument.}
Condition on the event $E$. Take any arbitrary point $x \in \mathbb{R}$. There exists an index $k \in \{1, 2, \dots, M\}$ such that $x \in [x_{k-1}, x_k]$.
Since both $F$ and $\widehat{F}$ are non-decreasing functions, we have:
\[
\widehat{F}(x_{k-1}) \le \widehat{F}(x) \le \widehat{F}(x_k), \quad \text{and} \quad F(x_{k-1}) \le F(x) \le F(x_k).
\]
We bound the difference $\widehat{F}(x) - F(x)$ from above and below:
\begin{enumerate}
    \item \textbf{Upper Bound:}
    \[
    \begin{aligned}
    \widehat{F}(x) - F(x) &\le \widehat{F}(x_k) - F(x_{k-1}) \\
    &= \bigl(\widehat{F}(x_k) - F(x_k)\bigr) + \bigl(F(x_k) - F(x_{k-1})\bigr) \\
    &\le \epsilon(\delta') + \epsilon_0.
    \end{aligned}
    \]
    \item \textbf{Lower Bound:}
    \[
    \begin{aligned}
    \widehat{F}(x) - F(x) &\ge \widehat{F}(x_{k-1}) - F(x_k) \\
    &= \bigl(\widehat{F}(x_{k-1}) - F(x_{k-1})\bigr) - \bigl(F(x_k) - F(x_{k-1})\bigr) \\
    &\ge -\epsilon(\delta') - \epsilon_0.
    \end{aligned}
    \]
\end{enumerate}
Combining both bounds, we obtain
\[
|\widehat{F}(x) - F(x)| \le \epsilon(\delta') + \epsilon_0 = \epsilon\left(\frac{\delta}{M-1}\right) + \frac{1}{M}.
\]
Since this holds for every $x \in \mathbb{R}$, taking the supremum over $x \in \mathbb{R}$ completes the proof.
\end{proof}

\section{Technical Lemmas}

\begin{lemma}[Threshold for logarithmic inequality. Lemma C.6 in \citealp{kim2025learning}]
\label{lem:logt} For $a>1/2$ and $b>e^{2}$, $t\ge4a\left(1+\log\frac{2a\sqrt{b}}{e}\right)$
implies $t\ge a\log bt^{2}$. 
\end{lemma}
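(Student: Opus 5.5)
The statement in question is the final technical lemma (Lemma~\ref{lem:logt}): for $a>1/2$ and $b>e^{2}$, the condition $t\ge4a\left(1+\log\frac{2a\sqrt{b}}{e}\right)$ implies $t\ge a\log bt^{2}$. I would prove it by reducing to a one-variable inequality and using concavity of the logarithm.

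First I rewrite the target as $t\ge a\log b+2a\log t$. The key tool is the tangent-line bound $\log t\le \log u+\frac{t}{u}-1$, valid for any $u>0$. I will apply it with $u=4a$, which gives
\begin{equation*}
2a\log t\le 2a\log(4a)+\frac{t}{2}-2a.
\end{equation*}
The target then follows once
\begin{equation*}
\frac{t}{2}\ge a\log b+2a\log(4a)-2a,
\end{equation*}
that is, once $t\ge 2a\log b+4a\log(4a)-4a$.

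Next I compare this with the hypothesis. Expanding the hypothesis gives
\begin{equation*}
4a+4a\log(2a)+2a\log b-4a=4a\log(2a)+2a\log b.
\end{equation*}
This falls short of the requirement $4a\log(4a)-4a+2a\log b$ by exactly $4a\log 2-4a$. Since $\log 2<1$, that difference is negative. So the hypothesis bound exceeds the required threshold, and the implication holds.

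The conditions $a>1/2$ and $b>e^{2}$ only serve to keep the logarithms meaningful and the threshold positive. If the tangent point $u=4a$ turns out to leave slack in the wrong direction, I would instead choose $u=2a$ or $u=4a\log(\cdot)$ and redo the comparison.

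I expect the only real obstacle to be bookkeeping of constants: choosing the tangent point so that the $e$ in the denominator, which contributes $-4a$, absorbs the linear term. The plan above already checks that $u=4a$ suffices.
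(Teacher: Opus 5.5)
Your argument is correct. The paper itself gives no proof of this lemma; it imports it verbatim as Lemma C.6 of \citet{kim2025learning}, so there is no in-paper argument to compare against.

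Your constants check out. The tangent-line bound at $u=4a$ gives $t-a\log(bt^{2})\ge \frac{t}{2}-\bigl(a\log b+2a\log(4a)-2a\bigr)$ for every $t>0$. At the hypothesized threshold $t_{0}=4a\log(2a)+2a\log b$ the right-hand side equals $2a(1-\log 2)>0$, and since this minorant increases in $t$, every $t\ge t_{0}$ works.

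A common way to verify such thresholds is a monotonicity argument. The function $g(t)=t-a\log(bt^{2})$ satisfies $g'(t)=1-2a/t\ge 0$ for $t\ge 2a$; one checks $t_{0}>2a$ and then bounds $g(t_{0})\ge 2a(1-\log 2)$ via $\log(1+x)\le x$ with $x=\log\frac{2a\sqrt{b}}{e}>0$. That inequality is exactly your tangent line of $\log$ at $4a$ evaluated at $t_{0}$, so the two routes share the same core estimate. Yours packages it as a global linear minorant, which skips the derivative and monotonicity step. It also makes clear that $a>1/2$ and $b>e^{2}$ are needed only to force $t_{0}>4a>0$, so that $\log t$ is defined. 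State that positivity step explicitly rather than in passing.

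Two cosmetic fixes. The sentence saying the hypothesis ``falls short of the requirement by $4a\log 2-4a$'' reads backwards; say instead that the required threshold is smaller than the hypothesized one by $4a(1-\log 2)>0$. Also drop the contingency about switching to $u=2a$, since your check already closes the argument.
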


\begin{lemma}[Bretagnolle--Huber Inequality] \label{lem:bretagnolle_huber}
Let $\PP_{1}$ and $\PP_{2}$ be two probability
measures on the same measurable space. Then their total variation
distance satisfies: 
\[
1-\text{TV}(\PP_{1},\PP_{2})\ge\frac{1}{2}\exp\left(-\text{KL}(\PP_{1},\PP_{2})\right),
\]
where $\text{TV}(\PP_{1},\PP_{2}):=\sup_{A}|\PP_{1}(A)-\PP_{2}(A)|$
and $\text{KL}(\PP_{1},\PP_{2})$ denotes the Kullback--Leibler
divergence. 
\end{lemma}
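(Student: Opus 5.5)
We will use the standard Bretagnolle--Huber route: first write $1-\text{TV}$ as an overlap integral, then lower bound that overlap through the Bhattacharyya coefficient, and finally control the Bhattacharyya coefficient by the KL divergence using Jensen's inequality.

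\textbf{Reduction to densities.} We may assume $\text{KL}(\PP_1,\PP_2)<\infty$, since otherwise the right-hand side is $0$ and the claim is trivial. In particular $\PP_1\ll\PP_2$. Let $\mu:=\PP_1+\PP_2$, which dominates both measures, and let $p:=d\PP_1/d\mu$ and $q:=d\PP_2/d\mu$. The supremum defining $\text{TV}$ is attained at $A=\{p>q\}$, which gives $\text{TV}(\PP_1,\PP_2)=\int(p-q)_+\,d\mu$. Therefore
\[
1-\text{TV}(\PP_1,\PP_2)=\int\min(p,q)\,d\mu .
\]
Since $\min(p,q)+\max(p,q)=p+q$, we also get $\int\max(p,q)\,d\mu=2-\int\min(p,q)\,d\mu\le 2$.

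\textbf{Bhattacharyya step.} Write $\sqrt{pq}=\sqrt{\min(p,q)}\sqrt{\max(p,q)}$ and apply Cauchy--Schwarz:
\[
\Big(\int\sqrt{pq}\,d\mu\Big)^2\le\int\min(p,q)\,d\mu\cdot\int\max(p,q)\,d\mu\le 2\int\min(p,q)\,d\mu .
\]
Hence $1-\text{TV}\ge\frac12\big(\int\sqrt{pq}\,d\mu\big)^2$, and it remains to show $\int\sqrt{pq}\,d\mu\ge\exp(-\tfrac12\text{KL}(\PP_1,\PP_2))$.

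\textbf{Jensen step.} Restrict the integral to $\{p>0\}$ and rewrite it as an expectation under $\PP_1$:
\[
\int\sqrt{pq}\,d\mu\ge\int_{p>0}p\sqrt{q/p}\,d\mu=\EE_{\PP_1}\big[\exp\big(-\tfrac12\log(p/q)\big)\big].
\]
On $\{p>0,q=0\}$ the integrand $\exp(-\tfrac12\log(p/q))$ is $0$. This set has $\PP_1$-measure zero because $\PP_1\ll\PP_2$, so it causes no problem. By convexity of $\exp$ and Jensen's inequality, the expectation is at least $\exp(-\tfrac12\EE_{\PP_1}[\log(p/q)])=\exp(-\tfrac12\text{KL}(\PP_1,\PP_2))$. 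Squaring this bound and combining it with the Bhattacharyya step yields the claim.

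\textbf{Main obstacle.} The only delicate point is the measure-theoretic bookkeeping: the integrability of $\log(p/q)$ under $\PP_1$ and the null sets where $q=0$. Both are handled by the finiteness of $\text{KL}$ assumed at the start, which makes Jensen's inequality applicable. Everything else is elementary.
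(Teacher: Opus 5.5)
The paper does not prove this lemma. It is quoted without proof in the technical-lemmas section as the classical Bretagnolle--Huber inequality, so there is no in-paper argument to compare against.

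Your argument is the standard textbook proof and it is correct:
\begin{itemize}
\item first, $1-\text{TV}=\int\min(p,q)\,d\mu$;
\item second, Cauchy--Schwarz against $\int\max(p,q)\,d\mu\le 2$ reduces to the Bhattacharyya coefficient;
\item third, Jensen under $\PP_1$ bounds that coefficient by $\exp(-\tfrac12\text{KL})$.
\end{itemize}

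The step you flag as delicate takes one line. Since $(\log(q/p))_+\le q/p$ and $\EE_{\PP_1}[q/p]=\int_{\{p>0\}}q\,d\mu\le 1$, the negative part of $\log(p/q)$ is always $\PP_1$-integrable. So $\text{KL}<\infty$ gives $\log(p/q)\in L^1(\PP_1)$, and Jensen applies.

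A small optional remark: if you keep the exact identity $\int\max(p,q)\,d\mu=2-\int\min(p,q)\,d\mu$ instead of bounding it by $2$, the same argument gives the sharper $1-\text{TV}\ge 1-\sqrt{1-e^{-\text{KL}}}$. The stated form is all the paper's lower-bound argument needs, and your proof covers it in full generality, including the product measures used there.
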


\begin{lemma}[Chernoff bound for the sum of nonnegative random variables.]
\label{lem:Bern_bound} 
For any $\epsilon\in(0,1)$,
with probability at least $1-\delta$, the action $a_s$ satisfies,
\begin{equation*}
W_{a,t} \ge\epsilon t,\label{eq:psi_size}
\end{equation*}
for all $t\ge\frac{2K^2}{\left(1-\epsilon\right)^{2}}\log\frac{1}{\delta}$. 
\end{lemma}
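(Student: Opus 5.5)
The plan is to read $W_{a,t}$ as a running sum $W_{a,t}=\sum_{s=1}^{t}Z_{a,s}$ of nonnegative increments adapted to the filtration $\{\Hcal_s^{-}\}$ used in Lemma~\ref{lem:importance_weight}. I will assume two properties of the increments. First, each $Z_{a,s}$ is an importance-weighted indicator of the type $\II(\tilde{a}_s=a)/(K\,\PP(\tilde{a}_s=a\mid\cdot))$, or a comparable count, normalized so that its conditional mean satisfies $\EE[Z_{a,s}\mid\Hcal_{s-1}^{-}]\ge 1$. Second, it is bounded, $Z_{a,s}\in[0,K]$, which follows because the inverse selection probability is at most of order $K$. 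The statement is then a one-sided lower-tail concentration inequality for a bounded martingale, and I would prove it with the Azuma--Hoeffding inequality rather than a multiplicative Chernoff bound. The constant $2K^2$ in the threshold is exactly what a Hoeffding bound with increment range of width $2K$ produces.

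\textbf{Step 1.} Define the martingale differences $\xi_s:=Z_{a,s}-\EE[Z_{a,s}\mid\Hcal_{s-1}^{-}]$. Since $Z_{a,s}\in[0,K]$ and the conditional mean also lies in $[0,K]$, we have $|\xi_s|\le K$, so each $\xi_s$ lies in an interval of length at most $2K$. The conditional-mean assumption gives
\[
W_{a,t}\ge t+\sum_{s=1}^{t}\xi_s .
\]
Hence the event $\{W_{a,t}<\epsilon t\}$ is contained in $\{\sum_{s=1}^{t}\xi_s<-(1-\epsilon)t\}$.

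\textbf{Step 2.} Apply Azuma--Hoeffding to the lower tail:
\[
\PP\Big(\sum_{s=1}^{t}\xi_s\le-\lambda\Big)\le\exp\Big(-\frac{2\lambda^2}{t(2K)^2}\Big)=\exp\Big(-\frac{\lambda^2}{2K^2t}\Big).
\]
One could alternatively use the scalar case ($d=1$) of Lemma~\ref{lem:matrix_hoeffding} with the crude variance bound $\EE[\xi_s^2\mid\cdot]\le K^2$ and optimize $v$; this gives the same rate up to constants. I will use Azuma--Hoeffding because it reproduces the stated constant exactly.

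\textbf{Step 3.} Substitute $\lambda=(1-\epsilon)t$. The failure probability becomes $\exp(-(1-\epsilon)^2t/(2K^2))$, which is at most $\delta$ precisely when $t\ge\frac{2K^2}{(1-\epsilon)^2}\log\frac{1}{\delta}$. This is the stated condition, and the conclusion $W_{a,t}\ge\epsilon t$ follows.

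The main obstacle is Step 1. The lemma does not define $W_{a,t}$ explicitly, so I must fix which weighted count is meant and verify two things for it: the conditional-mean lower bound $\EE[Z_{a,s}\mid\Hcal_{s-1}^{-}]\ge1$, and the almost-sure range bound $Z_{a,s}\le K$. For the pseudo-action weights of Section~\ref{subsec:dr_est}, the conditional probability that $\tilde{a}_s=a$, given the stopped $\sigma$-algebra $\Fcal_{\tau_s}^{-}$, is at least $1/(2K-2)\ge 1/(2K)$. This keeps the increments bounded by a multiple of $K$, and the optional-stopping argument in Lemma~\ref{lem:importance_weight} supplies the conditional-mean identity. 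If the normalization yields range $[0,2K]$ instead of $[0,K]$, only the absolute constant in the threshold changes. The tail argument in Steps 2 and 3 is otherwise routine.
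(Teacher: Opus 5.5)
The paper states this lemma with no proof at all. It never defines $W_{a,t}$ and never invokes the lemma, so there is no argument of the paper's to compare against. Judged on its own, your Steps~2--3 are a correct Azuma--Hoeffding computation \emph{conditional} on your two hypotheses ($Z_{a,s}\in[0,K]$ and $\mathbb{E}[Z_{a,s}\mid\mathcal{H}^{-}_{s-1}]\ge 1$). Under those hypotheses they do reproduce the threshold $\frac{2K^2}{(1-\epsilon)^2}\log\frac{1}{\delta}$.

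\textbf{Main gap: Step~1.} Your hypotheses are never established, and the instantiation you sketch violates them in two ways.

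First, the increment $\mathbb{I}(\tilde a_s=a)/(K\,\mathbb{P}(\tilde a_s=a\mid\cdot))$ has conditional mean $1/K$, not $\ge 1$. With it, $W_{a,t}\approx t/K$, and the conclusion fails for every $\epsilon>1/K$.

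Second, and more seriously, the pseudo-action of Section~\ref{subsec:dr_est} is $\tilde a_s=\tilde a_s(\rho_s)$, where $\rho_s$ is the \emph{first} index with $\tilde a_s(n)=a_s$. So $\tilde a_s=a_s$ surely, and $\mathbb{I}(\tilde a_s=a)=\mathbb{I}(a_s=a)$ vanishes identically for $a\neq a_s$. The stopped $\sigma$-algebra $\mathcal{F}^{-}_{\tau_s}$ contains the events $\{\rho_s=n\}$ by construction. Given it, the conditional probability of $\{\tilde a_s=a\}$ is $\mathbb{I}(a=a_s)$, not $\ge 1/(2K-2)$. Hence the optional-stopping identity you plan to import from Lemma~\ref{lem:importance_weight} cannot deliver $\mathbb{E}[Z_{a,s}\mid\cdot]\ge 1$ for this choice.

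To make the proof sound, fix an explicit $W_{a,t}=\sum_{s\le t}Z_{a,s}$ whose increments genuinely have both properties. Two examples work:
\begin{itemize}
\item $Z_{a,s}=\mathbb{I}(\tilde a_s(1)=a)/\mathbb{P}(\tilde a_s(1)=a\mid a_s)$, built from the \emph{unstopped} first draw. It has conditional mean exactly $1$ and values in $[0,2K-2]$.
\item $Z_{a,s}=K\,\mathbb{I}(U_s=a)$ with $U_s$ uniform on $[K]$.
\end{itemize}
For the first example, apply Hoeffding's lemma with the actual range width ($\le 2K$), not with $2\max_s|\xi_s|$. The latter would inflate $2K^2$ to roughly $8K^2$.

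\textbf{Smaller gap: the quantifier over $t$.} The statement asks for a single event of probability at least $1-\delta$ on which $W_{a,t}\ge\epsilon t$ holds for \emph{all} $t\ge t_0:=\frac{2K^2}{(1-\epsilon)^2}\log\frac{1}{\delta}$. Step~3 only gives $\mathbb{P}(W_{a,t}<\epsilon t)\le\delta$ for each fixed $t$, and a union bound over $t\ge t_0$ loses a factor of order $K^2/(1-\epsilon)^2$.

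The repair costs nothing, because the Chernoff-optimal parameter $\lambda=(1-\epsilon)/K^2$ does not depend on $t$. By Hoeffding's lemma, $M_t:=\exp\bigl(-\lambda\sum_{s\le t}\xi_s-\lambda^2K^2t/2\bigr)$ is a nonnegative supermartingale with $M_0=1$. On $\{\sum_{s\le t}\xi_s<-(1-\epsilon)t\}$ one has $M_t>\exp\bigl((1-\epsilon)^2t/(2K^2)\bigr)\ge 1/\delta$ whenever $t\ge t_0$. Ville's maximal inequality, $\mathbb{P}(\sup_t M_t\ge 1/\delta)\le\delta$, then gives the uniform-in-$t$ statement with exactly the stated threshold.
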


\begin{theorem}[Self-Normalized Bound for Vector-Valued Martingales \cite{abbasi2011improved}]
\label{thm:self_normalized_bound}
Let $\{\mathcal{F}_t\}_{t=0}^\infty$ be a filtration. Let $\{\eta_t\}_{t=1}^\infty$ be a real-valued stochastic process such that $\eta_t$ is $\mathcal{F}_t$-measurable and $\eta_t$ is conditionally $R$-sub-Gaussian for some $R \ge 0$, i.e.,
\[
\forall \lambda \in \mathbb{R}, \quad \mathbf{E}\left[ e^{\lambda \eta_t} \mid \mathcal{F}_{t-1} \right] \le \exp\left( \frac{\lambda^2 R^2}{2} \right).
\]
Let $\{X_t\}_{t=1}^\infty$ be an $\mathbb{R}^d$-valued stochastic process such that $X_t$ is $\mathcal{F}_{t-1}$-measurable. Assume that $V$ is a $d \times d$ positive definite matrix. For any $t \ge 0$, define
\[
\overline{V}_t = V + \sum_{s=1}^t X_s X_s^\top, \qquad S_t = \sum_{s=1}^t \eta_s X_s.
\]
Then, for any $\delta > 0$, with probability at least $1 - \delta$, for all $t \ge 0$,
\[
\|S_t\|_{\overline{V}_t^{-1}}^2 \le 2R^2 \log \left( \frac{\det(\overline{V}_t)^{1/2} \det(V)^{-1/2}}{\delta} \right).
\]
\end{theorem}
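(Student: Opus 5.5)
The plan is the method of mixtures (Laplace's method), following \citet{abbasi2011improved}. Normalize so that $R=1$ by replacing $\eta_s$ with $\eta_s/R$; the general case then follows by rescaling. For a fixed $\lambda\in\RR^d$, define
\[
M_t^{\lambda}:=\exp\Big(\lambda^{\top}S_t-\tfrac12\lambda^{\top}\big(\textstyle\sum_{s=1}^t X_sX_s^{\top}\big)\lambda\Big).
\]
The first step is to show that $M_t^{\lambda}$ is a nonnegative supermartingale with $\EE[M_t^{\lambda}]\le1$. Since $X_t$ is $\Fcal_{t-1}$-measurable, apply the sub-Gaussian condition with the scalar $\lambda^{\top}X_t$ to get $\EE[\exp(\lambda^{\top}X_t\eta_t-\frac12(\lambda^{\top}X_t)^2)\mid\Fcal_{t-1}]\le1$. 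The tower property then gives $\EE[M_t^{\lambda}\mid\Fcal_{t-1}]\le M_{t-1}^{\lambda}$, with $M_0^{\lambda}=1$.

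The second step is to mix over $\lambda$. Let $\Lambda\sim\mathcal{N}(0,V^{-1})$ be independent of everything else, and set $M_t:=\EE[M_t^{\Lambda}\mid\Fcal_\infty]=\int M_t^{\lambda}\,\phi_{V^{-1}}(\lambda)\,d\lambda$. By Fubini, $M_t$ is again a nonnegative supermartingale with $\EE[M_t]\le1$. The Gaussian integral can be evaluated by completing the square:
\[
\lambda^{\top}S_t-\tfrac12\lambda^{\top}(\overline V_t-V)\lambda-\tfrac12\lambda^{\top}V\lambda
=\tfrac12\|S_t\|_{\overline V_t^{-1}}^2-\tfrac12\|\lambda-\overline V_t^{-1}S_t\|_{\overline V_t}^2 .
\]
Integrating the Gaussian density produces the ratio of normalizing constants, so
\[
M_t=\Big(\frac{\det V}{\det\overline V_t}\Big)^{1/2}\exp\Big(\tfrac12\|S_t\|_{\overline V_t^{-1}}^2\Big).
\]

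The third step, which I expect to be the main technical point, is to make the bound hold simultaneously for all $t$. For a fixed $t$, Markov's inequality $\PP(M_t\ge1/\delta)\le\delta$ already yields the claim after taking logarithms. For uniformity, I would use the stopping-time argument. Let $\tau:=\inf\{t\ge0:M_t\ge1/\delta\}$. By the supermartingale convergence theorem, $M_\infty=\lim_t M_t$ exists almost surely, so $M_\tau$ is well defined even on $\{\tau=\infty\}$. Fatou's lemma applied to the stopped supermartingale $M_{\min(\tau,t)}$ gives $\EE[M_\tau]\le\liminf_t\EE[M_{\min(\tau,t)}]\le1$. Markov's inequality then gives $\PP(\tau<\infty)\le\PP(M_\tau\ge1/\delta)\le\delta$; equivalently, this is Ville's maximal inequality.

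On the complementary event, $M_t<1/\delta$ for all $t$. Taking logarithms in the closed form for $M_t$ and restoring the factor $R^2$ gives $\|S_t\|^2_{\overline V_t^{-1}}\le2R^2\log\big(\det(\overline V_t)^{1/2}\det(V)^{-1/2}/\delta\big)$ for all $t\ge0$ simultaneously. The only delicate measure-theoretic points are the Fubini interchange, which is justified because $M_t^{\lambda}\ge0$, and the limit $M_\infty$ on $\{\tau=\infty\}$; both are handled by the arguments above.
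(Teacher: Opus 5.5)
Your proposal is correct and is essentially the method-of-mixtures argument of \citet{abbasi2011improved}, which the paper cites without reproving. The one cosmetic difference is that you apply Ville's maximal inequality directly to the Gaussian-mixture supermartingale $M_t$. The original instead first shows $\EE[M^\lambda_\tau]\le 1$ for every fixed $\lambda$ and every (possibly infinite) stopping time, integrates over $\Lambda$, and then takes $\tau$ to be the first time the bound fails. Your rescaling divides by $R$, so it skips the case $R=0$, but that case is trivial: $\eta_t=0$ almost surely and both sides vanish.
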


\begin{lemma}[A dimension-free bound for vector-valued martingales. Lemma C.2 in
\citet{kim2025learning}.]
\label{lem:dimension_free_martingale}
Let $\{\Fcal_s\}_{s=0}^{t}$ be a
filtration and $\{\eta_{s}\}_{s=1}^{t}$ be a real-valued stochastic
process such that $\eta_{s}$ is $\Fcal_{\tau}$-measurable.
Let $\left\{ X_{s}\right\} _{s=1}^{t}$ be an $\RR^{d}$-valued
stochastic process where $X_{s}$ is $\Fcal_0$-measurable.
Assume that $\{\eta_{s}\}_{s=1}^{t}$ are $\sigma$-sub-Gaussian given
$\{\Fcal_s\}_{s=1}^{t}$. Then with probability at least $1-\delta$,
\begin{equation*}
\Big\Vert\sum_{s=1}^{t}\eta_{s}X_{s}\Big\Vert_{2} \le 4 \sigma\sqrt{\sum_{s=1}^{t}\Big\Vert X_{s}\Big\Vert_2^{2} }\sqrt{2\log\frac{4t^{2}}{\delta}}.
\label{eq:etaX_bound}
\end{equation*}
\end{lemma}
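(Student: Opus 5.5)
The plan is to condition on $\Fcal_0$ throughout. Then the vectors $X_1,\ldots,X_t$ are fixed, and $S_s:=\sum_{u\le s}\eta_uX_u$ is a martingale in the Hilbert space $\RR^d$ whose increments $D_s:=\eta_sX_s$ satisfy $\EE[D_s\mid\Fcal_{s-1}]=0$ and $\|D_s\|_2=|\eta_s|\,\|X_s\|_2$. A dimension-free bound rules out a union bound over coordinates or an $\epsilon$-net on the sphere, since either would introduce $d$. I would instead use Pinelis' argument for martingales in $2$-smooth spaces: control $\EE\cosh(\lambda\|S_t\|_2)$ directly. Because $X_s$ is $\Fcal_0$-measurable, conditioning on $\Fcal_0$ costs nothing, and the final bound does not depend on $\Fcal_0$.

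\textbf{Step 1 (one-step inequality).} For a fixed $x\in\RR^d$ and a conditionally mean-zero random vector $D$ in a Hilbert space, Pinelis' lemma states
\[
\EE\big[\cosh(\lambda\|x+D\|_2)\big]\le\cosh(\lambda\|x\|_2)\,\EE\big[e^{\lambda\|D\|_2}-\lambda\|D\|_2\big].
\]
It follows from a second-order Taylor expansion of $\cosh(\lambda\|\cdot\|)$, using that the Hessian of $\|\cdot\|_2$ is bounded in the Hilbert norm and that the first-order term vanishes because $\EE D=0$. I would apply it conditionally on $\Fcal_{s-1}$ with $x=S_{s-1}$ and $D=D_s$.

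\textbf{Step 2 (sub-Gaussian moment bound).} With $c_s:=\|X_s\|_2$, I would show that
\[
\EE\big[e^{\lambda c_s|\eta_s|}-\lambda c_s|\eta_s|\,\big|\,\Fcal_{s-1}\big]\le\exp(C\lambda^2\sigma^2c_s^2)
\]
for an absolute constant $C$, for example $C=2$. The route is to expand $e^{y}-y=1+\sum_{k\ge2}y^k/k!$, bound $\EE|\eta_s|^k$ by the sub-Gaussian moment bound $\EE|\eta|^k\le k\,2^{k/2}\sigma^k\Gamma(k/2)$, and resum. Iterating Steps 1 and 2 by the tower property then gives
\[
\EE\cosh(\lambda\|S_t\|_2)\le\exp\Big(C\lambda^2\sigma^2\sum_{s=1}^t\|X_s\|_2^2\Big).
\]

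\textbf{Step 3 (tail bound).} Markov's inequality together with $\cosh y\ge e^{y}/2$ gives
\[
\PP(\|S_t\|_2>u)\le2\exp\Big(-\lambda u+C\lambda^2\sigma^2\sum_s\|X_s\|_2^2\Big).
\]
Optimizing $\lambda$ yields $\|S_t\|_2\le 2\sigma\sqrt{C\sum_s\|X_s\|_2^2\log(2/\delta)}$ with probability at least $1-\delta$. This is dominated by $4\sigma\sqrt{\sum_s\|X_s\|_2^2}\sqrt{2\log(4t^2/\delta)}$ as soon as $C\le 8$, so the $t^2$ inside the logarithm is slack. That slack would also absorb a peeling or union step if one wanted the bound uniformly in $t$.

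The main obstacle I expect is Step 2. Pinelis' lemma is usually stated for bounded increments, and passing to sub-Gaussian $\eta_s$ requires a clean constant in the moment series for $e^{y}-y$. If that series is awkward to resum, a fallback is to truncate $\eta_s$ at level $\sigma\sqrt{2\log(4t^2/\delta)}$, which holds for all $s\le t$ simultaneously with probability at least $1-\delta/2$. On that event the increments are bounded, and the bounded-increment version of Pinelis' inequality applies directly with a small correction for the truncation bias. This truncation route also explains the $\log(4t^2/\delta)$ appearing in the statement.
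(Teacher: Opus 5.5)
The paper contains no proof of this lemma. It imports the result verbatim from Kim et al.\ (2025, Lemma C.2), so there is no in-paper argument to match, and your Pinelis-type argument serves as a self-contained proof. It is correct under the reading you adopt: $\mathcal{F}_\tau$ is a typo for $\mathcal{F}_s$, and sub-Gaussianity is conditional on $\mathcal{F}_{s-1}$, hence $\mathbb{E}[\eta_s\mid\mathcal{F}_{s-1}]=0$. Conditioning on $\mathcal{F}_0$ is precisely what lets $\lambda$ depend on $V:=\sum_s\|X_s\|_2^2$. Compared with the citation, your route gives the strictly stronger bound $\|S_t\|_2\le 2\sigma\sqrt{CV\log(2/\delta)}$, which shows that the $t^2$ inside the logarithm is pure slack.

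Three small repairs are needed.

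\textbf{Step 1.} The Hessian of $\|\cdot\|_2$ is not bounded; it blows up at $0$. What you need is $\nabla^2\cosh(\lambda\|y\|_2)\preceq\lambda^2\cosh(\lambda\|y\|_2)I$, which follows from $\tanh u\le u$. Combine it with $\cosh(\lambda\|x+\theta D\|_2)\le\cosh(\lambda\|x\|_2)e^{\lambda\theta\|D\|_2}$ to integrate the Taylor remainder.

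\textbf{Step 2.} The obstacle you anticipate disappears. For $y\ge0$ we have $e^{y}-y\le e^{y}+e^{-y}-1$, which is just $e^{-y}\ge 1-y$. Therefore
\[
\mathbb{E}\big[e^{\lambda c_s|\eta_s|}-\lambda c_s|\eta_s|\,\big|\,\mathcal{F}_{s-1}\big]\le 2e^{\lambda^2\sigma^2c_s^2/2}-1\le e^{\lambda^2\sigma^2c_s^2},
\]
where the last step is $(e^{x}-1)^2\ge 0$. Hence $C=1$.

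Your moment resummation, by contrast, cannot certify $C=2$. If you use $\mathbb{E}|\eta|^k\le k2^{k/2}\sigma^k\Gamma(k/2)$ for every $k\ge 2$, the $k=2$ term already equals $2\lambda^2\sigma^2c_s^2$. The positive cubic term is then not absorbed for small $\lambda$. Pairing odd terms with even terms does give $C=4$, which is still within your requirement $C\le 8$.

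\textbf{The truncation fallback.} It is unnecessary, and it would not give the statement anyway. Truncation destroys conditional centring. Moreover, bounded-increment Azuma at level $\sigma\sqrt{2\log(4t^2/\delta)}$ produces a bound of order $\sigma\sqrt{V\log(4t^2/\delta)\log(4/\delta)}$. That is a product of logarithms, and it exceeds the stated bound once $\log(4/\delta)>8$. So it does not explain the $t^2$ either.
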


\section{Limitations}

Several limitations present promising avenues for future research. 
\begin{enumerate}
    \item Our current theoretical analysis requires a large constant to guarantee simultaneous convergence over all arms.
    Reducing the constant by optimizing the resampling probability $p$ and the exploration rounds via $c$ remains an open problem for scaling the algorithm to environments for practical usage.
    \item While the proposed doubly robust estimator demonstrates fast empirical convergence on the distribution functions, further theoretical analysis is required to validate whether the simultaneous convergence is possible for the dominance score.
    This may require a more refined finite-sample behavior under more general random variables other than the distribution functions.
    \item While our estimator avoids the curse of dimensionality, the computational complexity is $O(Kt)$ in that it requires to compute $\{\widehat{F}_{k,t}(R_{a_s,s})\}_{s=1}^{t}$.
    Alleviating this computational complexity is a significant open problem and we leave it as a future work.
\end{enumerate}

\end{document}